\newtheorem{theorem}{Theorem}[section]
\newtheorem{corollary}[theorem]{Corollary}
\newtheorem{lemma}[theorem]{Lemma}
\newtheorem{proposition}[theorem]{Proposition}
\newtheorem{definition}{Definition}[section]
\theoremstyle{plain}
\newenvironment{theoremcopy}[1]{\innercustomthm}{\endinnercustomthm}
\newenvironment{propositioncopy}[1]{\innercustomprop}{\endinnercustomprop}
\newcommand\red[1]{\textcolor{red}{#1}}  
\newcommand\blue[1]{\textcolor{blue}{#1}}  
\newcommand\gray[1]{\textcolor{gray}{#1}}  
\newcommand{\stdfont}{}
\newcommand\tinysmall{\@setfontsize\tinysmall{8}{10}}
\DeclareMathOperator{\diag}{diag}
\newcommand{\ip}[2]{\left\langle#1,#2\right\rangle}
\newcommand{\abs}[1]{\left|#1\right|}
\newcommand{\norm}[1]{\left\|#1\right\|}
\newcommand\stepl{{\left(\ell\right)}}
\newcommand\steplplus{{\left(\ell + 1\right)}}
\newcommand{\cutnorm}[1]{\left\|#1\right\|_{\square}}
\newcommand{\Fnorm}[1]{\left\|#1\right\|_{\mathrm F}}
\newcommand{\sigmoid}{\mathrm{Sigmoid}}
\newcommand{\icg}{\textsc{ICG}\xspace}
\newcommand{\icgs}{\textsc{ICG}\text{s}\xspace}
\newcommand{\icgnn}{\textsc{ICG-NN}\xspace}
\newcommand{\icgnns}{\textsc{ICG-NN}\text{s}\xspace}
\newcommand{\icgnnu}{\textsc{ICG}$_{\mathrm u}$\textsc{-NN}\xspace}
\newcommand{\ER}{\ensuremath{\mathrm{ER}}}
\def\eqref#1{equation~\ref{#1}}
\def\1{\bm{1}}
\def\vb{{\bm{b}}}
\def\vf{{\bm{f}}}
\def\vn{{\bm{n}}}
\def\vq{{\bm{q}}}
\def\vr{{\bm{r}}}
\def\vs{{\bm{s}}}
\def\vv{{\bm{v}}}
\def\vx{{\bm{x}}}
\def\vz{{\bm{z}}}
\def\mA{{\bm{A}}}
\def\mB{{\bm{B}}}
\def\mC{{\bm{C}}}
\def\mD{{\bm{D}}}
\def\mF{{\bm{F}}}
\def\mH{{\bm{H}}}
\def\mM{{\bm{M}}}
\def\mP{{\bm{P}}}
\def\mQ{{\bm{Q}}}
\def\mR{{\bm{R}}}
\def\mS{{\bm{S}}}
\def\mT{{\bm{T}}}
\def\mW{{\bm{W}}}
\def\mX{{\bm{X}}}
\def\mZ{{\bm{Z}}}
\DeclareMathAlphabet{\mathsfit}{\encodingdefault}{\sfdefault}{m}{sl}
\SetMathAlphabet{\mathsfit}{bold}{\encodingdefault}{\sfdefault}{bx}{n}
\def\gN{{\mathcal{N}}}
\def\gO{{\mathcal{O}}}
\def\gW{{\mathcal{W}}}
\def\sN{{\mathbb{N}}}
\def\sQ{{\mathcal{Q}}}
\def\sR{{\mathbb{R}}}
\DeclareMathOperator{\Tr}{Tr}
\title{\huge \normalfont\textbf{Learning on Large Graphs \\ using Intersecting Communities}}
\author[1]{Ben Finkelshtein}
\author[1]{{\.I}smail {\.I}lkan Ceylan}
\author[1]{Michael Bronstein}
\author[2]{Ron Levie}
\affil[1]{University of Oxford}
\affil[2]{Technion - Israel Institute of Technology}
\date{\vspace{-30pt}}
\pgfplotsset{compat=1.18}
\begin{document}

\maketitle
\begin{abstract}
Message Passing Neural Networks (MPNNs) are a staple of graph machine learning. MPNNs iteratively update each node’s representation in an input graph by aggregating messages from the node’s neighbors, which necessitates a memory complexity of the order of the {\em number of graph edges}. This complexity might quickly become  prohibitive for large graphs provided they are not very sparse.  In this paper, we propose a novel approach to alleviate this problem by  approximating the input graph as an  intersecting community graph (ICG) -- a combination of intersecting cliques. The key insight is that the number of communities required to approximate a graph  \emph{does not depend on the graph size}. We develop a new constructive version of the Weak Graph Regularity Lemma to efficiently construct an approximating ICG for any input graph. We then devise an efficient graph learning algorithm operating directly on ICG in linear memory and time with respect to the {\em number of nodes} (rather than edges).  This offers a new and fundamentally different pipeline for learning on very large non-sparse graphs, whose applicability is demonstrated empirically on node classification tasks and spatio-temporal data processing.
\end{abstract}

\section{Introduction}

The vast majority of graph neural networks (GNNs) learn representations of graphs based on the message passing paradigm~\cite{GilmerSRVD17}, where every node representation is synchronously updated based on an aggregate of messages flowing from its neighbors. This mode of operation hence assumes that the set of all graph edges are loaded in the memory, which leads to a memory complexity bottleneck. Specifically, considering a graph with $N$ nodes and $E$ edges, message-passing leads to a memory complexity that is \emph{linear in the number of edges}, which quickly becomes prohibitive for large graphs especially when $E \gg N$. This limitation motivated a body of work with the ultimate goal of making the graph machine learning pipeline amenable to large graphs ~\cite{GraphSAGE,chen2018fastgcn,ClusterGCN,Bai2020RippleWT,zeng2019graphsaint}.

In this work, we take a drastically different approach to alleviate this problem and design a novel graph machine learning pipeline for learning over large graphs with memory complexity \emph{linear in the number of nodes}. At the heart of our approach lies a graph approximation result which informally states the following: every undirected graph with node features  can be represented by a linear combination of intersecting communities (i.e., cliques) such that the number of communities required to achieve a certain approximation error is \emph{independent of the graph size} for dense graphs. Intuitively, this results allows us to utilize an ``approximating graph'', which we call \emph{intersecting community graph} (\icg), for learning over large graphs. 
Based on this insight --- radically departing from the standard message-passing paradigm --- we propose 
a new class of neural networks acting on the \icg and on the set of nodes from the original input graph, which leads to an algorithm with memory and runtime complexity that is \emph{linear in the number nodes}.

Our analysis breaks the traditional dichotomy of approaches that are appropriate either for small and dense graphs or for large and sparse graphs. To our knowledge, we present the first rigorously motivated approach allowing to efficiently handle graphs that are both large and dense. The main advantage lies in being able to process a very large non-sparse graph without excessive memory and runtime requirements. The computation of \icg requires linear time in the number of edges, but this is an offline pre-computation that needs to be performed \emph{once}. After constructing the \icg , learning to solve the task on the graph --- the most computation demanding aspect --- is very efficient, since the architecture search and hyper-parameter optimization is linear in the number of nodes.

\paragraph{Context.} The focus of our study is \emph{graph-signals}: undirected graphs with node features, where the underlying graph is given and fixed. This is arguably the most common learning setup for large graphs, including tasks such as semi-supervised (transductive) node classification. As we allow varying features, our approach is also suitable for spatiotemporal tasks on graphs, where a single graph is given as a fixed domain on which a dynamic process manifests as time-varying node features.   Overall, our approach paves the way for carrying out many important learning tasks on very large and relatively dense graphs such as social networks that typically have $10^8\sim 10^9$ nodes and average node degree of $10^2\sim 10^3$ \citep{SIGN2020} and do not fit into GPU memory. 

\paragraph{Challenges and techniques.} There are two fundamental technical challenges to achieve the desired linear complexity, which we discuss and elaborate next.
\vspace{.15cm}
\emph{How to construct the \icg efficiently with formal approximation guarantees?} Our answer to this question rests on an adaptation of the Weak Regularity Lemma \cite{weakReg,Szemeredi_analyst} which proves the existence of an approximating ICG in \emph{cut metric} -- a well-known graph similarity measure \footnote{\cite{weakReg} proposed an algorithm for finding the ICG, but it has impractical exponential runtime  (see Section \ref{Related work}).}. Differently from the Weak Regularity Lemma, however, we need to \emph{construct} an approximating \icg. Directly minimizing the cut metric  is numerically unstable and hard to solve\footnote{The definition of cut metric involves maximization, so minimizing cut metric is an unstable min-max.}. We address this in \Cref{thm:reg_sprase}, by proving that optimizing the error in Frobenius norm, a much easier optimization target,  guarantees a small cut metric error. Hence, we can approximate uniformly ``granular'' adjacency matrices by low-rank ICGs in cut metric. 
\begin{wrapfigure}{r}{0.31\textwidth}
    \centering
    \includegraphics[width=0.22\textwidth]{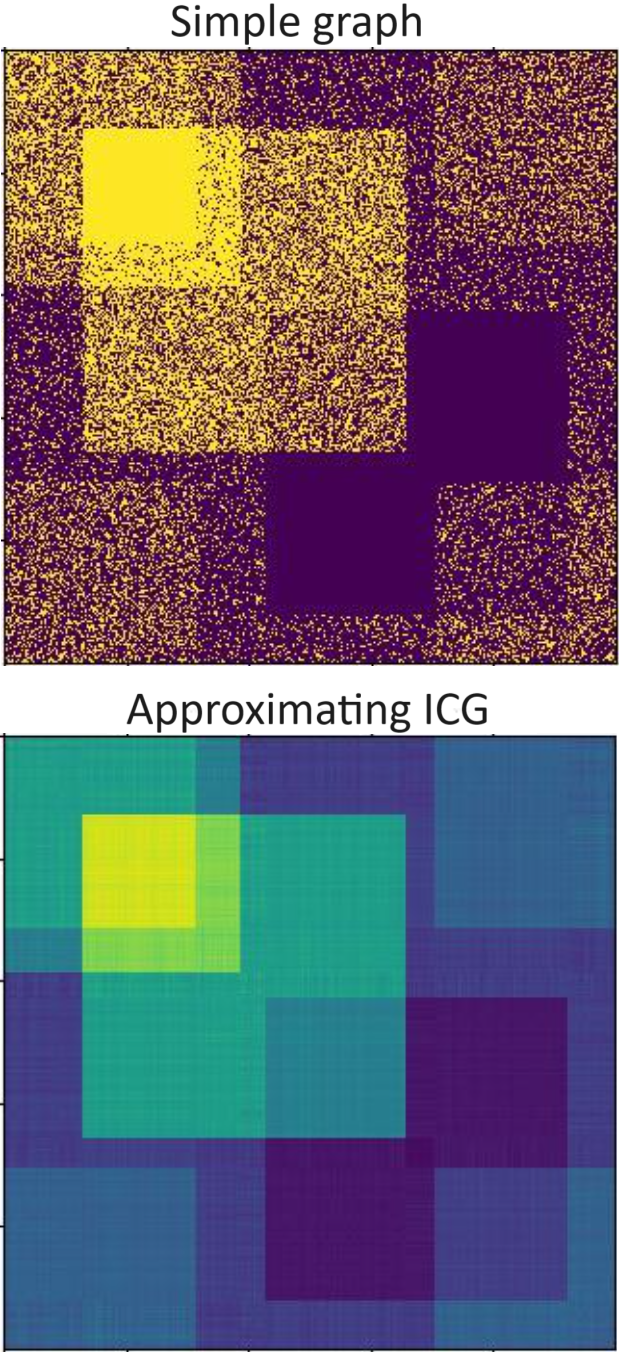}
    \caption{\textbf{Top}: adjacency matrix of a simple graph. \textbf{Bottom}:  approximating 5 community \icg.}
    \label{fig:ICG}
    \vspace{-1cm}   
\end{wrapfigure}
Uniformity here means that the number of communities $K$ of the \icg required for the error tolerance $\epsilon$ is $K=\epsilon^{-2}$, independently of on any property of the graph, not even its size $N$.\footnote{Note that the optimal Frobenius error is not uniformly small, since adjacency matrices of simple graphs are $\{0,1\}$-valued, typically very granular, so they cannot be approximated well by low-rank $\sR$-valued ICG matrices. Nevertheless, if we find an approximate Frobenius minimizer, we are guaranteed a small cut metric error.} This result  allows us to design an efficient and numerically stable algorithm. \Cref{fig:ICG} shows an adjacency matrix with an inherent intersecting community structure, and our approximated ICG, which  captures the statistics of edge densities, ignoring their  granularity.
\newline
\emph{How to effectively use \icg for efficient graph learning?} We present a signal processing framework and deep neural networks based on the ICG representation (ICG-NN). As opposed to message-passing networks that require $\mathcal{O}(E)$ memory and time complexity, ICG-NNs operate in  only $\mathcal{O}(N)$ complexity. They involve node-wise operations to allow processing the small-scale granular behavior of the node features, and community operations that account for the large-scale structure of the graph. Notably, since communities have large supports, ICG-NN can propagate information between far-apart nodes in a single layer.

\paragraph{Contributions.} Our contributions can be summarized as follows:
\begin{enumerate}[noitemsep,topsep=0pt,parsep=1pt,partopsep=0pt,label={\textbullet},leftmargin=*]
    \item We present a novel graph machine learning approach for large and non-sparse graphs which enables an $\mathcal{O}(N)$ learning algorithm on \icg with one-off $\mathcal{O}(E)$ pre-processing cost for constructing \icg. 
    \item Technically, we prove a constructive version of Weak Regularity Lemma in \Cref{thm:reg_sprase} to efficiently obtain \icgs, which could be of independent theoretical interest. 
    \item We introduce \icg neural networks as an instance of a signal processing framework, which applies neural networks on nodes and on \icg representations .
    \item Empirically, we validate our theoretical findings on semi-supervised node classification and on spatio-temporal graph tasks, illustrating the scalability of our approach, which is further supported by  competitive performance against strong baseline models.
\end{enumerate}
\section{A primer on graph-signals and the cut-metric}
\paragraph{Graph signals.} We are interested in undirected (weighted or simple), featured graphs $G=(\mathcal{N},\mathcal{E},\mathbf{S})$, where $\mathcal{N}$ denotes the set of nodes, $\mathcal{E}$ denotes the set of edges, and $\mathbf{S}$ denotes the node feature matrix, also called the signal. We will write $N$ to represent the \emph{number of nodes} and $E$ to represent the \emph{number of edges}. For ease of notation, we will assume that the nodes are indexed by $[N]=\{1,\ldots,N\}$. 
Using the graph signal processing convention, we represent the graphs in terms of a \emph{graph-signal} $G=(\mA,\mS)$, where $\mA=(a_{i,j})_{i,j=1}^N\in\sR^{N\times N}$ is the adjacency matrix, and $\mS=(s_{i,j})_{i\in[N],j\in[d]}\in\sR^{N\times D}$ is the $D$-channel signal. We assume that the graph is undirected, meaning that $\mA$ is symmetric. The graph may be unweighted (with $a_{i,j}\in\{0,1\}$) or weighted (with $a_{i,j}\in[0,1]$). We additionally denote by $\vs_n^\top$ the $n$-th row of $\mS$, and denote $\mS=(\vs_n)_{n=1}^N$. By abuse of notation, we also treat a signal $\mS$ as the function $\mS:[N]\rightarrow\mathbb{R}^D$ with $\mS(n)=\vs_n$. We suppose that all signals are standardized to have values in $[0,1]$.
In general, any matrix  (or vector) and its entries are denoted by the same letter, and vectors are seen as columns. The $i$-th row of the matrix $\mA$ is denoted by $\mA_{i,:}$, and the $j$-th column by $\mA_{:,j}$. 
The Frobenius norm of a square matrix $\mM\in\sR^{N\times N}$ is defined by $\norm{\mM}_{\mathrm F}=\sqrt{\frac{1}{N^2}\sum_{n,m=1}^N \abs{b_{n,m}}^2}$, and for a signal $\mS\in\sR^{N\times D}$ by $\norm{\mS}_{\mathrm F}=\sqrt{\frac{1}{N}\sum_{n=1}^N\sum_{j=1}^D \abs{s_{n,j}}^2}$.  The Frobenius norm of a matrix-signal, with weights $a,b>0$, is defined to be $\norm{(\mM,\mS)}_{\mathrm F} := \sqrt{a\norm{\mM}_{\mathrm F}^2 + b \norm{\mS}_{\mathrm F}^2}$. We define the \emph{degree} of a weighted graph as ${\mathrm deg}(\mA):=N^2\norm{\mA}_{\mathrm F}^2$. In the special case of an unweighted graph we get ${\mathrm deg}(\mA)=E$. The pseudoinverse of a full rank matrix $\mM\in\sR^{N\times K}$, where $N\geq K$, is $\mM^{\dagger} = (\mM^\top\mM)^{-1}\mM^\top$. 
\paragraph{Cut-metric.} The \emph{cut metric} is a graph similarity, based on the \emph{cut norm}. We show here the definitions  for graphs of the same size; the extension for arbitrary pairs of graphs is based on graphons (see Appendix \ref{A:Cut-distance}). The definition of matrix cut norm is well-known, and we normalize it by a parameter $E\in\mathbb{N}$ that indicates the characteristic number of edges of the graphs of interest, so cut norm remains within a standard range. The definition of cut norm of signals is taken from \cite{Levie2023}.

\begin{definition}
\label{def:graph-signal_cut}
    The \emph{matrix cut norm} of $\mM\in\sR^{N\times N}$, normalized by $E$, is defined to be
    \vspace{-.1cm}
    \begin{equation}
        \label{eq:GraphCutNorm0}  \norm{\mM}_{\square}=\norm{\mM}_{\square;N,E}:=\frac{1}{E}\sup_{\mathcal{U},\mathcal{V}\subset[N]}\Big|\sum_{i\in \mathcal{U}}\sum_{j\in \mathcal{V}}m_{i,j}\Big|.
    \end{equation}
    The \emph{signal cut norm} of  $\mZ\in\sR^{N\times D}$ is defined to be
    \begin{equation}
        \label{eq:SignalCutNorm1}
        \cutnorm{\mZ}=\norm{\mZ}_{\square;N}:=\frac{1}{DN}\sum_{j=1}^D\sup_{w\subset[N]}\Big|\sum_{i\in w}z_{i,j}\Big|.
    \end{equation}
    The \emph{matrix-signal} cut norm of  $(\mM,\mZ)$, with weights $\alpha>0$ and $\beta>0$ s.t. $\alpha+\beta=1$, is defined to be
    \begin{equation}
        \label{eq:GraphSignalCutNorm0}
        \cutnorm{(\mM,\mZ)} = \norm{(\mM,\mZ)}_{\square;N,E}:= \alpha\norm{\mM}_{\square;N,E} + \beta\norm{\mZ}_{\square;N}.
    \end{equation}
\end{definition}

Given two graphs $\mA,\mA'$, their distance in \emph{cut metric} is the \emph{cut norm} of their difference, namely
\begin{equation}
    \label{eq:CutNorm1}
    \norm{\mA-\mA'}_{\square}=\frac{1}{E}\sup_{\mathcal{U},\mathcal{V}\subset[N]}\Big|\sum_{i\in \mathcal{U}}\sum_{j\in \mathcal{V}}(a_{i,j}-a'_{i,j})\Big|.
\end{equation}
The right-hand-side of (\ref{eq:CutNorm1}) is interpreted as the difference between the edge densities of $\mA$ and $\mA'$ on the block on which these densities are the most dissimilar. Hence, cut-metric has a probabilistic interpretation.  Note that for simple graphs, $\mA-\mA'$ is a granular function: it has many jumps between the values $-1$, $0$ and $1$. The fact that the absolute value in (\ref{eq:CutNorm1}) is outside the integral, as opposed to being inside like in $\ell_1$ norm, means that cut metric has an averaging effect, which mitigates the granularity of $\mA-\mA'$. 
The \emph{graph-signal cut metric} is similarly defined to be $\cutnorm{(\mA,\mS)-(\mA',\mS')}$.

\paragraph{Cut metric in graph machine learning.} The cut metric has the following interpretation: two (deterministic) graphs are close to each other in cut metric iff they look like random graphs  sampled from the same stochastic block model. The interpretation has a precise formulation in view of the Weak Regularity Lemma \cite{weakReg,Szemeredi_analyst} discussed below. This makes the cut metric a natural notion of similarity for graph machine learning, where real-life graphs are noisy, and can describe the same underlying phenomenon even if they have different sizes and topologies. Moreover, \cite{Levie2023} showed that GNNs with normalized sum aggregation cannot separate graph-signals that are close to each other in cut metric. In fact, cut metric can separate any non-isomorphic graphons \cite{cut-homo3}, so it has sufficient discriminative power to serve  as a graph similarity measure in  graph machine learning problems. As opposed to past works that used the cut norm to derive theory for existing GNNs, e.g., \cite{Ruisz_GSP,MASKEYtrans2023,Levie2023}, we use cut norm to derive new methods for a class of problems of interest. Namely, we introduce a new theorem about cut norm -- the constructive weak regularity lemma (\Cref{thm:reg_sprase}) -- and use it to build new algorithms on large non-sparse graphs.
\section{Approximation of graphs by intersecting communities}
\subsection{Intersecting community graphs}
Given a graph $G$ with $N$ nodes, for any subset of nodes $\mathcal{U}\subset [N]$, denote by $\boldsymbol{\mathbbm{1}}_{\mathcal{U}}$ its indicator (i.e., $\boldsymbol{\mathbbm{1}}_{\mathcal{U}}(i) = 1$ if $i \in \mathcal{U}$ and zero otherwise). 
We define an \emph{intersecting community graph-signal (\icg)} with $K$ classes ($K$-\icg) as a low rank graph-signal $(\mC,\mP)$ with adjacency matrix and signals given respectively by
\begin{equation}
\label{eq:ICG00}
    \mC = \sum_{j=1}^K r_j \boldsymbol{\mathbbm{1}}_{\mathcal{U}_j}\boldsymbol{\mathbbm{1}}_{\mathcal{U}_j}^\top , \quad\quad  \mP = \sum_{j=1}^K  \boldsymbol{\mathbbm{1}}_{\mathcal{U}_j}\vf_j^\top
\end{equation}
where $r_j\in\sR$, $\vf_j\in\sR^D$, and $\mathcal{U}_j \subset[N]$.
In order to allow efficient optimization algorithms, we relax the $\{0,1\}$-valued hard affiliation functions $\boldsymbol{\mathbbm{1}}_{\mathcal{U}}$ to functions that can assign soft affiliation values in $\sR$. 
Letting $\chi$ denote the set of all indicator functions of the form $\mathbbm{1}_{\mathcal{U}}$, for subsets  $\mathcal{U}\in[N]$, we can formally define soft affiliation models as follows.
\begin{definition}
    A  set $\sQ$ of functions $\vq:[N]\rightarrow\sR$ that contains $\chi$ is called a \emph{soft affiliation model}.
\end{definition}

\begin{definition}
    Let $d\in\mathbb{N}$, and let $\mathcal{Q}$ be a soft affiliation model. We define $[\mathcal{Q}]\subset\mathbb{R}^{N\times N}\times \mathbb{R}^{N\times D}$ to be the set of all elements 
    of the form $(r\vq\vq^\top,\vq\vf^\top)$, with $\vq\in\sQ$, $r\in\sR$ and $\vf\in\sR^D$. We call $[\mathcal{Q}]$ the \emph{soft rank-1 intersecting community graph (ICG) model} corresponding to $\sQ$.
    Given $K\in\sN$, the subset $[\sQ]_K$ of $\sR^{N\times N}\times \sR^{N\times D}$ of all linear combinations of $K$ elements of $[\sQ]$ is called the \emph{soft rank-$K$ ICG model} corresponding to $\sQ$. 
\end{definition}
\icg models can be written in matrix form as follows. Given $K, D\in\sN$, any intersecting community graph-signal $(\mC,\mP)\in \sR^{N\times N} \times \sR^{N\times D}$ in $[\sQ]_K$ can be represented by a triplet of \emph{community affiliation matrix}  $\mQ\in\sR^{N\times K}$, \emph{community magnitude vector}  $\vr\in\sR^{K}$, and \emph{community feature matrix} $\mathbf{F}\in\sR^{K\times D}$. In the matrix form, $(\mC,\mP)\in[\sQ]_K$ if and only if it has the form \[\mC= \mQ\diag(\vr)\mQ^\top \quad \text{and} \quad \mP=\mQ\mF,\]
where $\diag(\vr)$ is the diagonal matrix in $\sR^{K\times K}$ with $\vr$ as its diagonal elements. 

\subsection{The Semi-Constructive Graph-Signal Weak Regularity Lemma}
The following theorem is a semi-constructive version of the Weak Regularity Lemma for intersecting communities. It is semi-constructive in the sense that the approximating graphon is not just assumed to exist, but is given as the result of an optimization problem with an ``easy to work with'' loss, namely, the Frobenius norm error. The theorem also extends the standard weak regularity lemma by allowing soft affiliations to communities instead of hard affiliations. For comparison to previously poposed constructive regularity lemmas see Section \ref{Related work}.

\begin{theorem}
    \label{thm:reg_sprase}
    Let $(\mA,\mS)$ be a $D$-channel graph-signal of $N$ nodes, where ${\mathrm deg}(\mA)=E'$. Let $K\in\sN$,  $\delta>0$, and $\sQ$ be a soft affiliation model. Consider the matrix-signal cut norm with weights $\alpha,\beta\geq 0$ not both zero, and the matrix-signal Frobenius norm with weights $\norm{(\mB,\mX)}_{\mathrm F} := \sqrt{\alpha\frac{N^2}{E}\norm{\mB}_{\mathrm F}^2 + \beta \norm{\mX}_{\mathrm F}^2}$. 
    Let $m$ be sampled uniformly from $[K]$, and let $R\geq 1$ such that $K/R\in\sN$.
     
    Then, in probability $1-\frac{1}{R}$ (with respect to the choice of $m$), for every $(\mC^*,\mP^*)\in[\sQ]_m$, 
    \[\text{if \quad  } \norm{(\mA,\mS)-(\mC^*,\mP^*)}_F \leq (1+\delta) \min_{(\mC,\mP)\in [\sQ]_m} \|(\mA,\mS)-(\mC,\mP)\|_{\mathrm F} \]
    \begin{equation}
        \label{eq:reg_sparse}
        \text{ then \quad } 
        \norm{(\mA,\mS)-(\mC^*,\mP^*)}_{\square;N,E'} \leq \frac{3N}{2\sqrt{E'}}\sqrt{\frac{R}{K} + \delta}.
    \end{equation}
\end{theorem}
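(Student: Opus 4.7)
The plan is to combine a Frobenius-to-cut energy-increment lemma with a Markov averaging argument over the randomly sampled rank $m$.

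First, I would establish the key energy-increment lemma: for any matrix-signal residual $(\mB,\mX)$, there exists a rank-1 ICG element $(r\vq\vq^\top,\vq\vb^\top)\in[\sQ]$ satisfying
\[\Fnorm{(\mB,\mX)-(r\vq\vq^\top,\vq\vb^\top)}^2 \leq \Fnorm{(\mB,\mX)}^2 - \kappa\,\cutnorm{(\mB,\mX)}^2,\]
for an explicit $\kappa$ determined by the normalization constants $\alpha,\beta,E',N,D$. Since $\chi\subseteq\sQ$, I would take $\vq=\vone_{U^*}$ where $U^*$ is the witness set of whichever of $\alpha\cutnorm{\mB}$ or $\beta\cutnorm{\mX}$ dominates $\cutnorm{(\mB,\mX)}$, and then fix $r$ and $\vb$ by closed-form least-squares projections. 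The matrix-side decrement invokes the symmetric-witness bound $|\langle\mB,\vone_U\vone_U^\top\rangle|\gtrsim E'\cutnorm{\mB}$ (via a standard symmetrization $(U,V)\mapsto U$ exploiting the symmetry of $\mB$); the signal-side decrement picks the strongest of the $D$ channel witnesses in the definition of the signal cut norm.

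Next, I would apply this lemma to the residual of the assumed $(1+\delta)$-approximate rank-$m$ minimizer $(\mC^*,\mP^*)\in[\sQ]_m$. Since $(\mC^*,\mP^*)$ plus any rank-1 ICG element lies in $[\sQ]_{m+1}$, this yields
\[\mathrm{err}(m+1)^2 \leq (1+\delta)^2\,\mathrm{err}(m)^2 - \kappa\,\cutnorm{(\mA,\mS)-(\mC^*,\mP^*)}^2,\]
where $\mathrm{err}(k)^2:=\min_{[\sQ]_k}\Fnorm{(\mA,\mS)-(\mC,\mP)}^2$. Rearranging, expanding $(1+\delta)^2 = 1 + O(\delta)$, and writing $g(m):=\mathrm{err}(m)^2-\mathrm{err}(m+1)^2$, the squared cut norm is bounded by $g(m)/\kappa + O(\delta/\kappa)\Fnorm{(\mA,\mS)}^2$. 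Telescoping gives $\sum_{m=1}^K g(m)\leq \mathrm{err}(0)^2 = \Fnorm{(\mA,\mS)}^2$, so Markov's inequality on $m\sim\mathrm{Unif}[K]$ (the hypothesis $K/R\in\sN$ permitting a clean partition of $[K]$ into $R$ buckets of equal size) gives $g(m)\leq (R/K)\Fnorm{(\mA,\mS)}^2$ with probability at least $1-1/R$. Finally, $\mathrm{deg}(\mA) = E'$ forces $\tfrac{N^2}{E'}\Fnorm{\mA}^2 = 1$ and $\Fnorm{\mS}^2\leq D$, so $\Fnorm{(\mA,\mS)}^2$ reduces to an explicit constant; transferring back to cut-norm units via $1/\sqrt{\kappa}$ and tracking the remaining normalization factors produces the stated prefactor $\tfrac{3N}{2\sqrt{E'}}$.

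The main obstacle will be the joint matrix-signal aspect of the energy-increment lemma: a rank-1 ICG element must use a single affiliation $\vq$ for both the matrix part $r\vq\vq^\top$ and the signal part $\vq\vb^\top$. Treating matrix and signal independently would consume two rank slots per increment and break the $m\to m+1$ telescoping. The resolution is a case split on which of $\alpha\cutnorm{\mB}$ or $\beta\cutnorm{\mX}$ drives $\cutnorm{(\mB,\mX)}$, followed by a closed-form joint optimization of $(r,\vb)$ for the dominating witness $\vq$. Secondary subtleties include the symmetrization of the matrix cut witness and the channel-selection step for the signal witness, both of which contribute absolute constants absorbed in the $3/2$ prefactor.
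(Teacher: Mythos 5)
Your plan is, at its core, the same argument the paper uses: a constructive Frieze--Kannan/Lov\'asz--Szegedy energy increment, built from (a) control of the cut norm by rank-one correlations after symmetrizing the witness pair $(\mathcal{U},\mathcal{V})$ into the three sets $\mathcal{U}\cup\mathcal{V},\mathcal{U},\mathcal{V}$, (b) boundedness of the total Frobenius energy decrease over $K$ rank increments, and (c) pigeonhole over the uniformly random $m$ (this is exactly the paper's Lemma~\ref{lem:regHS22}). The difference is in how (a) and (b) are combined, and it is not cosmetic. The paper states the key step \emph{uniformly}: for a good index $m$, every test direction $w\in[\sQ]$ satisfies $\abs{\ip{w}{g-g^*}}\le\norm{w}\norm{g}\sqrt{R/K+\delta}$, obtained from the non-positive discriminant of $t\mapsto\norm{g-(g^*+tw)}^2$ (which is the same computation as your least-squares decrement $\ip{w}{g-g^*}^2/\norm{w}^2$). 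Because this holds simultaneously for all $w$, the matrix and signal cut norms are bounded \emph{separately}, using matrix-only tests $(r\vq\vq^\top,0)$ and signal-only tests $(0,\vq\vb^\top)$, and then added with no loss; your ``main obstacle'' of a single $\vq$ serving both parts simply does not arise. Your contrapositive phrasing --- a single existential rank-one increment with decrement $\geq\kappa\cutnorm{(\mB,\mX)}^2$ --- forces the case split on which of $\alpha\cutnorm{\mB}$, $\beta\cutnorm{\mX}$ dominates, and replacing the sum by a max costs a factor of about $2$ in the final prefactor; likewise, passing through $(1+\delta)^2$ leaves you with $2\delta+\delta^2$ under the square root where the paper's manipulation of $\eta_m\leq\eta_{m+1}+\frac{R(1+\delta)}{K}\norm{g}^2$ gives exactly $\delta$. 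So your route proves the statement only up to absolute constants (roughly $\frac{3N}{\sqrt{E'}}\sqrt{R/K+2\delta+\delta^2}$); these losses are not ``absorbed in the $3/2$ prefactor'' as you assert. To recover the stated bound, keep your telescoping and Markov steps but upgrade the increment lemma to the uniform correlation bound for the selected $m$, then bound the two cut-norm summands independently. Two smaller points to mind when writing this up: the identity behind the three-set symmetrization requires disjoint (or suitably decomposed) witness sets, and the optimal single-channel witness selection for the signal part should be done per channel and averaged, matching the $\frac1D\sum_j$ in the definition of the signal cut norm.
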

The proof of Theorem \ref{thm:reg_sprase} is given in Appendix \ref{Graphon-Signal Intersecting Communities and the Constructive Regularity Lemma}, where the theorem is extended to graphon-signals. 
Theorem \ref{thm:reg_sprase} means that we need not consider a complicated algorithm for estimating cut distance. Instead, we can optimize the Frobenius norm, which is much more direct (see Section \ref{Fitting icgs to graphs}). The term $\delta$ describes the stability of the approximation, where a perturbation in $(\mC^*,\mP^*)$ that corresponds to a small relative change in the optimal Frobenius error, leads to a small additive error in cut metric.

\subsection{Approximation capabilities of ICGs} 
\label{Approximation capabilities of ICGs}
\paragraph{ICGs vs. Message Passing.} 
Until the rest of the paper we suppose for simplicity that the degree of the graph $E'$ is equal to the number of edges $E$ up to a constant scaling factor $0<\gamma\leq 1$, namely $E'=\gamma E$. Note that this is always true for unweighted graphs, where $\gamma=1$.
The bound in Theorem \ref{thm:reg_sprase} is closer to be uniform with respect to $N$ the denser the graph is (the closer $E$ is to $N^2$).
Denote the average node degree by ${\mathrm d}=E/N$.
To get a meaningful bound in (\ref{eq:reg_sparse}), we must choose $K>N^2/E$. On the other hand, for signal processing on the ICG to be more efficient than message passing, we require $K<{\mathrm d}$. Combining these two bounds, we see that ICG signal processing is  guaranteed to be more efficient than message passing for any graph in the \emph{semi-dense regime}: ${\mathrm d}^2>N$ (or equivalently $E>N^{3/2}$).

\paragraph{Essential tightness of Theorem \ref{thm:reg_sprase}.} 
In \cite[Theorem 1.2]{Alon_sparse_cut2015} it was shown that the bound (\ref{eq:reg_sparse}) is essentially tight in the following sense\footnote{We convert the result of \cite{Alon_sparse_cut2015} to our notations and definitions.}. There exists a universal constant $c$ (greater than $1/2312$) such that for any $N$ there exists an unweighted graph $\mA\in\sR^{N\times N}$ with ${\mathrm deg}(\mA)=E$, such that any $\mB\in \sR^{N\times N}$ that approximates $\mA$ with error $\norm{\mA-\mB}_{\square;N,E}\leq 16$ must have rank greater than $c\frac{N^2}{E}$. In words, there are graphs of $E$ edges that we cannot approximate in cut metric by any ICG with $K\ll\frac{N^2}{E}$. Hence, the requirement $K\gg N^2/E'$ for a small cut metric error in (\ref{eq:reg_sparse}) for \emph{any} graph  is tight.

\paragraph{ICGs approximations of sparse graphs.} In practice, many natural sparse graphs are amenable to low rank intersecting community approximations. For example, a simplistic model for how social networks are formed states that people connect according to shared characteristics (i.e. intersecting communities), like hobbies, occupation, age, etc \cite{Overlapping2012}. Moreover, since ICGs can have negative community magnitudes $r_k$, one can also construct heterophilic components (bipartite substructures) of graphs with ICGs\footnote{Two sets of nodes (parts) $\mathcal{U},\mathcal{V}\in[N]$ with connections between the parts but not within the parts can be represented by the ICG $(\boldsymbol{\mathbbm{1}}_{\mathcal{U}\cup\mathcal{V}}\boldsymbol{\mathbbm{1}}_{\mathcal{U}\cup\mathcal{V}}^\top-\boldsymbol{\mathbbm{1}}_{\mathcal{U}}\boldsymbol{\mathbbm{1}}_{\mathcal{U}}^\top-\boldsymbol{\mathbbm{1}}_{\mathcal{V}}\boldsymbol{\mathbbm{1}}_{\mathcal{V}}^\top)$.}. Hence, a social network can be naturally described using intersecting communities, even with $K<N^2/E$.  
For such graphs, ICG approximations are more accurate than their theoretical bound (\ref{eq:reg_sparse}). In addition, Figure \ref{fig:FrobCutError} in Appendix \ref{app:norms}  shows that in practice the Frobenius local minimizer, which does not give a small Frobenius error, guarantees small cut metric error even if $K<N^2/E$. This means that Frobenius minimization is a practical approach for fitting ICGs also for sparse natural graphs.  Moreover, note that in the experiments in Tables \ref{tab:node_classification} and \ref{tab:spatio-temporal} we take $K<N^2/E$ and still get competitive performance with SOTA message passing methods. 

\section{Fitting \icgs to graphs}
\label{Fitting icgs to graphs}
Let us fix the soft affiliation model to be all vectors $\vq\in[0,1]^N$. In this subsection, we propose algorithms for fitting ICGs to graphs based on Theorem \ref{thm:reg_sprase}.

\subsection{Fitting an \icg to a graph with gradient descent} 
In view of Theorem \ref{thm:reg_sprase}, to fit a soft rank-$K$ \icg to a graph in cut metric, we learn a triplet $\mQ\in[0,1]^{N \times K}$,  $\vr\in\sR^K$ and $\mF\in\sR^{K \times D}$ that minimize the Frobenius loss:
\begin{align}
\label{eq:loss} 
    L(\mQ, \vr, \mF) =\Fnorm{\mA -  \mQ\diag(\vr)\mQ^\top}^2 + \lambda\Fnorm{\mS - \mQ\mF}^2, 
\end{align}
where $\lambda\in\sR$ is a scalar that balances the two loss terms. To implement $\mQ\in [0,1]^{N\times K}$ in practice, we define $\mQ=\sigmoid(\mR)$  for a learned matrix  $\mR \in \sR^{N \times K}$.

Suppose that $\mA$ is sparse with $K^2,K{\mathrm d}\ll N$. The loss (\ref{eq:loss}) involves sparse matrices and low rank matrices. Typical time complexity of sparse matrix operations is $O(E)$, and for rank-$K$ matrices it is $O(NK^2)$. Hence, we would like to derive an optimization procedure that takes $O(K^2N+E)$ operations. However, the first term of (\ref{eq:loss}) involves a subtraction of the low rank matrix $\mQ\diag(\vr)\mQ^\top$ from the sparse matrix $\mA$, which gives a matrix that is neither sparse nor low rank. Hence, a na\"ive optimization procedure would take $\gO(N^2)$ operations. The next proposition shows that we can write the loss (\ref{eq:loss}) in a form that leads to a $\gO(K^2N+KE)$ time and $\gO(KN+E)$ space complexities.

\begin{proposition} \label{prop:efficient_loss}
    Let $\mA=(a_{i,j})_{i,j=1}^N$ be an adjacency matrix of a weighted graph with $E$ edges. 
    The graph part of the Frobenius loss can be written as 
    \begin{align*}
        \Fnorm{\mA -  \mQ\diag(\vr)\mQ^\top}^2 &= \frac{1}{N^2}\Tr\left((\mQ^\top\mQ)\diag(\vr)(\mQ^\top\mQ)\diag(\vr)\right) + \norm{\mA}_{\mathrm F}^2\\
        &- \frac{2}{N^2}\sum_{i=1}^N\sum_{j\in \gN(i)}\mQ_{i,:}\diag(\vr)\left(\mQ^\top\right)_{:,j}a_{i,j}.
    \end{align*}
    Computing the right-hand-side and its gradients with respect to $\mQ$ and $\vr$ has a time complexity of $\gO(K^2N + KE)$, 
    and a space complexity of $\gO(KN+E)$.
\end{proposition}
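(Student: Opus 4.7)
The plan is to derive the identity by expanding the squared Frobenius norm, using cyclicity of the trace to handle the low-rank part, and using sparsity of $\mA$ to handle the cross term, and then to track the cost of each piece (including its gradient) separately.

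First, I would write
\[
\Fnorm{\mA - \mC}^2 = \Fnorm{\mA}^2 - \tfrac{2}{N^2}\langle \mA,\mC\rangle + \Fnorm{\mC}^2,
\]
where $\mC := \mQ\diag(\vr)\mQ^\top$ and $\langle\cdot,\cdot\rangle$ is the (unnormalized) Euclidean inner product on $\sR^{N\times N}$. For the third term, I use $\Fnorm{\mC}^2 = \tfrac{1}{N^2}\Tr(\mC^\top\mC)$ and cyclicity of the trace, together with the symmetry of $\diag(\vr)$, to rewrite
\[
\Tr\!\bigl(\mQ\diag(\vr)\mQ^\top\mQ\diag(\vr)\mQ^\top\bigr) = \Tr\!\bigl((\mQ^\top\mQ)\diag(\vr)(\mQ^\top\mQ)\diag(\vr)\bigr),
\]
which is exactly the first summand in the claimed formula. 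For the cross term, since $a_{i,j}=0$ whenever $j\notin \gN(i)$, I simply restrict the double sum to edges:
\[
\langle \mA,\mC\rangle = \sum_{i=1}^N\sum_{j\in \gN(i)} a_{i,j}\,\mC_{i,j} = \sum_{i=1}^N\sum_{j\in \gN(i)} a_{i,j}\,\mQ_{i,:}\diag(\vr)(\mQ^\top)_{:,j}.
\]
Combining the three contributions gives the stated identity; the term $\Fnorm{\mA}^2$ is a constant and can be precomputed once in time $\gO(E)$.

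For the complexity, I would treat the three summands separately. The Gram matrix $\mM := \mQ^\top\mQ\in\sR^{K\times K}$ can be formed in $\gO(K^2 N)$ time and $\gO(K^2+KN)$ space; then $\Tr(\mM\diag(\vr)\mM\diag(\vr))$ and its gradients with respect to $\mQ$ (which take the closed form $\tfrac{4}{N^2}\mQ\diag(\vr)\mM\diag(\vr)$) and $\vr$ are obtained in an additional $\gO(K^2 N)$ time by a single matrix-matrix product. The edge sum has $2E$ terms, each requiring $\gO(K)$ work to evaluate $\mQ_{i,:}\diag(\vr)(\mQ^\top)_{:,j}$, giving $\gO(KE)$; its gradient with respect to $q_{i,k}$ equals $-\tfrac{4}{N^2} r_k \sum_{j\in \gN(i)} a_{i,j} q_{j,k}$ (using symmetry of $\mA$), which again costs $\gO(K|\gN(i)|)$ per node and $\gO(KE)$ in total, and the gradient with respect to $r_k$ is the analogous edge sum of the same cost. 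Summing these yields the claimed $\gO(K^2 N + KE)$ time and $\gO(KN+E)$ space bounds.

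The computations are essentially routine matrix algebra; the only place that requires mild care is the gradient bookkeeping, to confirm that forming $\mM$ once and reusing it is enough to avoid any $\gO(N^2)$ intermediate object. Everything else is just chain rule applied to the two closed-form expressions above, so I do not expect a genuine obstacle.
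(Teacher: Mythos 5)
Your proposal is correct and follows essentially the same route as the paper: expand the squared Frobenius norm into a constant term, a cross term restricted to edges by sparsity of $\mA$, and the low-rank term rewritten via cyclicity of the trace, then obtain the complexity by precomputing the Gram matrix $\mQ^\top\mQ$ and handling the edge sum by message passing. Your direct bilinear expansion is slightly cleaner than the paper's neighbor/non-neighbor splitting-and-recombining, and your explicit gradient formulas are correct, but the substance is identical.
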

The proof is given in Appendix \ref{Fitting ICGs to graphs efficiently}. We optimize $\mQ$, $\vr$ and $\mF$ via gradient descent (GD) on (\ref{eq:loss}) implemented via Proposition \ref{prop:efficient_loss}. After converging, we can further optimize the community features by setting them to $\mF=\mQ^\dagger\mS$.

\subsection{Initialization}
\label{subsec:init}
In Appendix \ref{Initializing the optimization with eigenvectors} we propose a good initialization for the GD minimization of (\ref{eq:loss}).
Suppose that $K$ is divisible by $3$. Let $\boldsymbol{\Phi}_{K/3}\in\sR^{N\times K/3}$ be the matrix consisting of the leading eigenvectors of $\mA$ as columns, and $\boldsymbol{\Lambda}_{K/3}\in\sR^{ K/3\times K/3}$ the diagonal matrix of the leading eigenvalues.
For each  eigenvector $\boldsymbol{\phi}$ with eigenvalue $\lambda$,  consider the three soft indicators
\begin{equation}
\label{eq:initialization}
    \boldsymbol{\phi}_1=\frac{\boldsymbol{\phi}_+}{\norm{\boldsymbol{\phi}_+}_{\infty}}, \quad \boldsymbol{\phi}_2=\frac{\boldsymbol{\phi}_-}{\norm{\boldsymbol{\phi}_-}_{\infty}}, \quad \boldsymbol{\phi}_3=\frac{\boldsymbol{\phi}_++\boldsymbol{\phi}_-}{\norm{\boldsymbol{\phi}_++\boldsymbol{\phi}_-}_{\infty}}   
\end{equation}
with community affiliation magnitudes $r_1=\lambda\norm{\boldsymbol{\phi}_+}_{\infty}$, $r_2=\lambda\norm{\boldsymbol{\phi}_-}_{\infty}$ and \newline
$r_3=-\lambda\norm{\boldsymbol{\phi}_++\boldsymbol{\phi}_-}_{\infty}$ respectively. Here, $\boldsymbol{\phi}_{\pm}$ is the positive or negative part of the vector. One can now show that the ICG $\mC$ based on the $K$ soft affiliations corresponding to the leading $K/3$ eigenvectors approximates $\mA$ in cut metric with error $O(K^{-1/2})$. Note that computing the leading eigenvectors is efficient with any variant of the power iteration for sparse matrices, like any variant of Lanczos algorithm, which takes $\gO(E)$ operations per iteration, and requires just a few iteration due to its fast super-exponential convergence \citep{NLA_Lanczos}.
We moreover initialize the community signal optimally by $\mF=\mQ^{\dagger}\mS$. 

\subsection{Subgraph SGD for fitting \icgs to large graphs}
\label{Subgraph SGD for fitting icgs to large graphs}
In many situations, we need to process large graphs for which $E$ is too high to read to the GPU memory, but $NK$ is not. These are the situation where processing graph-signals using ICGs is beneficial. However, to obtain the ICG we first need to optimize the loss (\ref{eq:loss}) using a message-passing type algorithm that requires reading $E$ edges to memory. To allow such processing, we next show that one can read the $E$ edges to shared RAM (or storage), and at each SGD step read to the GPU memory only a random subgraph with $M$ nodes. 

At each interation, we sample $M\ll N$ random nodes $\vn:=(n_m)_{m=1}^M$ uniformly and independently from $[N]$ (with repetition). We construct the sub-graph $\mA^{(\vn)}\in\sR^{M\times M}$ with entries $a^{(\vn)}_{i,j}=a_{n_i,n_j}$, and the sub-signal $\mS^{(\vn)}\in\sR^{M\times K}$ with entries $\vs^{(\vn)}_{i}=\vs_{n_i}$. 
We similarly define the sub-community affiliation matrix $\mQ^{(\vn)}\in[0,1]^{M\times K}$ with entries $q^{(\vn)}_{i,j}=q_{n_i,j}$.
We consider the loss
\begin{align}
\label{eq:lossSGD} 
    L^{(\vn)}(\mQ^{(\vn)}, \vr, \mF) =\Fnorm{\mA^{(\vn)} -  \mQ^{(\vn)}\diag(\vr)\mQ^{(\vn)\top}}^2 + \lambda\Fnorm{\mS^{(\vn)} - \mQ^{(\vn)}\diag(\vr)\mF}^2
\end{align}
which depends on all entries of $\mF$ and  $\vr$ and on the $\vn$ entries of $\mQ$. Each SGD updates all of the entries of $\mF$ and  $\vr$, and the $\vn$ entries of $\mQ$ by incrementing them with the respective gradients of $L^{(\vn)}(\mQ^{\vn}, \vr, \mF)$. Hence, 
 $\nabla_{\mQ}L^{(\vn)}(\mQ^{(\vn)}, \vr, \mF)$ 
may only be nonzero for entries $\vq_{n}$ with $n\in\vn$.  Proposition \ref{prop:SGD} in Appendix \ref{Learning ICG with subgraph SGD} shows that the gradients of $L^{(\vn)}$  approximate the gradients of $L$. More concretely, $\frac{M}{N}\nabla_{\mQ^{(\vn)}}L^{(\vn)}\approx\nabla_{\mQ^{(\vn)}}L$, $\nabla_{\vr}L^{(\vn)}\approx\nabla_{\vr}L$, and $\nabla_{\mF}L^{(\vn)}\approx \nabla_{\mF}L$.
Note that the stochastic gradients with respect to $\mQ^{(\vn)}$ approximate a scaled version of the full gradients.

\section{Learning with \icg}
\label{Machine Learning with}

Given a graph-signal $(\mA,\mS)$ and its approximating \icg $(\mC,\mP)$, the corresponding soft affiliations $\mQ=(\vq_k)_{k=1}^K$ represent intersecting communities  that can describe the adjacency structure $\mA$ and can account for the variability of $\mS$. In a deep network architecture, one computes many latent signals, and these signals typically correspond in some sense to the structure of the data $(\mA,\mS)$. It is hence reasonable to put a special emphasis on latent signals that can be described as linear combinations of $(\vq_k)_{k=1}^K$. Next, we develop a signal processing framework for such signals. 

\paragraph{Graph signal processing  with \icg.} 
We develop a signal processing approach with $O(NK)$ complexity for each elemental operation.
Let $\mQ\in\sR^{N\times K}$ be a fixed community  matrix. We call $\sR^{N\times D}$ the \emph{node space}  and $\sR^{K\times D}$ the \emph{community space}. We process signals via the following operations:
\begin{itemize}
    \item \emph{Synthesis} is the mapping $\mF\mapsto\mQ\mF$ from the community space to the node space, in $O(NKD)$.
    \item \emph{Analysis} is the mapping $\mS\mapsto\mQ^\dagger\mS$ from the node space to the community space, in $O(NKD)$.
    \item \emph{Community processing} refers to any operation that manipulates the community feature vector $\mF$ (e.g., an MLP or Transformer) in $O(K^2D^2)$ operations.
    \item \emph{Node processing} is any function $\Theta$ that operates in the node space on nodes independently via $\Theta(\mS):=(\theta(\vs_n))_{n=1}^N$, where $\theta:\sR^{D}\rightarrow\sR^{D'}$ (e.g., an MLP in which a linear operator requires $KD\times KD$ parameters or a Transformer) takes $O(DD')$ operations. Node processing has time complexity of $O(NDD')$.
\end{itemize} 
Note that $\mQ$ is almost surely full rank when initialized randomly, and if $\mA$ is not low rank, the optimal configuration of $\mQ$ would avoid having multiple instances of the same community, as this would effectively reduce the number of communities from $K$ to $K-1$. Therefore, $\mQ$ remains full rank.

Analysis and synthesis satisfy the reconstruction formula $\mQ^{\dagger}\mQ\mF=\mF$. Moreover,  $\mQ\mQ^{\dagger}\mS$ is the projection of $\mS$ upon the space of linear combinations of communities $\{\mQ\mF\ |\ \mF\in\sR^{K\times D}\}$.

\paragraph{Deep \icg Neural Networks.}
In the following, we propose deep architectures based on the elements of signal processing with an \icg, which takes $O(D(NK +K^2D+ND))$ operations at each layer.
Simplified message passing layers (e.g., GCN and GIN), where the message is computed using just the feature of the transmitting node have a complexity of $O(ED+ND^2)$. In contrast,a general message passing layer, which applies a function on the concatenated pair of node features of each edge. This general formulation of MPNN takes $O(ED^2)$ operations for MLP message functions. Therefore, \icg neural networks are more efficient than MPNNs if $K=O(D{\mathrm d})$, where ${\mathrm d}$ is the average node degree. 

We denote by $D^{(\ell)}$ the dimension of the node features at layer $\ell$ and define the initial node representations $\mH^{(0)}=\mS$ and the initial community features $\mF^{(0)}=\mQ^{\dagger}\mS$. The node features at layers $0 \leq \ell \leq L - 1$ are defined  by
\[ \mH^\steplplus = \sigma\left(\mH^\stepl\mW^\stepl_{1} + \mQ\Theta\left(\mF^\stepl\right)\mW^\stepl_{2}\right), \quad \ \ 
    \mF^\steplplus =  \mQ^{\dagger}\mH^\stepl,\]
where $\Theta$ is a neural network (i.e. multilayer perceptron or MultiHeadAttention), $\mW^\stepl_{1},\mW^\stepl_{2}\in\sR^{D^\stepl\times D^\steplplus}$ are learnable matrices acting on signals in the node space, and $\sigma$ is a non-linearity. We call this method \icg neural network (\icgnn). The final representations $\mH^{\left(L\right)}$ can be used for predicting node-level properties.

We propose another deep \icg method, where $\mF^\stepl\in\sR^{K\times D^{(\ell)}}$ are taken  directly as trainable parameters. Namely, 
\[
    \mH^\steplplus = \sigma\left(\mH^\stepl\mW^\stepl_{s} + \mQ\mF^\stepl \mW^\stepl\right). 
\]
We call this method \icgnnu for the \emph{unconstrained} community features. The motivation is that $\mQ\mF^\stepl$ exhausts the space of all signals that are linear combinations of the communities. If the number of communities is not high, then this is a low dimensional space that does not lead to overfitting in typical machine learning problems. Therefore, there is no need to reduce the complexity of $\mF^\stepl$ by constraining it to be the output of a neural network $\Theta$ on latent representations. The term $\mQ\mF^\stepl$ can hence be interpreted as a type of positional encoding that captures  the community structure.

\paragraph{ICG-NNs for spatio-temporal graphs.} For a fixed graphs with varying node features, the ICG is fitted to the graph once\footnote{We either ignore the signal in the loss, or concatenate random training signals and reduce their dimension to obtain one signal with low dimension $D$. This signal is used as the target for the ICG.}. Given that there are $T$ training signals, each learning step with ICG-NN takes $\gO(TN(K+D))$ rather than $\gO(TED^2)$ for MPNNs. Note that $T$ does not scale the preprocessing time. Hence, the time dimension   amplifies the difference in efficiency between ICG-NNs and MPNNs.

\section{Experiments}
\label{Experiments}
We empirically validate our method on the following experiments:
\begin{itemize}
    \item \textbf{Runtime analysis} (\Cref{subsec:runtime}):  We report the forward pass runtimes of \icgnnu and GCN \cite{Kipf16}, empirically validating the theoretical advantage of the former. We further extend this analysis in \Cref{app:runtime,app:convergence}.
    \item \textbf{Node classification} (\Cref{app:node-classification}): We evaluate our method on real-world node classification datasets  \cite{platonov2023critical,rozemberczki2021multi,lim2021large}, observing that the model performance is competitive with standard approaches.
    \item \textbf{Node classification using Subgraph SGD} (\Cref{subsec:nc_subgraph,app:subgraph-large}): We evaluate our subgraph SGD method (\Cref{Subgraph SGD for fitting icgs to large graphs}) to identify the effect of sampling on the model performance on the tolokers and Flickr datasets \citep{platonov2023critical,zeng2019graphsaint}: we find the model performance to be robust on tolokers and state-of-the-art on Flickr.
    \item \textbf{Spatio-temporal tasks} (\Cref{subsec:spatio-temp}): We evaluate \icgnnu on real-world spatio-temporal tasks \citep{li2017diffusion} and obtain competitive performance to domain-specific baselines.
    \item \textbf{Comparison to graph coarsening methods} (\Cref{app:coarsen}): We provide an empirical comparison between \icgnns and a variety of graph coarsening methods on the Reddit \citep{hamilton2017inductive} and Flickr \citep{zeng2019graphsaint} datasets, where \icgnns achieve state-of-the-art performance.
\end{itemize}
We also perform an ablation study over the number of communities (\Cref{app:communities}), experimentally validate the bound in \cref{thm:reg_sprase} (\Cref{app:norms}) and the choice of initialization in \cref{subsec:init} (\Cref{app:init}) and perform a memory allocation analysis (\Cref{app:memory}).

All our experiments are run on a single NVidia L40 GPU. We made our codebase available online:
\url{\githubrepo}.

\subsection{How does the runtime compare to standard GNNs?}
\label{subsec:runtime}

\begin{wrapfigure}{r}{0.4\textwidth}
    \vspace{-1.8em}
    \centering
    \includegraphics[width=0.9\linewidth]{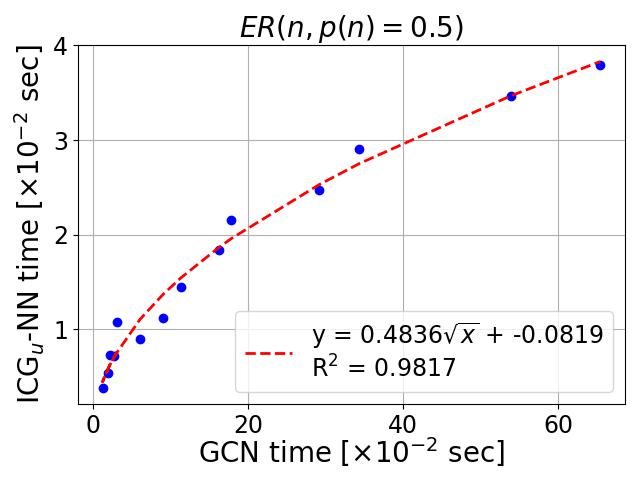}
    \caption{Runtime of K-\icgnnu (for K=100) as a function of GCN forward pass duration on graphs $G \sim \ER(n, p(n)=0.5)$.}
    \label{fig:runtime_main}
    \vspace{-2.2em}
\end{wrapfigure}

\paragraph{Setup.} We compare the forward pass runtimes of our signal processing pipeline (\icgnnu) and GCN \cite{Kipf16}. We sample graphs with up to $7k$ nodes from the \emph{Erd\H{o}s-R{\'e}nyi} distribution $\ER(n, p(n)=0.5)$. Node features are independently drawn from $U[0, 1]$ and the initial feature dimension is $128$. \icgnnu and GCN  use a hidden dimension of $128$, $3$ layers and an output dimension of $5$.

\paragraph{Results.} \cref{fig:runtime_main} reveals a strong square root relationship between the runtime of \icgnnu and the runtime of GCN. This aligns with our expectations, as the time complexity of GCN is $\gO(E)$, while the time complexity of \icgnns is $\gO(N)$, highlighting the computational advantage of using ICGs. We complement this analysis with experiments using sparse \emph{Erd\H{o}s-R{\'e}nyi} graphs in \Cref{app:runtime}. 

\subsection{Node classification using Subgraph SGD}
\label{subsec:nc_subgraph}

\begin{wrapfigure}{r}{0.35\textwidth}
    \vspace{-1.2cm}
    \centering
    \includegraphics[width=0.9\linewidth]{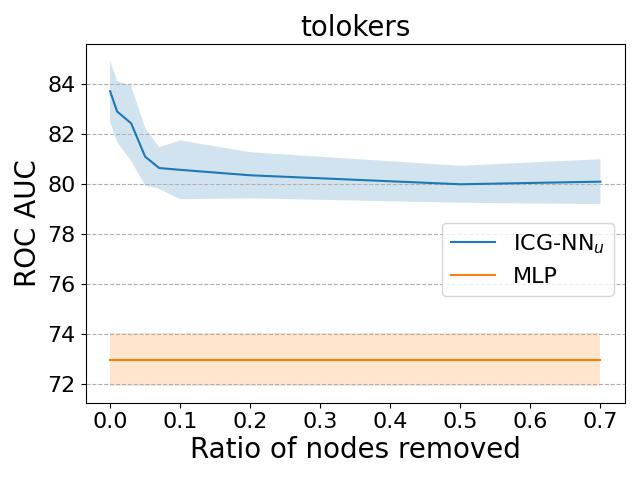}
    \caption{ROC AUC of \icgnnu and an MLP as a function of the $\%$ nodes removed from the graph.}
    \label{fig:subgraph}
    \vspace{-.8cm}
\end{wrapfigure}
\paragraph{Setup.} We evaluate \icgnnu on the non-sparse graphs tolokers \citep{platonov2023critical}, following the 10 data splits of \cite{platonov2023critical}. We report the mean ROC AUC and standard deviation as a function of the ratio of nodes that are removed from the graph. We compare to the baseline of MLP on the full graph.

\paragraph{Results.}  \cref{fig:subgraph}  shows a slight degradation of 2.8\% when a small number of nodes is removed from the graph. However, the key insight is that when more than 10\% of the graph is removed, the performance stops degrading. These results further support our \cref{prop:SGD} in \cref{Learning ICG with subgraph SGD} about the error between subgraph gradients and full-graph gradients of the Frobenius loss, and establish \icgnns as a viable option for learning on large graphs.

\subsection{Spatio-temporal graphs}
\label{subsec:spatio-temp}
\begin{wraptable}{r}{0.43\textwidth}
    \vspace{-.8cm}
    \caption{Results on dense temporal graphs. Top three models are colored by \red{First}, \blue{Second}, \gray{Third}.}
    \centering
    \tinysmall
    \label{tab:spatio-temporal}
    \begin{tabular}{lcc}
    \toprule
    & METR-LA & PEMS-BAY \\
    relative Frob. & 	0.44 & 0.34\\
    \midrule
    DCRNN & 3.22 \stdfont{$\pm$ 0.01} & \gray{1.64} \stdfont{$\pm$ 0.00} \\
    GraphWaveNet & \red{3.05} \stdfont{$\pm$ 0.03} & \red{1.56} \stdfont{$\pm$ 0.01} \\
    AGCRN & \gray{3.16} \stdfont{$\pm$ 0.01} & \blue{1.61} \stdfont{$\pm$ 0.00} \\
    T\&S-IMP & 3.35 \stdfont{$\pm$ 0.01} & 1.70 \stdfont{$\pm$ 0.01} \\
    TTS-IMP & 3.34 \stdfont{$\pm$ 0.01} & 1.72 \stdfont{$\pm$ 0.00} \\
    T\&S-AMP & 3.22 \stdfont{$\pm$ 0.02} & 1.65 \stdfont{$\pm$ 0.00} \\
    TTS-AMP & 3.24 \stdfont{$\pm$ 0.01} & 1.66 \stdfont{$\pm$ 0.00} \\
    \midrule
    \icgnn & 3.42 \stdfont{$\pm$ 0.03} & 1.76 \stdfont{$\pm$ 0.00} \\
    \icgnnu & \blue{3.12} \stdfont{$\pm$ 0.01} & \red{1.56} \stdfont{$\pm$ 0.00} \\
    \bottomrule
  \end{tabular}
  \vspace{-.1cm}
\end{wraptable}
\paragraph{Setup \& baselines.} We evaluate \icgnn and \icgnnu on real-world traffic networks, METR-LA and PEMS-BAY \citep{li2017diffusion} following the methodology described by \cite{cini2024taming}. We segment the datasets into windows of 12 time steps and train the models to predict the subsequent 12 observations. For all datasets, these windows are divided sequentially into 70\% for training, 10\% for validation, and 20\% for testing. We report the mean absolute error (MAE) and standard deviation averaged over the forecastings.
The baselines DCRNN \citep{li2017diffusion}, GraphWaveNet \citep{Wu2019GraphWF}, AGCRN \citep{bai2020adaptive}, T\&S-IMP, TTS-IMP, T\&S-AMP, and TTS-AMP \citep{cini2024taming}, are adopted from \cite{cini2024taming}. We incorporate a GRU to embed the data before inputting it into our \icgnn models, we symmetrize the graph to enable our method to operate on it, and we disregard the features in the ICG approximation (setting $\lambda=0$ in (\ref{eq:loss})). Additionally, we use the Adam optimizer and detail all hyperparameters in \Cref{hyperparameters}. 

\paragraph{Results.} 

\Cref{tab:spatio-temporal} shows that \icgnns achieve competitive performance when compared to methods that are specially tailored for spatio-temporal data such as DCRNN, GraphWaveNet and AGCRN. Despite the small graph size (207 and 325 nodes) and the low ratio of edges (graph densities of $3.54\cdot 10^{-2}$ and $2.24\cdot 10^{-2}$), ICG-NNs perform well, corroborating the discussion in Section \ref{Approximation capabilities of ICGs}. 

\section{Related work}
\label{Related work}
We provide the main related work in this Section. In Appendix \ref{Additional related work} we give an additional review. 

\paragraph{Intersecting communities and stochastic block models.}
We express graphs as intersections of cliques, or communities, similarly to classical works in statistics and computer science \cite{airoldi2008mixed, BigCLAM2013}. 
Our approach can be interpreted as fitting a stochastic-block-model (SBM) to a graph.
As opposed to standard SBM approaches, our method is interpreted as data fitting with norm minimization rather than statistical inference. Similarly to the intersecting community approach of BigCLAM \cite{BigCLAM2013}, our algorithm takes $\gO(E)$ operations. Unlike BigCLAM, however, we can approximate any graph, as guaranteed by the regularity lemma. This is possible since ICGs are allowed to have negative coefficients, while  BigCLAM only uses positive coefficients due to its probabilistic interpretation. To the best of our knowledge, we are the first to propose a SBM fitting algorithm based on the weak regularity lemma. For a survey on SBMs we refer the reader to \cite{SBM2019}.

\paragraph{The Weak Regularity Lemma.} The Regularity Lemma is a central result in graph theory with many variants and many proof techniques. One version is called the \emph{Weak Regularity Lemma} (for graphs \cite{weakReg}, graphons \cite{Szemeredi_analyst}, or graph-signals and graphon-signals \cite{Levie2023}), and has the following interpretation: every graph can be approximated in the cut metric by an \icg, where the error rate $\epsilon$ is uniformly $\gO(K^{-1/2})$\footnote{In some papers the lemma is formulated for non-intersecting blocks $\sum_{i,j}r_{i,j}\boldsymbol{\mathbbm{1}}_{\mathcal{U}_i}\boldsymbol{\mathbbm{1}}_{\mathcal{U}_j}$, where $\mathbin{\mathaccent\cdot\cup}\{\mathcal{U}_j\}=[N]$, with  error $\epsilon=\gO\big((\log(K))^{-1/2}\big)$. The intersecting community case is  an intermediate step of the proof.}. While \cite[Theorem 2]{weakReg} proposes an algorithm for finding the approximating \icg, the algorithm takes $N 2^{\gO(K)}$ time to find this minimizer\footnote{One needs to convert the scaling of the Frobenius and cut norms used in \cite{weakReg} to our scaling and to follow the proof of \cite[Theorem 2]{weakReg} to see this.}. This time complexity is too high to be practical in real-world problems. Alternatively, since the cut metric is defined via a maximization process, finding a minimizing \icg by directly optimizing cut metric via a GD approach involves a min-max problem, which appears numerically problematic in practice. Moreover, computing cut norm is NP-hard \cite{CutNP1,CutNP2}. Instead, we consider a way to bypass the need to explicitly compute the cut norm, finding a $K$ community \icg  with error $\gO(K^{-1/2})$ in the cut metric by minimizing the Frobenius norm. Here, each gradient descent step takes $O(EK)$ operations, and $EK$ is typically much smaller than $N2^K$. As opposed to the algorithm in \cite{weakReg}, our theorem is only semi-constructive, as GD is not guaranteed to find the global minimum of the Frobenius error. Still, our theorem motivates a \emph{practical} strategy for estimating ICGs, while the algorithm formulated in \cite{weakReg} is only of theoretical significance.
We note that for the Szemerédi (non-weak) regularity lemma \cite{SzemerediLemma}, it was shown by \cite{Alon_AlgoSzemeredi} that a regular partition with error $<\epsilon$ into non-intersecting communities (the analogue to ICG in the Szemerédi regularity lemma) can be found in polynomial time with respect to $N$, provided that $N$ is very large ($N\sim2^{\epsilon^{-20}}$).
For more details on the Weak Regularity Lemma, see Appendix \ref{The weak regularity lemma}.

\paragraph{Subgraph methods.}
Learning on large graphs requires sophisticated subgraph sampling techniques for deployment on contemporary processors \cite{GraphSAGE,chen2018fastgcn,ClusterGCN,Bai2020RippleWT,zeng2019graphsaint}. After the preprocessing step on the ICG, our approach allows processing  very large networks more accurately, avoiding subsampling schemes that can alter the properties of the graph in unintended ways. 

\section{Summary}
We introduced an new approach for learning on large non-sparse graphs, using ICGs. We proved a new constructive variant of the weak regularity lemma, which shows that minimizing the Frobenius error between the ICG and the graph leads to a uniformly small error in cut metric. We moreover showed how to optimize the Frobenius error efficiently. We then developed a signal processing setting, operating on the ICG and on the node reprsentatio
ns in $\gO(N)$ complexity. The overall pipeline involves precomputing the ICG approximation of the graph in $\gO(E)$ operations per iteration, and then solving the task on the graph in $\gO(N)$ operations per iteration.
Both fitting an ICG to a large graph, and training a standard subgraph GNNs, require an online subsampling method, reading from slow memory during optimization. However, fitting the ICG is only done once, and does not require an extensive hyperparameter optimization. Then, learning to solve the task on the graph with ICG-NNs is efficient, and can be done directly on the GPU memory. Since the second learning phase is the most demanding part, involving an extensive hyperparameter optimization and architecture tuning, ICG-NN offer a potentially significant advantage over standard subgraph GNNs. This gap between ICG-NNs and MPNNs is further amplified for time series on  graphs.

The main limitation of our method is that the ICG-NN is fitted to a specific ICG, and cannot be na\"ively  transferred between different ICGs approximating different graphs.
Another limitation is the fact that the current ICG construction is limited to undirected graphs, while many graphs, especially spatiotemporal, are directed. One potential avenue for future work is thus to extend the ICG construction to directed graphs. 
Additional future work will study the expressive power of ICGs. 

\section*{Acknowledgement}
The first author is funded by the Clarendon scholarship. This research was supported by the Israel Science Foundation (grant No. 1937/23) and the United States - Israel Binational Science Foundation (NSF-BSF, grant No. 2024660). This work was also partially funded by EPSRC Turing AI World-Leading Research Fellowship No. EP/X040062/1. 

\newpage 
\bibliography{main}

\begin{thebibliography}{10}

\bibitem{airoldi2008mixed}
Edo~M. Airoldi, David Blei, Stephen Fienberg, and Eric Xing.
\newblock Mixed membership stochastic blockmodels.
\newblock {\em NeurIPS}, 2008.

\bibitem{Alon_sparse_cut2015}
Noga Alon.
\newblock Approximating sparse binary matrices in the cut-norm.
\newblock {\em Linear Algebra and its Applications}, 2015.

\bibitem{Alon_AlgoSzemeredi}
Noga Alon, Richard~A Duke, Hanno Lefmann, Vojtech Rodl, and Raphael Yuster.
\newblock The algorithmic aspects of the regularity lemma.
\newblock {\em Journal of Algorithms}, 1994.

\bibitem{CutNP1}
Noga Alon and Assaf Naor.
\newblock Approximating the cut-norm via grothendieck's inequality.
\newblock {\em SIAM}, 2006.

\bibitem{alonOversquashing2021}
Uri Alon and Eran Yahav.
\newblock On the bottleneck of graph neural networks and its practical implications.
\newblock In {\em ICLR}, 2021.

\bibitem{athreya2021estimation}
Avanti Athreya, Minh Tang, Youngser Park, and Carey~E Priebe.
\newblock On estimation and inference in latent structure random graphs.
\newblock {\em Statistical Science}, 2021.

\bibitem{Bai2020RippleWT}
Jiyang Bai, Yuxiang Ren, and Jiawei Zhang.
\newblock Ripple walk training: A subgraph-based training framework for large and deep graph neural network.
\newblock {\em IJCNN}, 2020.

\bibitem{bai2020adaptive}
Lei Bai, Lina Yao, Can Li, Xianzhi Wang, and Can Wang.
\newblock Adaptive graph convolutional recurrent network for traffic forecasting.
\newblock {\em NeurIPS}, 2020.

\bibitem{BianchiSpectralPooling2020}
Filippo~Maria Bianchi, Daniele Grattarola, and Cesare Alippi.
\newblock Spectral clustering with graph neural networks for graph pooling.
\newblock In {\em ICML}, 2020.

\bibitem{deepclustering2020}
Deyu Bo, Xiao Wang, Chuan Shi, Meiqi Zhu, Emiao Lu, and Peng Cui.
\newblock Structural deep clustering network.
\newblock In {\em WWW}, 2020.

\bibitem{borgs2008convergent}
Christian Borgs, Jennifer~T Chayes, L{\'a}szl{\'o} Lov{\'a}sz, Vera~T S{\'o}s, and Katalin Vesztergombi.
\newblock Convergent sequences of dense graphs i: Subgraph frequencies, metric properties and testing.
\newblock {\em Advances in Mathematics}, 2008.

\bibitem{chen2018fastgcn}
Jie Chen, Tengfei Ma, and Cao Xiao.
\newblock Fast{GCN}: Fast learning with graph convolutional networks via importance sampling.
\newblock In {\em ICLR}, 2018.

\bibitem{ClusterGCN}
Wei-Lin Chiang, Xuanqing Liu, Si~Si, Yang Li, Samy Bengio, and Cho-Jui Hsieh.
\newblock Cluster-gcn: An efficient algorithm for training deep and large graph convolutional networks.
\newblock In {\em KDD}, 2019.

\bibitem{chien2020adaptive}
Eli Chien, Jianhao Peng, Pan Li, and Olgica Milenkovic.
\newblock Adaptive universal generalized pagerank graph neural network.
\newblock In {\em ICLR}, 2020.

\bibitem{cini2024taming}
Andrea Cini, Ivan Marisca, Daniele Zambon, and Cesare Alippi.
\newblock Taming local effects in graph-based spatiotemporal forecasting.
\newblock {\em NeurIPS}, 2024.

\bibitem{LatentDisease2020}
Luca Cosmo, Anees Kazi, Seyed-Ahmad Ahmadi, Nassir Navab, and Michael Bronstein.
\newblock Latent-graph learning for disease prediction.
\newblock In {\em Medical Image Computing and Computer Assisted Intervention}, 2020.

\bibitem{FatemiSLAPS2021}
Bahare Fatemi, Layla El~Asri, and Seyed~Mehran Kazemi.
\newblock Slaps: Self-supervision improves structure learning for graph neural networks.
\newblock In {\em NeurIPS}, 2021.

\bibitem{fey2020hierarchical}
Matthias Fey, Jan-Gin Yuen, and Frank Weichert.
\newblock Hierarchical inter-message passing for learning on molecular graphs.
\newblock {\em arXiv}, 2020.

\bibitem{SIGN2020}
Fabrizio Frasca, Emanuele Rossi, Davide Eynard, Ben Chamberlain, Michael Bronstein, and Federico Monti.
\newblock Sign: Scalable inception graph neural networks.
\newblock In {\em ICML: Graph Representation Learning and Beyond (GRL+) Workshop}, 2020.

\bibitem{weakReg}
Alan~M. Frieze and Ravi Kannan.
\newblock Quick approximation to matrices and applications.
\newblock {\em Combinatorica}, 1999.

\bibitem{GDC2019}
Johannes Gasteiger, Stefan Wei\ss{}enberger, and Stephan G\"{u}nnemann.
\newblock Diffusion improves graph learning.
\newblock In {\em NeurIPS}, 2019.

\bibitem{GilmerSRVD17}
Justin Gilmer, Samuel~S. Schoenholz, Patrick~F. Riley, Oriol Vinyals, and George~E. Dahl.
\newblock Neural message passing for quantum chemistry.
\newblock In {\em ICML}, 2017.

\bibitem{hamilton2017inductive}
Will Hamilton, Zhitao Ying, and Jure Leskovec.
\newblock Inductive representation learning on large graphs.
\newblock {\em NeurIPS}, 2017.

\bibitem{GraphSAGE}
William~L. Hamilton, Rex Ying, and Jure Leskovec.
\newblock Inductive representation learning on large graphs.
\newblock In {\em NeurIPS}, 2017.

\bibitem{huang2021scaling}
Zengfeng Huang, Shengzhong Zhang, Chong Xi, Tang Liu, and Min Zhou.
\newblock Scaling up graph neural networks via graph coarsening.
\newblock In {\em KDD}, 2021.

\bibitem{jin2022condensing}
Wei Jin, Xianfeng Tang, Haoming Jiang, Zheng Li, Danqing Zhang, Jiliang Tang, and Bing Yin.
\newblock Condensing graphs via one-step gradient matching.
\newblock In {\em KDD}, 2022.

\bibitem{jin2021graph}
Wei Jin, Lingxiao Zhao, Shichang Zhang, Yozen Liu, Jiliang Tang, and Neil Shah.
\newblock Graph condensation for graph neural networks.
\newblock {\em arXiv}, 2021.

\bibitem{Kazi2020DGM}
Anees Kazi, Luca~Di Cosmo, Seyed-Ahmad Ahmadi, Nassir Navab, and Michael~M. Bronstein.
\newblock Differentiable graph module (dgm) for graph convolutional networks.
\newblock {\em IEEE Transactions on Pattern Analysis and Machine Intelligence}, 2020.

\bibitem{pmlr-v80-kipf18a}
Thomas Kipf, Ethan Fetaya, Kuan-Chieh Wang, Max Welling, and Richard Zemel.
\newblock Neural relational inference for interacting systems.
\newblock In {\em ICML}, 2018.

\bibitem{Kipf2016VariationalGA}
Thomas Kipf and Max Welling.
\newblock Variational graph auto-encoders.
\newblock {\em arXiv}, 2016.

\bibitem{Kipf16}
Thomas Kipf and Max Welling.
\newblock Semi-supervised classification with graph convolutional networks.
\newblock In {\em ICLR}, 2017.

\bibitem{SBM2019}
Clement Lee and Darren Wilkinson.
\newblock A review of stochastic block models and extensions for graph clustering.
\newblock {\em Applied Network Science}, 2019.

\bibitem{Levie2023}
Ron Levie.
\newblock A graphon-signal analysis of graph neural networks.
\newblock In {\em NeurIPS}, 2023.

\bibitem{li2022finding}
Xiang Li, Renyu Zhu, Yao Cheng, Caihua Shan, Siqiang Luo, Dongsheng Li, and Weining Qian.
\newblock Finding global homophily in graph neural networks when meeting heterophily.
\newblock In {\em ICML}, 2022.

\bibitem{li2017diffusion}
Yaguang Li, Rose Yu, Cyrus Shahabi, and Yan Liu.
\newblock Diffusion convolutional recurrent neural network: Data-driven traffic forecasting.
\newblock In {\em ICLR}, 2018.

\bibitem{lim2021large}
Derek Lim, Felix Hohne, Xiuyu Li, Sijia~Linda Huang, Vaishnavi Gupta, Omkar Bhalerao, and Ser~Nam Lim.
\newblock Large scale learning on non-homophilous graphs: New benchmarks and strong simple methods.
\newblock {\em NeurIPS}, 2021.

\bibitem{cut-homo3}
L{\'a}szl{\'o}~Mikl{\'o}s Lov{\'a}sz.
\newblock Large networks and graph limits.
\newblock In {\em volume 60 of Colloquium Publications}, 2012.

\bibitem{Szemeredi_analyst}
L{\'a}szl{\'o}~Mikl{\'o}s Lov{\'a}sz and Bal{\'a}zs Szegedy.
\newblock Szemer{\'e}di’s lemma for the analyst.
\newblock {\em GAFA Geometric And Functional Analysis}, 2007.

\bibitem{Ma_Ginference2019}
Jiaqi Ma, Weijing Tang, Ji~Zhu, and Qiaozhu Mei.
\newblock A flexible generative framework for graph-based semi-supervised learning.
\newblock In {\em NeurIPS}, 2019.

\bibitem{MASKEYtrans2023}
Sohir Maskey, Ron Levie, and Gitta Kutyniok.
\newblock Transferability of graph neural networks: An extended graphon approach.
\newblock {\em Applied and Computational Harmonic Analysis}, 2023.

\bibitem{paolino2023unveiling}
Raffaele Paolino, Aleksandar Bojchevski, Stephan G{\"u}nnemann, Gitta Kutyniok, and Ron Levie.
\newblock Unveiling the sampling density in non-uniform geometric graphs.
\newblock In {\em ICLR}, 2023.

\bibitem{pei2020geom}
Hongbin Pei, Bingzhe Wei, Kevin Chen-Chuan Chang, Yu~Lei, and Bo~Yang.
\newblock Geom-gcn: Geometric graph convolutional networks.
\newblock {\em arXiv}, 2020.

\bibitem{platonov2023critical}
Oleg Platonov, Denis Kuznedelev, Michael Diskin, Artem Babenko, and Liudmila Prokhorenkova.
\newblock A critical look at the evaluation of {GNN}s under heterophily: Are we really making progress?
\newblock In {\em ICLR}, 2023.

\bibitem{CutNP2}
Jiří Rohn.
\newblock Computing the norm $\| a \|_{\infty, 1}$ is np-hard.
\newblock {\em Linear and Multilinear Algebra}, 2000.

\bibitem{rozemberczki2021multi}
Benedek Rozemberczki, Carl Allen, and Rik Sarkar.
\newblock Multi-scale attributed node embedding.
\newblock {\em Journal of Complex Networks}, 2021.

\bibitem{Ruisz_GSP}
Luana Ruiz, Luiz F.~O. Chamon, and Alejandro Ribeiro.
\newblock Graphon signal processing.
\newblock {\em IEEE Transactions on Signal Processing}, 2021.

\bibitem{NLA_Lanczos}
Yousef Saad.
\newblock {\em Numerical methods for large eigenvalue problems: revised edition}.
\newblock SIAM, 2011.

\bibitem{sener2017active}
Ozan Sener and Silvio Savarese.
\newblock Active learning for convolutional neural networks: A core-set approach.
\newblock {\em arXiv}, 2017.

\bibitem{SzemerediLemma}
Endre Szemer{\'e}di.
\newblock {\em Regular partitions of graphs.}
\newblock Stanford University, 1975.

\bibitem{topping2022curvature}
Jake Topping, Francesco~Di Giovanni, Benjamin~Paul Chamberlain, Xiaowen Dong, and Michael~M. Bronstein.
\newblock Understanding over-squashing and bottlenecks on graphs via curvature.
\newblock In {\em ICLR}, 2022.

\bibitem{TrefethenNLA}
Lloyd~N. Trefethen and David Bau.
\newblock {\em Numerical Linear Algebra, Twenty-fifth Anniversary Edition}.
\newblock Society for Industrial and Applied Mathematics, 2022.

\bibitem{vaswani2017attention}
Ashish Vaswani, Noam Shazeer, Niki Parmar, Jakob Uszkoreit, Llion Jones, Aidan~N Gomez, Lukasz Kaiser, and Illia Polosukhin.
\newblock Attention is all you need.
\newblock {\em NeurIPS}, 2017.

\bibitem{velivckovic2017graph}
Petar Veli{\v{c}}kovi{\'c}, Guillem Cucurull, Arantxa Casanova, Adriana Romero, Pietro Li{\`o}, and Yoshua Bengio.
\newblock Graph attention networks.
\newblock In {\em ICLR}, 2018.

\bibitem{VeliPGN2020}
Petar Veliv{c}kovi\'{c}, Lars Buesing, Matthew Overlan, Razvan Pascanu, Oriol Vinyals, and Charles Blundell.
\newblock Pointer graph networks.
\newblock In {\em NeurIPS}, 2020.

\bibitem{AttentionalClustering2019}
Chun Wang, Shirui Pan, Ruiqi Hu, Guodong Long, Jing Jiang, and Chengqi Zhang.
\newblock Attributed graph clustering: A deep attentional embedding approach.
\newblock In {\em IJCAI}, 2019.

\bibitem{wang2024fast}
Lin Wang, Wenqi Fan, Jiatong Li, Yao Ma, and Qing Li.
\newblock Fast graph condensation with structure-based neural tangent kernel.
\newblock In {\em WWW}, 2024.

\bibitem{Wang_latent}
Yue Wang, Yongbin Sun, Ziwei Liu, Sanjay~E. Sarma, Michael~M. Bronstein, and Justin~M. Solomon.
\newblock Dynamic graph cnn for learning on point clouds.
\newblock {\em ACM Transactions on Graphics}, 2019.

\bibitem{welling2009herding}
Max Welling.
\newblock Herding dynamical weights to learn.
\newblock In {\em ICML}, 2009.

\bibitem{adverserialLatentG2019}
Huijun Wu, Chen Wang, Yuriy Tyshetskiy, Andrew Docherty, Kai Lu, and Liming Zhu.
\newblock Adversarial examples for graph data: Deep insights into attack and defense.
\newblock In {\em IJCAI}, 2019.

\bibitem{Wu2019GraphWF}
Zonghan Wu, Shirui Pan, Guodong Long, Jing Jiang, and Chengqi Zhang.
\newblock Graph wavenet for deep spatial-temporal graph modeling.
\newblock In {\em IJCAI}, 2019.

\bibitem{Overlapping2012}
Jaewon Yang and Jure Leskovec.
\newblock Structure and overlaps of communities in networks.
\newblock In {\em SNAKDD}, 2012.

\bibitem{BigCLAM2013}
Jaewon Yang and Jure Leskovec.
\newblock Overlapping community detection at scale: a nonnegative matrix factorization approach.
\newblock In {\em Association for Computing Machinery}, 2013.

\bibitem{diffpool2018}
Zhitao Ying, Jiaxuan You, Christopher Morris, Xiang Ren, Will Hamilton, and Jure Leskovec.
\newblock Hierarchical graph representation learning with differentiable pooling.
\newblock In {\em NeurIPS}, 2018.

\bibitem{graphTrans2019}
Seongjun Yun, Minbyul Jeong, Raehyun Kim, Jaewoo Kang, and Hyunwoo~J Kim.
\newblock Graph transformer networks.
\newblock In {\em NeurIPS}, 2019.

\bibitem{zeng2019graphsaint}
Hanqing Zeng, Hongkuan Zhou, Ajitesh Srivastava, Rajgopal Kannan, and Viktor Prasanna.
\newblock Graphsaint: Graph sampling based inductive learning method.
\newblock In {\em ICLR}, 2019.

\bibitem{zhao2020dataset}
Bo~Zhao, Konda~Reddy Mopuri, and Hakan Bilen.
\newblock Dataset condensation with gradient matching.
\newblock {\em arXiv}, 2020.

\bibitem{zheng2024structure}
Xin Zheng, Miao Zhang, Chunyang Chen, Quoc Viet~Hung Nguyen, Xingquan Zhu, and Shirui Pan.
\newblock Structure-free graph condensation: From large-scale graphs to condensed graph-free data.
\newblock {\em NeurIPS}, 2024.

\bibitem{zhou2023provable}
Houquan Zhou, Shenghua Liu, Danai Koutra, Huawei Shen, and Xueqi Cheng.
\newblock A provable framework of learning graph embeddings via summarization.
\newblock In {\em AAAI}, 2023.

\bibitem{zhu2020beyond}
Jiong Zhu, Yujun Yan, Lingxiao Zhao, Mark Heimann, Leman Akoglu, and Danai Koutra.
\newblock Beyond homophily in graph neural networks: Current limitations and effective designs.
\newblock {\em NeurIPS}, 2020.

\end{thebibliography}
\bibliographystyle{plain}
\newpage
\appendix
\section{The weak regularity lemma}
\label{The weak regularity lemma}
Let $G=\{\mathcal{N},\mathcal{E}\}$ be a graph and $\mathcal{P}=\{\mathcal{N}_1,\ldots,\mathcal{N}_k\}$ be a partition of $\mathcal{N}$. The partition is called \emph{equipartition} if $\abs{\abs{\mathcal{N}_i}-\abs{\mathcal{N}_j}}\leq 1$ for every $1\leq i,j\leq k$. For any two subsets $\mathcal{U},\mathcal{S}\subset \mathcal{N}$, the number of edges from $\mathcal{U}$ to $\mathcal{S}$ are denoted by $e_G(\mathcal{U},\mathcal{S})$.
Given two node set $\mathcal{U},\mathcal{S}\subset \mathcal{N}$, if the edges between $\mathcal{N}_i$ and $\mathcal{N}_j$ were distributed randomly, then, the number of edges between $\mathcal{U}$ and $\mathcal{S}$ would have been close to the expected value
\[
    e_{\mathcal{P}(\mathcal{U},\mathcal{S})}:=\sum_{i=1}^k\sum_{j=1}^k \frac{e_G(\mathcal{N}_i,\mathcal{N}_j)}{\abs{\mathcal{N}_i}\abs{\mathcal{N}_j}} \abs{\mathcal{N}_i\cap \mathcal{U}}\abs{\mathcal{N}_j\cap \mathcal{S}}.  
\]
Thus, the \emph{irregularity}, which measure of how non-random like the  edges between $\{\mathcal{N}_j\}$ are, is defined to be
\begin{equation}
    \label{eq:irreg}
    {\mathrm irreg}_G(\mathcal{P}) = \max_{\mathcal{U},\mathcal{S}\subset \mathcal{N}}\abs{e_G(\mathcal{U},\mathcal{S}) - e_{\mathcal{P}}(\mathcal{U},\mathcal{S})}/\abs{\mathcal{N}}^2.
\end{equation}

\begin{theorem}[Weak Regularity Lemma \cite{weakReg}]
    \label{Theorem:WRL}
    For every $\epsilon>0$ and every graph $G=(\mathcal{N},\mathcal{E})$, there is an equipartition $\mathcal{P}=\{\mathcal{N}_1,\ldots,\mathcal{N}_k\}$ of $\mathcal{N}$ into $k\leq 2^{c/\epsilon^2}$ classes such that ${\mathrm irreg}_G(\mathcal{P}) \leq \epsilon$. Here, $c$ is a universal constant that does not depend on $G$ and $\epsilon$.
\end{theorem}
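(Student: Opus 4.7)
The plan is to follow the classical energy-increment strategy of Frieze--Kannan. Associate to any partition $\mathcal{P}=\{\mathcal{N}_1,\ldots,\mathcal{N}_k\}$ of $\mathcal N$ the piecewise-constant density
\[
    W_{\mathcal P}(x,y) \;=\; d_{ij},\quad (x,y)\in \mathcal{N}_i\times \mathcal{N}_j,\qquad d_{ij}\;=\;\frac{e_G(\mathcal{N}_i,\mathcal{N}_j)}{|\mathcal{N}_i||\mathcal{N}_j|},
\]
and its \emph{index} $s(\mathcal{P}) := \frac{1}{|\mathcal{N}|^2}\sum_{i,j} |\mathcal{N}_i||\mathcal{N}_j|\, d_{ij}^{\,2}\in[0,1]$. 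Because $e_{\mathcal P}(\mathcal{U},\mathcal{S})=\sum_{x\in \mathcal{U}}\sum_{y\in \mathcal{S}} W_{\mathcal P}(x,y)$ and $e_G$ is the analogous sum against the graph's $\{0,1\}$ density $W_G$, the quantity $\mathrm{irreg}_G(\mathcal{P})$ in (\ref{eq:irreg}) is precisely the cut norm of $W_G-W_{\mathcal P}$ (with the $1/|\mathcal N|^2$ normalization). Moreover $W_{\mathcal P}$ is the $L^2$-projection of $W_G$ onto step functions on the $\mathcal P\times\mathcal P$ grid, whence $s(\mathcal P)=\|W_{\mathcal P}\|_{L^2}^2$ and $s$ is monotone under refinement.

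The key step is an \emph{energy-increment lemma}: if $\mathrm{irreg}_G(\mathcal P)>\epsilon$, then there is a refinement $\mathcal{P}'$ of $\mathcal{P}$ with $|\mathcal{P}'|\leq 4|\mathcal{P}|$ and $s(\mathcal{P}')\geq s(\mathcal{P})+\epsilon^2$. Given witnesses $\mathcal{U},\mathcal{S}$ to the violation, refine each class $\mathcal{N}_i$ by intersecting with the four atoms generated by $\mathcal U$ and $\mathcal S$ to obtain $\mathcal{P}'$ of size $\leq 4|\mathcal{P}|$. Since $\mathbf{1}_{\mathcal U\times \mathcal S}$ is a step function on $\mathcal{P}'$, the projection property gives $\langle W_G-W_{\mathcal P'},\mathbf{1}_{\mathcal U\times \mathcal S}\rangle=0$, so
\[
    \langle W_{\mathcal P'}-W_{\mathcal P},\,\mathbf{1}_{\mathcal U\times \mathcal S}\rangle_{L^2}\;=\;\frac{e_G(\mathcal U,\mathcal S)-e_{\mathcal P}(\mathcal U,\mathcal S)}{|\mathcal N|^2}.
\]
Combining with Cauchy--Schwarz, the identity $\|\mathbf{1}_{\mathcal U\times \mathcal S}\|_{L^2}^2=|\mathcal U||\mathcal S|/|\mathcal N|^2\leq 1$, and Pythagoras ($s(\mathcal P')-s(\mathcal P)=\|W_{\mathcal P'}-W_{\mathcal P}\|_{L^2}^2$) yields
\[
    s(\mathcal P')-s(\mathcal P)\;\geq\;\frac{(e_G(\mathcal U,\mathcal S)-e_{\mathcal P}(\mathcal U,\mathcal S))^2}{|\mathcal N|^4}\;>\;\epsilon^2.
\]

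Iterate from the trivial partition $\mathcal{P}_0=\{\mathcal{N}\}$: while $\mathrm{irreg}_G$ exceeds $\epsilon$, refine. As $s\in[0,1]$ grows by more than $\epsilon^2$ per step, the procedure halts within $\lceil 1/\epsilon^2\rceil$ iterations and outputs a partition $\mathcal{Q}$ with at most $4^{\lceil 1/\epsilon^2\rceil}\leq 2^{c_1/\epsilon^2}$ classes. To turn $\mathcal Q$ into an equipartition, pick $M\sim|\mathcal Q|/\epsilon$ and arbitrarily subdivide each class of $\mathcal Q$ into pieces of size $\lfloor|\mathcal N|/M\rfloor$, collecting the at most $|\mathcal Q|$ leftover nodes into one exceptional class. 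Further refinement cannot increase $\mathrm{irreg}_G$, and the exceptional class contributes at most $O(|\mathcal Q|/|\mathcal N|)\ll\epsilon$ to it (assuming $|\mathcal N|\gtrsim 2^{c_1/\epsilon^2}/\epsilon$; otherwise use singletons, for which $\mathrm{irreg}_G=0$). Starting the whole procedure with $\epsilon/2$ and taking $c$ larger than $c_1$ absorbs all constants into the final bound $k\leq 2^{c/\epsilon^2}$.

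I expect the main obstacle to be the quantitative bookkeeping in the energy-increment lemma: the normalization $/|\mathcal N|^2$ built into (\ref{eq:irreg}), the weight $|\mathcal U||\mathcal S|/|\mathcal N|^2$ appearing as the Cauchy--Schwarz denominator, and the definition of $s$ all have to line up so that a $>\epsilon$ irregularity produces exactly a $>\epsilon^2$ index increment. The equipartition conversion is conceptually routine but similarly demands care to verify that the bound $k\leq 2^{c/\epsilon^2}$ survives with a universal $c$ independent of both $G$ and $\epsilon$.
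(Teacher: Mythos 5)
First, note that the paper does not actually prove this statement: Theorem~\ref{Theorem:WRL} is quoted from Frieze--Kannan \cite{weakReg} as background, and the paper's own regularity machinery (Theorem~\ref{thm:reg_graphon} via Lemma~\ref{lem:regHS22}) instead follows the Lov\'asz--Szegedy Hilbert-space route: one looks at the non-increasing sequence of best-approximation errors by $m$-term combinations, finds an index where the error barely drops, and extracts the cut-norm bound from the discriminant of a quadratic. Your proposal is the classical energy-increment proof, which is a genuinely different argument. It buys a concrete iterative construction of the partition and makes the $4^{1/\epsilon^2}$ class count transparent; the paper's abstract route buys something else entirely, namely robustness to \emph{approximate} minimizers (the $\delta$ in Theorem~\ref{thm:reg_sprase}), which is the whole point of their ``constructive'' variant and does not fall out of the energy-increment scheme. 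Your core computation is correct: the identification of $\mathrm{irreg}_G(\mathcal P)$ with the cut norm of $W_G-W_{\mathcal P}$, the orthogonality $\langle W_G-W_{\mathcal P'},\mathbf 1_{\mathcal U\times\mathcal S}\rangle=0$, Cauchy--Schwarz against $\|\mathbf 1_{\mathcal U\times\mathcal S}\|_{L^2}\le 1$, and Pythagoras do line up to give an index increment exceeding $\epsilon^2$, so the iteration halts within $1/\epsilon^2$ steps with at most $4^{1/\epsilon^2}$ classes.

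There is one claim you should not leave as stated: ``further refinement cannot increase $\mathrm{irreg}_G$'' is unjustified, and the cut norm of the residual is \emph{not} monotone under refinement in general. Writing $U=W_G-W_{\mathcal P}$, one has $W_G-W_{\mathcal P'}=U-U_{\mathcal P'}$, and since averaging onto a partition contracts the cut norm, the correct statement is $\|W_G-W_{\mathcal P'}\|_{\square}\le 2\,\|W_G-W_{\mathcal P}\|_{\square}$ --- a factor of $2$, not $1$. This is harmless for the theorem (run the argument with $\epsilon/4$ and enlarge $c$), but the step needs that lemma rather than the monotonicity you assert. Relatedly, parking the leftover nodes in ``one exceptional class'' does not produce an equipartition in the sense defined before the theorem (all class sizes within $1$ of each other); you must redistribute the at most $\epsilon|\mathcal N|$ leftover nodes among the classes and then bound the resulting perturbation of the block densities, which is the standard but genuinely fiddly part of the conversion.
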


The weak regularity lemma states that any large graph $G$ can be represented by the weighted graph $G^{\epsilon}$ with node set $\mathcal{N}^{\epsilon}=\{\mathcal{N}_1,\ldots,\mathcal{N}_k\}$, where the edge weight between the nodes $\mathcal{N}_i$ and $\mathcal{N}_j$ is  $\frac{e_G(\mathcal{N}_i,\mathcal{N}_j)}{\abs{\mathcal{N}_i},\abs{\mathcal{N}_j}}$, which depicts a smaller, coarse-grained version of the large graph. The ``large-scale'' structure of $G$ is given by $G^{\epsilon}$, and the number of edges between any two subsets of nodes $\mathcal{U}_i\subset \mathcal{N}_i$ and $\mathcal{U}_j\subset \mathcal{N}_j$ is close to the  ``expected value'' $e_{\mathcal{P}(\mathcal{U}_i,\mathcal{U}_j)}$. Hence, the deterministic graph $G$ “behaves” as if it was randomly sampled from the ``stochastic block model'' $G^{\epsilon}$.

It can be shown that irregularity coincides with error in cut metric between the graph and its coarsening SBM.
Namely, to see how the cut-norm is related to irregularity (\ref{eq:irreg}), consider a graphon $W_G$  induced by the graph $G=\{\mathcal{N},\mathcal{E}\}$. Let $\mathcal{P}=\{\mathcal{N}_1,\ldots,\mathcal{N}_k\}$ be an equipartition of $\mathcal{N}$. Consider the graph $G^{\mathcal{P}}$ defined by the adjacency matrix $A^{\mathcal{P}} = (a^{\mathcal{P}}_{i,j})_{i,j=1}^{\abs{\mathcal{N}}}$ with
\[a^{\mathcal{P}}_{i,j} = \frac{e_G(\mathcal{N}_{q_i},\mathcal{N}_{q_j})}{\abs{\mathcal{N}_{q_i}},\abs{\mathcal{N}_{q_j}}},\]
where $\mathcal{N}_{q_i}\in\mathcal{P}$ is the class that contains node $i$. 
Now, it can be verified that
\[\norm{W_G-W_{G^{\mathcal{P}}}}_{\square} = {\mathrm irreg}_G(\mathcal{P}).\]
Hence, the weak regularity lemma can be formulated with cut norm instead of irregularity.

\section{Graphon-Signal Intersecting Communities and the Constructive Regularity Lemma}
\label{Graphon-Signal Intersecting Communities and the Constructive Regularity Lemma}

In this section we prove a constructive weak regularity lemma  for graphon-signals, where Theorem \ref{thm:reg_sprase} is a special case. We start by defining graphon-signals in Subsection \ref{Graphon-signals}. We define graphon-signal cut norm and metric in Subsection \ref{A:Cut-distance},  and graphon-signal intersecting communities in \ref{Intersecting community graphons}. In Subsections \ref{A constructive graphon weak regularity lemma} and \ref{Proof of the soft constructive weak regularity lemma} we formulate and prove the constructive graphon-signal weak regularity lemma. Finally, in Subsection \ref{Relation between graphon-signal and graph-signal cut metric} we prove the constructive graph-signal weak regularity lemma as a special case.

For $m,J\in\sN$, the $L_2$ norm of a measurable function $f=(f_i)_{i=1}^j:[0,1]^m\rightarrow\sR^J$,  where $f_i:[0,1]^m\rightarrow\sR$, is defined to be $\norm{f}_2 = \sqrt{\sum_{j=1}^J \int_{[0,1]^m}\abs{f_j(\vx)}^2 d\vx}$.

\subsection{Graphon-signals}
\label{Graphon-signals}
A graphon \cite{borgs2008convergent,cut-homo3} can be seen as a weighted graph with a ``continuous'' node set $[0,1]$. The space of graphons $\gW$ is defined to be the set of all measurable symmetric function  $W:[0,1]^2\rightarrow [0,1]$, $W(x,y)=W(y,x)$.  
Each \emph{edge weight} $W(x,y)\in[0,1]$ of a graphon $W\in \gW$ can be seen as the probability of having an edge between nodes $x$ and $y$.   
Graphs can be seen as special graphons. Let $\mathcal{I}_m=\{I_1,\ldots, I_m\}$ be an \emph{interval equipartition}: a partition of $[0,1]$ into disjoint intervals of equal length.

A graph with an adjacency matrix $\mA=(a_{i,j})_{i,j=1}^N$
\emph{induces} the graphon $W_{\mA}$, defined  by 
$
   W_{\mA}(x,y)=a_{\lceil xm \rceil,\lceil ym \rceil} 
$ \footnote{In the definition of $W_{\mA}$, the convention is that $\lceil 0 \rceil =1$.}. 
Note that $W_{\mA}$ is piecewise constant on the partition $\mathcal{I}_m$.  We hence identify graphs with their induced graphons. 
A graphon can also be seen as a generative model of graphs. Given a graphon $W$, a corresponding random graph is generated  by   sampling  i.i.d. nodes $\{u_n\}$ from the graphon domain $[0,1]$, and connecting each pair $u_n,u_m$ in probability $W(u_n,u_m)$ to obtain the edges of the graph.

The space of grahon-signals is the space of pairs of measurable functions $(W,s)$ of the form $W:[0,1]^2\rightarrow[0,1]$ and $s:[0,1]\rightarrow[0,1]^D$, where $D\in\sN$ is the number of signal channels. Note that datasets with signals in value ranges other than $[0,1]$ can always be transformed (by translating and scaling in the value axis of each channel) to be $[0,1]^D$-valued.
Given a discrete signal $\mS:[N]\rightarrow[0,1]^D$,
we define the induced signal $s_{\mS}$ over $[0,1]$ by $s_{\mS}(x)=s_{\lceil xm \rceil}$. 
We define the \emph{Frobenius distance} between two graphon-signals $(W,s)$ and $(W',s')$ simply as their $L_2$ distance, namely,
\begin{equation}
    \label{eq:Fnorm}
    \norm{(W,s)-(W',s')}_{\mathrm F}:=\sqrt{a\norm{W-W'}^2_{2}+b\sum_{j=1}^D\norm{s_j-s_j'}^2_{2}}
\end{equation}
for some $a,b>0$.

\subsection{Cut-distance}
\label{A:Cut-distance}

The \emph{cut metric} is a natural notion of graph similarity, based on the \emph{cut norm}. The graphon-signal cut norm was defined in \cite{Levie2023}, extending the standard definition to a graphon with node features.

\begin{definition}
    The graphon cut norm of $W\in L^2[0,1]^2$ is defined to be
    \begin{equation}
        \label{eq:GraphonCutNorm0}
        \cutnorm{W} = \sup_{S,T\subset [0,1]}\abs{\int_S\int_TW(x,y)dxdy}.
    \end{equation}

    The signal cut norm of  $s=(s_1,\ldots,s_D)\in (L^2[0,1])^D$ is defined to be
    \begin{equation}
        \label{eq:SignalCutNorm0}
        \cutnorm{s} =  \frac{1}{D}\sum_{j=1}^D\sup_{U\subset [0,1]}\abs{\int_U s_j(x)dx}.
    \end{equation}
    The graphon-signal cut norm of  $(W,s)\in L^2[0,1]^2\times (L^2[0,1])^D$, with weights $\alpha>0$ and $\beta>0$, is defined to be
    \begin{equation}
        \label{eq:GraphonSignalCutNorm0}
        \cutnorm{(W,s)} = \alpha\cutnorm{W} + \beta\cutnorm{s}.
    \end{equation}
\end{definition}
Given two graphons $W,W'$, their distance in \emph{cut metric} is the \emph{cut norm} of their difference, namely
\begin{equation}
    \label{eq:CutNorm}
    \cutnorm{W-W'} = \sup_{S,T\subset [0,1]}\abs{\int_S\int_T\big(W(x,y)-W'(x,y)\big)dxdy}.
\end{equation}
The right-hand-side of (\ref{eq:CutNorm}) is interpreted as the similarity between the adjacencies $W$ and $W'$ on the block on which they are the most dissimilar. 
The \emph{graphon-signal cut metric} is similarly defined to be $\cutnorm{(W,s)-(W',s')}$.
The cut metric between two graph-signals is defined to be the cut metric between their induced graphon-signals.

In Subsection \ref{Relation between graphon-signal and graph-signal cut metric} we show that the graph-signal cut norm of Definition \ref{def:graph-signal_cut} are a special case of graphon-signal cut norm for induced graph-signals.

\subsection{Intersecting community graphons}
\label{Intersecting community graphons}
Here, we define ICGs for graphons. Denote by $\chi$ the set of all  indicator function of measurable subset of $[0,1]$
\[\chi = \{\mathbbm{1}_u\ |\ u\subset[0,1] \text{ measurable}\}.\]
\begin{definition}
    A  set $\sQ$ of bounded measurable functions $q:[0,1]\rightarrow\sR$ that contains $\chi$ is called a \emph{soft affiliation model}.
\end{definition}

\begin{definition}
    Let $D\in\sN$. Given a soft affiliation model $\sQ$, the subset $[\sQ]$ of $L^2[0,1]^2\times (L^2[0,1])^D$ of all elements of the form $(rq(x)q(y),bq(z))$, with $q\in\sQ$, $r\in\sR$ and $b\in\sR^D$, is called the \emph{soft rank-1 intersecting community graphon (ICG) model} corresponding to $\sQ$. Given $K\in\sN$, the subset $[\sQ]_K$ of $L^2[0,1]^2\times (L^2[0,1])^D$ of all linear combinations of $K$ elements of $[\sQ]$ is called the \emph{soft rank-$K$ ICG model} corresponding to $\sQ$. Namely, $(C,p)\in[\sQ]_K$ if and only if it has the form
        \[C(x,y)= \sum_{k=1}^K r_k q_k(x)q_k(y) \quad \text{and} \quad p(z)=\sum_{k=1}^K b_k q_k(z)\]
    for
    $(q_k)_{k=1}^K\in\sQ^K$ called the \emph{community affiliation functions},
    $(r_k)_{k=1}^K\in\sR^K$ called the \emph{community affiliation magnitudes}, and 
    $(b_k)_{k=1}^K\in\sR^{K\times D}$ called the \emph{community features}.
\end{definition}

\subsection{A constructive graphon-signal weak regularity lemma}
\label{A constructive graphon weak regularity lemma}

The following theorem is the semi-constructive version of the weak regularity lemma for intersecting communities of graphons. Recall that $\alpha$ and $\beta$ denote the weights of the graphon-signal cut norm (\ref{eq:GraphonSignalCutNorm0}).
\begin{theorem}\label{thm:reg_graphon}
    Let $D\in\sN$. Let $(W,s)$ be a $D$-channel graphon-signal, $K\in\mathbb{N}$,  $\delta>0$, and let $\mathcal{Q}$ be a soft indicators model. Let $m$ be sampled uniformly from $[K]$, and let $R\geq 1$ such that $K/R\in\mathbb{N}$. 
    Consider the graphon-signal Frobenius norm with weights $\sqrt{\alpha\norm{W}^2_F+\beta\sum_{j=1}^D\norm{s_j}^2_{\mathrm F}}$.
    
    Then, in probability $1-\frac{1}{R}$ (with respect to the choice of $m$), for every $(C^*,p^*)\in[\mathcal{Q}]_m$, 
    \begin{align*}
        & \text{if \quad  } \norm{(W,s)-(C^*,p^*)}_F  \leq (1+\delta) \min_{(C,p)\in [\mathcal{Q}]_n} \|(W,s)-(C,p)\|_{\mathrm F}  \\
        & \text{then \quad } 
        \| (W,s)-(C^*,p^*) \|_{\square}  \leq \\
        & \quad \quad \quad \quad \quad \quad \quad \quad \quad 
        \Big( \frac{3}{2}\sqrt{\alpha^2\|W\|_{\mathrm F}^2+\alpha\beta\norm{s}_{\mathrm F}}+\sqrt{\alpha\beta\|W\|_{\mathrm F}^2+\beta^2\norm{s}_{\mathrm F}}\Big)\sqrt{\frac{R}{K} + \delta}.
    \end{align*}
\end{theorem}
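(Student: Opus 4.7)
My plan is to combine a Frieze--Kannan-style greedy step with an averaging argument over $m$. First, define $F_m := \min_{(C,p) \in [\sQ]_m} \|(W,s)-(C,p)\|_F^2$; since $[\sQ]_m \subseteq [\sQ]_{m+1}$, this sequence is non-increasing with $F_0 = \alpha\|W\|_2^2 + \beta\sum_j \|s_j\|_2^2 =: F_{\max}$ and $F_K \geq 0$. The telescoping bound $\sum_m (F_m - F_{m+1}) \leq F_{\max}$ combined with Markov's inequality then shows that a uniformly random $m \in [K]$ satisfies the ``good event'' $F_m - F_{m+1} \leq F_{\max} R/K$ with probability at least $1 - 1/R$.

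The heart of the argument is a one-step greedy lemma: for any $(C,p) \in [\sQ]_m$, there exists $(C',p') \in [\sQ]_{m+1}$ with $\|(W,s)-(C',p')\|_F^2 \leq \|(W,s)-(C,p)\|_F^2 - \Delta$, where $\Delta \geq \alpha c_0^2\cutnorm{W-C}^2$ for one choice of greedy move and $\Delta \geq \beta\cutnorm{s-p}^2$ for another. For the signal move, the definition of signal cut norm together with pigeonhole furnishes a coordinate $j$ and set $U$ with $|\int_U(s_j - p_j)| \geq \cutnorm{s-p}$; adding the rank-$1$ ICG $(0, \mathbbm{1}_U(r^* e_j)^\top)$ with optimal $r^* = \int_U(s_j-p_j)/|U|$ reduces the signal $L^2$ error by $(\int_U(s_j-p_j))^2/|U| \geq \cutnorm{s-p}^2$ (using $|U| \leq 1$), and hence reduces $F$ by $\beta\cutnorm{s-p}^2$. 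For the graph move, I would start from sets $S, T$ witnessing the cut norm, use the partition of $S \cup T$ into the three disjoint pieces $S \cap T$, $S \setminus T$, $T \setminus S$ to expand $\int_S\int_T(W-C)$ as a sum of four block integrals, and exploit the symmetric-disjoint identity $2\int_X\int_Y(W-C) = \int_{X\cup Y}\int_{X \cup Y}(W-C) - \int_X\int_X(W-C) - \int_Y\int_Y(W-C)$ (valid for disjoint $X,Y$ by symmetry of $W-C$) to extract a single indicator $V \in \chi$ from $\{S \cap T,\ S \setminus T,\ T \setminus S,\ S,\ T,\ S \triangle T\}$ with $|\int_V\int_V(W-C)| \geq c_0\cutnorm{W-C}$ for an absolute constant $c_0$. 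Then the rank-$1$ ICG $(r^*\mathbbm{1}_V\mathbbm{1}_V^\top, 0)$ reduces $\|W-C\|_2^2$ by $(\int_V\int_V(W-C))^2/|V|^2 \geq c_0^2\cutnorm{W-C}^2$, contributing $\alpha c_0^2\cutnorm{W-C}^2$ to $F$.

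To close the argument, apply the lemma to the $(1+\delta)$-approximate minimiser $(C^*, p^*) \in [\sQ]_m$ so that $\|(W,s)-(C^*,p^*)\|_F^2 \leq (1+\delta)^2 F_m$. Writing $\eta_1 := \cutnorm{W-C^*}$ and $\eta_2 := \cutnorm{s-p^*}$, each of the two candidate moves produces a valid upper bound on $F_{m+1}$. Combining with the good-event lower bound $F_{m+1} \geq F_m - F_{\max} R/K$, using $(1+\delta)^2 - 1 \leq 3\delta$ (valid for $\delta \leq 1$; otherwise the theorem is vacuous), and $F_m \leq F_{\max}$, I obtain two independent bounds $\alpha c_0^2 \eta_1^2 \leq F_{\max}(R/K + \delta)$ and $\beta\eta_2^2 \leq F_{\max}(R/K + \delta)$, after absorbing the factor-$3$ slack into the multiplicative constants. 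Taking square roots and using the algebraic identities $\sqrt{\alpha F_{\max}} = \sqrt{\alpha^2\|W\|_F^2 + \alpha\beta\|s\|_F^2}$ and $\sqrt{\beta F_{\max}} = \sqrt{\alpha\beta\|W\|_F^2 + \beta^2\|s\|_F^2}$, then summing since $\cutnorm{(W,s)-(C^*,p^*)} = \alpha\eta_1 + \beta\eta_2$, recovers the theorem provided $1/c_0 = 3/2$.

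I expect the main obstacle to be the graph greedy step, specifically sharpening the constant $c_0$ to $2/3$ (which produces the $3/2$ coefficient of the statement): the naive pigeonhole-plus-symmetric-identity argument I sketched only gives $c_0 \gtrsim 1/6$, and tightening requires carefully optimising over the six indicator candidates together with the sign and magnitude of the rank-$1$ correction $r^*$. The obstruction here is genuinely structural: since the cut-norm witness $(S,T)$ is asymmetric while $[\chi]$ only supplies symmetric rank-$1$ pieces $q \otimes q$, the symmetric reduction is unavoidable when the soft affiliation model is just $\sQ = \chi$. Everything else---the signal move, the telescoping/averaging, and handling the approximation factor---is routine; the only additional bookkeeping is the passage from $(1+\delta)^2 - 1 = 2\delta + \delta^2$ to the clean $\delta$ appearing in the bound, which I would absorb into the leading constants.
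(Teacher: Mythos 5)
Your proposal is correct and is, at its core, the same argument as the paper's: the telescoping/pigeonhole step over a random $m$ is identical, and your "one-step greedy lemma" is exactly the paper's Hilbert-space perturbation argument (the optimal rank-one correction $t w$ decreases the squared Frobenius error by $\langle w, g-g^*\rangle^2/\|w\|^2$, which is the discriminant condition in the paper's Lemma on $\norm{w}^2t^2-2\ip{w}{g-g^*}t+\dots\ge 0$), followed by the same symmetric decomposition of the cut-norm witness into rank-one indicator terms. The one thing I would push back on is the obstacle you flag as the "heart" of the difficulty: it is not there. For disjoint witnesses $S,T$, the very identity you write down gives $2\abs{\int_S\int_T(W-C)}\le\abs{\int_{S\cup T}\int_{S\cup T}}+\abs{\int_S\int_S}+\abs{\int_T\int_T}\le 3\max_V\abs{\int_V\int_V(W-C)}$, i.e.\ $c_0=2/3$ immediately by pigeonhole over \emph{three} candidates, not six --- this is precisely where the paper's $3/2$ comes from, and there is no optimization over signs or magnitudes to do. The only genuine residual issue is that the cut-norm supremum in \eqref{eq:GraphonCutNorm0} ranges over possibly overlapping $S,T$, for which the three-term identity fails; the clean fix is the four-term identity $2\int_S\int_T F=\int_{S\cup T}\int_{S\cup T}F+\int_{S\cap T}\int_{S\cap T}F-\int_{S\setminus T}\int_{S\setminus T}F-\int_{T\setminus S}\int_{T\setminus S}F$ (valid for symmetric $F$), which yields $c_0=1/2$ and hence a constant $2$ in place of $3/2$ --- a caveat that applies equally to the paper's own proof, which silently assumes disjointness. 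Your remaining losses ($\sqrt{R/K+3\delta}$ in place of $\sqrt{R/K+\delta}$ from the $(1+\delta)^2$ bookkeeping) are benign constant-factor slack of the same kind the paper itself elides.
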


\subsection{Proof of the soft constructive weak regularity lemma}
\label{Proof of the soft constructive weak regularity lemma}
In this subsection we prove the constructive weak graphon-signal regularity lemma.

\subsubsection{Constructive regularity lemma in Hilbert spaces}
The classical proof of the weak regularity lemma for graphons is given in \citep{Szemeredi_analyst}. The lemma is a corollary of a regularity lemma in Hilbert spaces. Our goal is to extend this lemma to be constructive. For comparison, let us first write the classical result, from \citep[Lemma 4]{Szemeredi_analyst}, even though we do not use this result directly.
\begin{lemma}[\citep{Szemeredi_analyst}]
    \label{fact:szlemma}
    Let $\mathcal{K}_1, \mathcal{K}_2,\ldots$ be arbitrary nonempty subsets (not necessarily subspaces) of a real Hilbert space $\mathcal{H}$. Then, for every $\epsilon>0$ and $g\in \mathcal{H}$ there is  $m\leq \lceil 1/\epsilon^2 \rceil$  and  $(f_i\in \mathcal{K}_i)_{i=1}^{m}$ and $(\gamma_i \in \sR)_{i=1}^m$,   such that for every $w\in \mathcal{K}_{m+1}$
    \begin{align}
    \label{l:hilbert}
        \bigg|\ip{w}{g-(\sum_{i=1}^{m}\gamma_i f_i)}\bigg|\leq \epsilon\ \norm{w}\norm{g}.
    \end{align}
\end{lemma}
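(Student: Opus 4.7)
The plan is to prove the lemma by a greedy energy-increment construction. I build the sequence $(f_i,\gamma_i)$ iteratively, and show that whenever the desired inequality (\ref{l:hilbert}) fails at step $j{+}1$, I can extract a witness $w\in \mathcal{K}_{j+1}$ and use it to reduce the residual $\|g-h_j\|^2$ by a quantitative amount, where $h_j:=\sum_{i=1}^{j}\gamma_i f_i$. Since $\|g-h_0\|^2=\|g\|^2$ is finite and the residual cannot go negative, only finitely many such steps are possible, and the bound on $m$ comes out of the resulting counting argument.

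Concretely, I set $h_0:=0$ and, given $h_j$, ask whether (\ref{l:hilbert}) already holds for every $w\in\mathcal{K}_{j+1}$; if so I stop with this $m=j$. Otherwise I pick some violating $w\in\mathcal{K}_{j+1}$ and define
\[
f_{j+1}:=w,\qquad \gamma_{j+1}:=\frac{\langle g-h_j,\,w\rangle}{\|w\|^2},\qquad h_{j+1}:=h_j+\gamma_{j+1}f_{j+1}.
\]
By construction, $g-h_{j+1}$ is orthogonal to $f_{j+1}$, and hence to $h_{j+1}-h_j=\gamma_{j+1}f_{j+1}$. The Pythagorean identity then gives
\[
\|g-h_j\|^2 \;=\; \|g-h_{j+1}\|^2 + |\gamma_{j+1}|^2\,\|f_{j+1}\|^2 \;=\; \|g-h_{j+1}\|^2 + \frac{|\langle g-h_j,w\rangle|^2}{\|w\|^2}.
\]
By the failure of (\ref{l:hilbert}) at the chosen $w$, the last term strictly exceeds $\epsilon^2\|g\|^2$. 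Hence each iteration strictly decreases $\|g-h_j\|^2$ by more than $\epsilon^2\|g\|^2$. Starting from $\|g-h_0\|^2=\|g\|^2$, this can happen at most $\lceil 1/\epsilon^2\rceil$ times, so the process must terminate with some $m\leq \lceil 1/\epsilon^2\rceil$, and at that stage (\ref{l:hilbert}) holds for every $w\in\mathcal{K}_{m+1}$ as required.

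This is the standard Hilbert-space energy-increment argument underlying the weak regularity lemma, so the main obstacle is not conceptual. The two points that deserve care are: (i) the Pythagorean step relies only on the orthogonality coming from the choice of $\gamma_{j+1}$, so it does not require the $\mathcal{K}_i$ to be subspaces (they are only assumed to be nonempty subsets); and (ii) the degenerate case $\|w\|=0$ is harmless, since a witness violating (\ref{l:hilbert}) must satisfy $|\langle w,g-h_j\rangle|>0$ and therefore $\|w\|>0$. If one also wants the slightly sharper statement where the bound is stated as $\epsilon\|w\|\|g\|$ on the right-hand side (rather than $\epsilon\|w\|\|g-h_j\|$, which would give a convergent geometric series), this is exactly what the above accounting delivers, since $\|g\|^2$ is used as the global energy budget throughout.
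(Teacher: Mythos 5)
Your greedy energy-increment argument is correct: the choice of $\gamma_{j+1}$ makes $g-h_{j+1}$ orthogonal to $f_{j+1}$, the Pythagorean identity needs no linear structure on the $\mathcal{K}_i$, a violating $w$ is automatically nonzero, and the budget $\norm{g}^2$ forces termination after fewer than $1/\epsilon^2$ steps, at which point the conclusion holds for all $w\in\mathcal{K}_{m+1}$. Note, however, that the paper states this lemma without proof (it is quoted from the cited reference and explicitly not used directly); the proof the paper actually carries out is for its constructive variant, Lemma \ref{lem:regHS22}, and that proof takes a genuinely different route. Instead of greedily selecting witnesses, it considers the monotone sequence $\eta_k$ of (near-)optimal approximation errors over $[\mathcal{Q}]_k$, uses a pigeonhole argument to find indices $m$ at which $\eta_m\leq\eta_{m+1}+\tfrac{R(1+\delta)}{K}\norm{g}^2$, and then bounds $\ip{w}{g-g^*}$ via the nonpositivity of the discriminant of the quadratic $t\mapsto\norm{g-(g^*+tw)}^2$. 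The payoff of that extra machinery is exactly what your construction cannot deliver: the conclusion holds for \emph{every} approximate minimizer $g^*$ of the Hilbert-space norm at a randomly chosen rank $m$ (with quantified probability), not merely for the one specific approximant produced by a greedy sweep. This is essential for the paper's algorithm, which finds $g^*$ by gradient descent on the Frobenius loss rather than by an explicit incremental procedure. Your proof is a clean and complete derivation of the classical statement as quoted, but it would not substitute for the proof of Lemma \ref{lem:regHS22}.
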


Next, we show how to modify the result to be written explicitly with an approximate minimizer of a Hilbert space norm minimization problem. The proof follows the step of \citep{Szemeredi_analyst} with some modifications required for explicitly constructing an approximate optimizer. 

\begin{lemma}\label{lem:regHS22}
    Let $\{\mathcal{K}_j\}_{j\in\mathbb{N}}$ be a sequence of nonemply subsets of a real Hilbert space $\mathcal{H}$ and let $\delta\geq 0$. Let $K>0$, let $R\geq 1$ such that $K/R\in\mathbb{N}$, and let $g\in\mathcal{H}$.
    Let $m$ be randomly uniformly sampled from $[K]$.
    Then, in probability $1-\frac{1}{R}$ (with respect to the choice of $m$), any vector of the form 
    \[g^*=\sum_{j=1}^m \gamma_j f_j \quad \text{\ \ such that\ \ } \quad \boldsymbol{\gamma}=(\gamma_j)_{j=1}^m\in\mathbb{R}^m \quad \text{and} \quad \mathbf{f}=(f_j)_{j=1}^m\in \mathcal{K}_1 \times \ldots \times \mathcal{K}_m\]
    that gives a close-to-best Hilbert space approximation of $g$ in the sense that 
    \begin{equation}
        \label{eq:reg_min}
        \| g- g^*\| \leq (1+\delta)\inf_{\boldsymbol{\gamma}, \mathbf{f}}\|g-\sum_{i=1}^m\gamma_if_i\|,
    \end{equation}
    where the infimum is over $\boldsymbol{\gamma}\in\sR^m$ and $ \mathbf{f}\in\mathcal{K}_1 \times \ldots \times \mathcal{K}_m$,
    also satisfies
    \[\forall w\in \mathcal{K}_{m+1} , \quad \abs{\langle w, g-g^* \rangle} \leq \norm{w}\norm{g}\sqrt{\frac{R}{K} + \delta }.\]
\end{lemma}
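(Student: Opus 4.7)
The strategy is to track the sequence of optimal squared approximation errors
\[a_m \;:=\; \inf\Big\{ \big\|g - \textstyle\sum_{i=1}^m \gamma_i f_i\big\|^2 \;:\; \boldsymbol{\gamma}\in\sR^m,\ (f_i)\in\mathcal{K}_1\times\cdots\times\mathcal{K}_m \Big\}\]
for $m = 0, 1, \ldots, K{+}1$, with the convention $a_0 = \|g\|^2$. The sequence $(a_m)$ is monotonically non-increasing, since any $m$-term approximation can be viewed as an $(m{+}1)$-term approximation by setting the extra coefficient to zero. Telescoping then yields
\[\sum_{m=1}^{K}(a_m - a_{m+1}) \;=\; a_1 - a_{K+1} \;\leq\; a_0 \;=\; \|g\|^2.\]

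The random choice of $m\in[K]$ and the pigeonhole principle control the per-step decrement. Since the total decrement is at most $\|g\|^2$ over $K$ terms, at most $K/R$ of the indices $m\in[K]$ can satisfy $a_m - a_{m+1} > R\|g\|^2/K$. Hence, with probability at least $1-\tfrac{1}{R}$ over the uniform draw of $m$, we have the ``good-index'' bound
\[a_m - a_{m+1} \;\leq\; \frac{R\,\|g\|^2}{K}.\]
Condition on this event for the remainder of the argument.

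The core step converts the definition of $a_{m+1}$ into a one-step improvement bound on $\langle g - g^*, w\rangle$. Given $g^* = \sum_{j=1}^m \gamma_j f_j$ satisfying the hypothesis and any $w \in \mathcal{K}_{m+1}$, the vector $g^* + \gamma w$ is a valid $(m{+}1)$-term approximation for every $\gamma\in\sR$, so
\[\|g - g^* - \gamma w\|^2 \;\geq\; a_{m+1}.\]
Expanding and optimizing the left-hand side over $\gamma$ (minimum at $\gamma = \langle g - g^*, w\rangle/\|w\|^2$) gives
\[\|g - g^*\|^2 \;-\; \frac{\langle g - g^*,\,w\rangle^2}{\|w\|^2} \;\geq\; a_{m+1},
\qquad\text{so}\qquad
\langle g - g^*,\,w\rangle^2 \;\leq\; \|w\|^2\bigl(\|g - g^*\|^2 - a_{m+1}\bigr).\]

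It remains to bound $\|g-g^*\|^2 - a_{m+1}$. The approximate-minimizer hypothesis gives $\|g - g^*\|^2 \leq (1+\delta)^2 a_m$, which I split as
\[\|g-g^*\|^2 - a_{m+1} \;=\; (a_m - a_{m+1}) \;+\; \bigl((1+\delta)^2 - 1\bigr)\,a_m \;\leq\; \|g\|^2\Bigl(\tfrac{R}{K} + O(\delta)\Bigr),\]
using $a_m \leq a_0 = \|g\|^2$ together with the good-index bound. Taking square roots gives $|\langle w, g-g^*\rangle| \leq \|w\|\|g\|\sqrt{R/K + \delta}$ after absorbing the $O(\delta)$ constant into the statement's $\delta$. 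The main subtlety here is not the algebra but the interplay between the probabilistic choice of $m$ and the approximate (rather than exact) minimization: the proof works only because we can simultaneously afford (i) an unused ``slot'' $m{+}1$ on which to perturb and (ii) an upper bound on the gap $a_m - a_{m+1}$, and this is exactly what the uniform sampling on $[K]$ provides via the telescoping sum.
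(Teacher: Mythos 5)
Your proof is correct and follows essentially the same route as the paper's: a pigeonhole/telescoping argument on the sequence of optimal squared errors to locate a good index $m$ with small decrement, followed by perturbing $g^*$ in the direction $w\in\mathcal{K}_{m+1}$ and optimizing the coefficient (your explicit minimization over $\gamma$ is exactly the paper's non-positive-discriminant condition). The one discrepancy is the constant multiplying $\delta$: your derivation actually yields $\sqrt{R/K+2\delta+\delta^2}$, and ``absorbing $O(\delta)$ into the statement's $\delta$'' is not literally licensed since $\delta$ is fixed in the statement --- but the paper's own proof has the same looseness (it defines $\eta_m$ with a single factor $(1+\delta)$ while the stated hypothesis squares to $(1+\delta)^2$), so this is a cosmetic issue shared with the original rather than a gap in your argument.
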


\begin{proof}
Let $K>0$.
Let $R\geq 1$ such that $K/R\in\mathbb{N}$. 
For every $k$, let
\[\eta_k = (1+\delta) \inf_{\boldsymbol{\gamma}, \mathbf{f}}\|g-\sum_{i=1}^k\gamma_if_i\|^2\]
where the infimum is over $\boldsymbol{\gamma}= \{\gamma_1,\ldots,\gamma_k\} \in \sR^k$ and 
$\mathbf{f}=\{f_1,\ldots,f_k\}\in\mathcal{K}_1\times \ldots \times \mathcal{K}_k$. Note that $\norm{g}^2\geq \frac{\eta_1}{1+\delta}\geq\frac{\eta_2}{1+\delta}\geq\ldots\geq 0$.  
Therefore, there is a subset of at least $(1-\frac{1}{R})K+1$ indices  $m$ in $[K]$ such that $\eta_m \leq \eta_{m+1} + \frac{R(1+\delta)}{K}\norm{g}^2$.  
Otherwise, there are $\frac{K}{R}$ indices $m$ in $[K]$ such that $\eta_{m+1}  < \eta_{m}- \frac{R(1+\delta)}{K}\norm{g}^2$, which means that
\[\eta_K < \eta_1 - \frac{K}{R}\frac{R(1+\delta)}{K}\norm{g}^2 \leq (1+\delta)\norm{g}^2 - (1+\delta)\norm{g}^2=0,\]
which is a contradiction to the fact that $\eta_K\geq 0$.

Hence, there is a set $\mathcal{M}\subseteq [K]$ of $(1-\frac{1}{R})K$ indices such that for every  $m\in \mathcal{M}$,  every
\begin{equation}
    \label{eq:Hreg_f}
    g^*=\sum_{j=1}^m \gamma_j f_j
\end{equation}
that satisfies
\[\norm{g-g^*}^2\leq\eta_m\]
also satisfies
\begin{equation}
    \label{eq:ReH_P}
    \norm{g-g^*}^2 \leq \eta_{m+1}+\frac{(1+\delta)R}{K}\norm{g}^2.
\end{equation}
Let $w\in\mathcal{K}_{m+1}$. By the definition of $\eta_{m+1}$, we have for every $t\in\sR$,
\[\norm{g-(g^*+tw)}^2 \geq \frac{\eta_{m+1}}{1+\delta}\geq \frac{\norm{g-g^*}^2}{1+\delta}-\frac{R}{K}\norm{g}^2.\]
This can be written as
\begin{equation}
    \label{eq:Hreg_quad}
    \forall t\in\mathbb{R}, \quad \norm{w}^2t^2 - 2\ip{w}{g-g^*}t + \frac{R}{K}\norm{g}^2+ (1-\frac{1}{1+\delta})\norm{g-g^*}^2\geq 0. 
\end{equation}
The discriminant of this quadratic polynomial is
\[4\ip{w}{g-g^*}^2 - 4\norm{w}^2\Big(\frac{R}{K}\norm{g}^2 + (1-\frac{1}{1+\delta})\norm{g-g^*}^2\Big) \]
and it must be non-positive to satisfy the inequality (\ref{eq:Hreg_quad}), namely
\begin{align*}
    4\ip{w}{g-g^*}^2  &\leq  4\norm{w}^2\Big(\frac{R}{K}\norm{g}^2 + (1-\frac{1}{1+\delta})\norm{g-g^*}^2\Big)\\
    &\leq 4\norm{w}^2\Big(\frac{R}{K}\norm{g}^2 + (1-\frac{1}{1+\delta}) \eta_m\Big)\\
    &\leq 4\norm{w}^2\Big(\frac{R}{K}\norm{g}^2 + (1-\frac{1}{1+\delta}) (1+\delta)\norm{g}^2\Big)
\end{align*}
which proves 
\[\ip{w}{g-g^*}\leq \norm{w}\norm{g}\sqrt{\frac{R}{K}+ \delta }.\]
This is also true for $-w$, which concludes the proof.
\end{proof}

\subsubsection{Proof of the Soft Constructive Weak Regularity Lemma for Graphon-Signals}
For two functions $f,g:[0,1]\rightarrow\sR$, we denote by $f\otimes g:[0,1]^2\rightarrow\sR$ the function
\[f\otimes g(x,y)=f(x)g(y).\]

We recall here Theorem \ref{thm:reg_graphon} for the convenience of the reader.
\begin{theoremcopy}{\ref{thm:reg_graphon}}
    Let $(W,s)$ be a graphon-signal, $K\in\mathbb{N}$,  $\delta>0$, and let $\mathcal{Q}$ be a soft indicators model. Let $m$ be sampled uniformly from $[K]$, and let $R\geq 1$ such that $K/R\in\mathbb{N}$. 
    Consider the graphon-signal Frobenius norm with weights $\sqrt{\alpha\norm{W}^2_F+\beta\sum_{j=1}^D\norm{s_j}^2_{\mathrm F}}$.
    Then, in probability $1-\frac{1}{R}$ (with respect to the choice of $m$), for every $(C^*,p^*)\in[\mathcal{Q}]_m$, 
    \begin{align*}
        & \text{if \quad  } \norm{(W,s)-(C^*,p^*)}_F  \leq (1+\delta) \min_{(C,p)\in [\mathcal{Q}]_n} \|(W,s)-(C,p)\|_{\mathrm F}  \\
        & \text{then \quad } 
        \| (W,s)-(C^*,p^*) \|_{\square}  \leq \\
        & \quad \quad \quad \quad \quad \quad \quad \quad \quad 
        \Big( \frac{3}{2}\sqrt{\alpha^2\|W\|_{\mathrm F}^2+\alpha\beta\norm{s}_{\mathrm F}}+\sqrt{\alpha\beta\|W\|_{\mathrm F}^2+\beta^2\norm{s}_{\mathrm F}}\Big)\sqrt{\frac{R}{K} + \delta}.
    \end{align*}
\end{theoremcopy}

The proof follows the techniques of a part of the proof of the weak graphon regularity lemma from \citep{Szemeredi_analyst}, while tracking the approximate minimizer in our formulation of Lemma \ref{lem:regHS22}. This requires a probabilistic setting, and extending to soft indicators models.
We note that the weak regularity lemma in \citep{Szemeredi_analyst} is formulated for non-intersecting blocks, but the intersecting community version is an intermediate step in its proof.

\begin{proof}
    Let us use Lemma \ref{lem:regHS22}, with $\mathcal{H}= L^2[0,1]^2\times (L^2[0,1])^D$ with the weighted inner product
    \[\ip{(W,s)}{(W',s')} = \alpha\iint_{[0,1]^2}W(x,y)W'(x,y)dxdy + \beta\sum_{j=1}^D\int_{[0,1]}s_j(x)s_j'(x)dx,\]
    and corresponding norm $\sqrt{\alpha\norm{W}^2_F+\beta\sum_{j=1}^D\norm{s_j}^2_{\mathrm F}}$ and corresponding weighted inner product, and $\mathcal{K}_j=[\mathcal{Q}]$. Note that the Hilbert space norm is the Frobenius norm in this case. In the setting of the lemma, we take $g=(W,s)$, and $g^*\in [\mathcal{Q}]_m$. By the lemma, in the event of probability $1-1/R$, any approximate Frobenius minimizer $(C^*,p^*)$, namely, that satisfies $\norm{(W,s)-(C^*,p^*)}_{\mathrm F} \leq (1+\delta) \min_{(C,p)\in [\mathcal{Q}]_m} \|(W,s)-(C,p)\|_{\mathrm F} $, also satisfies
    \[\langle (T,y), (W,s)-(C^*,p^*) \rangle \leq \|(T,y)\|_{\mathrm F}\|(W,s)\|_{\mathrm F} \sqrt{\frac{R}{K} + \delta} \]
    for every $(T,y)\in [\mathcal{Q}]$.
    Hence, for every choice of measurable subsets $\mathcal{S},\mathcal{T}\subset[0,1]$, we have
    \begin{align*}
        &\left|\int_{\mathcal{S}}\int_{\mathcal{T}} (W(x,y)-C^*(x,y))dxdy\right|\\
        &=\frac{1}{2}\left|\int_{\mathcal{S}}\int_{\mathcal{T}} (W(x,y)-C^*(x,y))dxdy + \int_{\mathcal{T}}\int_{\mathcal{S}} (W(x,y)-C^*(x,y))dxdy\right|\\
        &=\frac{1}{2}\left|\int_{{\mathcal{S}}\cup {\mathcal{T}}}\int_{{\mathcal{S}}\cup {\mathcal{T}}} (W(x,y)-C^*(x,y))dxdy \ - \ \int_{\mathcal{S}}\int_{\mathcal{S}} (W(x,y)-C^*(x,y))dxdy \
        \right.\\
        &\quad\quad\quad\quad\quad\quad\quad\quad\quad\quad\quad\quad\quad\quad\quad\quad\quad\quad \left. - \ \int_{\mathcal{T}}\int_{\mathcal{T}} (W(x,y)-C^*(x,y))dxdy\right|\\
        &\leq \abs{\frac{1}{2\alpha}\ip{(\mathbbm{1}_{{\mathcal{S}}\cup \mathcal{T}}\otimes\mathbbm{1}_{{\mathcal{S}}\cup {\mathcal{T}}},0)} {(W,s)-(C^*,p^*) }} +\abs{\frac{1}{2\alpha}\ip{(\mathbbm{1}_{{\mathcal{S}}}\otimes \mathbbm{1}_{{\mathcal{S}}},0)} {(W,s)-(C^*,p^*) }}\\
        &\quad\quad\quad +\abs{\frac{1}{2\alpha}\ip{(\mathbbm{1}_{{\mathcal{T}}}\otimes\mathbbm{1}_{{\mathcal{T}}},0)} {(W,s)-(C^*,p^*) }}\\
        &\leq \frac{1}{2\alpha}\|(W,s)\|_{\mathrm F}\Big(\|(\mathbbm{1}_{{\mathcal{S}}\cup {\mathcal{T}}}\otimes\mathbbm{1}_{{\mathcal{S}}\cup {\mathcal{T}}},0)\|_{\mathrm F} +\|(\mathbbm{1}_{{\mathcal{S}}}\otimes\mathbbm{1}_{{\mathcal{S}}},0)\|_{\mathrm F} + \|(\mathbbm{1}_{{\mathcal{T}}}\otimes\mathbbm{1}_{{\mathcal{T}}},0)\|_{\mathrm F}\Big)\sqrt{\frac{R}{K} + \delta}\\
        & \leq \frac{3}{2\alpha}\sqrt{\alpha\|W\|_{\mathrm F}^2+\beta\norm{s}_{\mathrm F}}\sqrt{\alpha}\sqrt{\frac{R}{K} + \delta}
    \end{align*}
    Hence, taking the supremum over $\mathcal{S},\mathcal{T}\subset[0,1]$,  we also have
    \[\alpha\norm{W-C^*}_{\square}\leq \frac{3}{2}\sqrt{\alpha^2\|W\|_{\mathrm F}^2+\alpha\beta\norm{s}_{\mathrm F}}\sqrt{\frac{R}{K} + \delta}.\]
    Similarly, for every measurable ${\mathcal{T}}\subset[0,1]$ and every standard basis element $\vb=(\delta_{j,i})_{i=1}^D$ for any $j\in[D]$,
    \begin{align*}
        &\left|\int_{\mathcal{T}} (s_j(x)-p_j^*(x))dx\right|\\
        &= \abs{\frac{1}{\beta}\ip{(0,\vb\mathbbm{1}_{{\mathcal{T}}})} {(W,s)-(C^*,p^*) }} \\
        &\leq \frac{1}{\beta}\|(W,s)\|_{\mathrm F} \|(0,\vb\mathbbm{1}_{{\mathcal{T}}})\|_{\mathrm F} \sqrt{\frac{R}{K} + \delta}\\
        & \leq \frac{1}{\beta}\sqrt{\alpha\|W\|_{\mathrm F}^2+\beta\norm{s}_{\mathrm F}}\sqrt{\beta}\sqrt{\frac{R}{K} + \delta},
    \end{align*}
    so, taking the supremum over $\mathcal{T}\subset[0,1]$ independently for every $j\in[D]$, and averaging over $j\in[D]$, we get
    \[\beta\norm{s-p^*}_{\square}\leq \sqrt{\alpha\beta\|W\|_{\mathrm F}^2+\beta^2\norm{s}_{\mathrm F}}\sqrt{\frac{R}{K} + \delta}.\]
    Overall, we get
    \[\| (W,s)-(C^*,p^*) \|_{\square} \leq \Big( \frac{3}{2}\sqrt{\alpha^2\|W\|_{\mathrm F}^2+\alpha\beta\norm{s}_{\mathrm F}}+\sqrt{\alpha\beta\|W\|_{\mathrm F}^2+\beta^2\norm{s}_{\mathrm F}}\Big)\sqrt{\frac{R}{K} + \delta}.\]
\end{proof}

\subsection{Proof of the constructive weak regularity lemma for sparse graph-signals}
\label{Relation between graphon-signal and graph-signal cut metric}
We now show that Theorem \ref{thm:reg_sprase} is a special case of Theorem \ref{thm:reg_graphon}, restricted to graphon-signals induced by  graph-signals, up to choice of the graphon-signal weights $\alpha$ and $\beta$. 
Namely, choose $\alpha=\alpha'N^2/E$ and $\beta=\beta'$ with $\alpha'+\beta'=1$ as the weights of the graphon-signal cut norm of Theorem \ref{thm:reg_graphon}, with arbitrary $\alpha',\beta'\geq 0$ satisfying $\alpha'+\beta'=1$. It is easy to see the following relation between the graphon-signal and graph-signal cut norms  
\[ \norm{(W_{\mA},s_{\mS})-(W_{\mC^*},s_{\mP^*}) }_{\square} = \norm{(\mA,\mS)-(\mC^*,\mP^*)}_{\square,N,E},\]
where $\norm{(\mA,\mS)-(\mC^*,\mP^*)}_{\square,N,E}$ is based on the weights $\alpha'$ and $\beta'$.
Now, since \newline ${\mathrm deg}(\mA)/N^2=\norm{\mA}_{\mathrm F}^2=\frac{E}{N^2}$, and by $\norm{\mS}_{\mathrm F}\leq 1$,  the bound in Theorem  \ref{thm:reg_graphon} becomes
\[
   \norm{(\mA,\mS)-(\mC^*,\mP^*)}_{\square,N,E} \leq \Bigg( \frac{3}{2}\sqrt{\alpha^{\prime 2}\frac{N^4}{E^2}\frac{E}{N^2}+\alpha'\beta'\frac{N^2}{E}}+\sqrt{\alpha'\beta'\frac{N^2}{E}\frac{E}{N^2}+\beta^{\prime 2}}\Bigg)\sqrt{\frac{R}{K} + \delta}
\]
\[\leq C\frac{N}{\sqrt{E}}\sqrt{\frac{R}{K} + \delta}\]
where, since $\alpha'+\beta'=1$, and by convexity of the square root,
\[C=\frac{3}{2}\sqrt{\alpha'}+\sqrt{\beta'} \leq \frac{3}{2}\sqrt{\alpha'}+\frac{3}{2}\sqrt{\beta'} \leq\frac{3}{2}\sqrt{\alpha'+\beta'}=\frac{3}{2}. \]
The only thing left to do is to change the notations $\alpha'\mapsto \alpha$ and $\beta'\mapsto\beta$.

\section{Fitting \icgs to graphs efficiently}
\label{Fitting ICGs to graphs efficiently}
Here, we prove Proposition \ref{prop:efficient_loss}. For convenience, we recall the proposition.

\begin{propositioncopy}{\ref{prop:efficient_loss}}
    Let $\mA=(a_{i,j})_{i,j=1}^N$ be an adjacency matrix of a weighted graph with $E$ edges. The graph part of the Frobenius loss can be written as 
    \begin{align*}
        \Fnorm{\mA -  \mQ\diag(\vr)\mQ^\top}^2 &= \frac{1}{N^2}\Tr\left((\mQ^\top\mQ)\diag(\vr)(\mQ^\top\mQ)\diag(\vr)\right) + \norm{\mA}_{\mathrm F}^2\\
        &- \frac{2}{N^2}\sum_{i=1}^N\sum_{j\in \gN(i)}\mQ_{i,:}\diag(\vr)\left(\mQ^\top\right)_{:,j}a_{i,j}.
    \end{align*}
    Computing the right-hand-side and its gradients with respect to $\mQ$ and $\vr$ has a time complexity of $\gO(K^2N + KE)$, 
    and a space complexity of $\gO(KN+E)$.
\end{propositioncopy}
\begin{proof}
    The loss can be expressed as
    \begin{align*}
        &\Fnorm{\mA -  \mQ\diag(\vr)\mQ^\top}^2 = \frac{1}{N^2}\sum_{i,j= 1}^N\left(a_{i,j} - \mQ_{i,:}\diag(\vr)\left(\mQ^\top\right)_{:,j}\right)^2\\
         &= \frac{1}{N^2}\sum_{i=1}^N\left(\sum_{j\in \gN(i)}\left(a_{i,j} - \mQ_{i,:}\diag(\vr)\left(\mQ^\top\right)_{:,j}\right)^2 + \sum_{j\notin \gN(i)}\left(- \mQ_{i,:}\diag(\vr)\left(\mQ^\top\right)_{:,j}\right)^2\right)\\
         &= \frac{1}{N^2}\sum_{i=1}^N\left(\sum_{j\in \gN(i)}\left(a_{i,j} - \mQ_{i,:}\diag(\vr)\left(\mQ^\top\right)_{:,j}\right)^2\right. +\\
         &+ \left.\sum_{j=1}^N\left( \mQ_{i,:}\diag(\vr)\left(\mQ^\top\right)_{:,j}\right)^2 - \sum_{j\in \gN(i)}\left( \mQ_{i,:}\diag(\vr)\left(\mQ^\top\right)_{:,j}\right)^2\right)
    \end{align*}
    We expand the quadratic term $\left(a_{i,j} - \mQ_{i,:}\diag(\vr)\left(\mQ^\top\right)_{:,j}\right)^2$, and get
    \begin{align*}
         & \Fnorm{\mA -  \mQ\diag(\vr)\mQ^\top}^2 \\
         & =\frac{1}{N^2}\sum_{i,j=1}^N\left(\mQ_{i,:}\diag(\vr)\left(\mQ^\top\right)_{:,j}\right)^2 + \frac{1}{N^2}\sum_{i=1}^N\sum_{j\in \gN(i)}\left(a_{i,j}^2 - 2\mQ_{i,:}\diag(\vr)\left(\mQ^\top\right)_{:,j}a_{i,j}\right)\\
        &=\Fnorm{\mQ\diag(\vr)\mQ^\top}^2 + \frac{1}{N^2}\sum_{i,j=1}^Na_{i,j}^2 - \frac{2}{N^2}\sum_{i=1}^N\sum_{j\in \gN(i)}\mQ_{i,:}\diag(\vr)\left(\mQ^\top\right)_{:,j}a_{i,j}\\
        &=\frac{1}{N^2}\Tr\left(\mQ\diag(\vr)\mQ^\top\mQ\diag(\vr)\mQ^\top\right) + \frac{1}{N^2}\sum_{i,j=1}^Na_{i,j}^2 \\
        &- \frac{2}{N^2}\sum_{i=1}^N\sum_{j\in \gN(i)}\mQ_{i,:}\diag(\vr)\left(\mQ^\top\right)_{:,j}a_{i,j}\\
        &=\frac{1}{N^2}\Tr\left(\mQ^\top\mQ\diag(\vr)\mQ^\top\mQ\diag(\vr)\right) + \frac{1}{N^2}\sum_{i,j=1}^Na_{i,j}^2 \\
        &- \frac{2}{N^2}\sum_{i=1}^N\sum_{j\in \gN(i)}\mQ_{i,:}\diag(\vr)\left(\mQ^\top\right)_{:,j}a_{i,j}\\
    \end{align*}
    The last equality follows the identity $\forall \mB,\mD\in\sR^{N\times K}: \ \Tr(\mB\mD^\top) = \Tr(\mD^\top\mB)$, with $\mB=\mQ\diag(\vr)\mQ^\top\mQ\diag(\vr)$ and $\mD=\mQ^\top$.

    The middle term in the last line is constant and can thus be omitted in the optimization process. The leftmost term in the last line is calculated by performing matrix multiplication from right to left, or by first computing $\mQ^\top\mQ$ and then the rest of the product. Thus, the time complexity is $\gO(K^2N)$ and the largest matrix held in memory is of size $\gO(KN)$. 
    The rightmost term is calculated by message-passing, which has time  complexity of $\gO(KE)$.
    Thus, Computing the right-hand-side and its gradients with respect to $\mQ$ and $\vr$ has a time complexity of $\gO(K^2N + KE)$ and a space complexity of $\gO(KN + E)$.
\end{proof}

\section{Initializing the optimization with eigenvectors}
\label{Initializing the optimization with eigenvectors}
Next, we propose a good initialization for the GD minimization of (\ref{eq:loss}). 
For that, we consider a corollary of the constructive soft weak regularity for grapohns, with the signal weight in the cut norm set to  $\beta=0$,  using all $L^2[0,1]$ functions as the soft affiliation model, and taking relative Frobenius error with $\delta=0$. 
In this case, the best rank-$K$ approximation theorem (Eckart–Young–Mirsky Theorem \cite[Theorem 5.9]{TrefethenNLA}), the minimizer of the Frobenius error is the projection of $W$ upon its leading eigenvectors. This leads to the following corollary.

\begin{corollary}
    \label{cor:reg_eig}
    Let $W$ be a graphon, $K\in\sN$,  let $m$ be sampled uniformly from $[K]$, and let $R\geq 1$ such that $K/R\in\sN$. 
    Let $\phi_1,\ldots,\phi_m$ be the leading eigenvectors of $W$, with eigenvalues $\lambda_1,\ldots,\lambda_m$ of highest magnitudes $\abs{\lambda_1}\geq\abs{\lambda_2}\geq\ldots \geq \abs{\lambda_m}$, and let
    $C^*(x,y)=\sum_{k=1}^m \lambda_k\phi_k(x)\phi_k(y)$. 
    Then, in probability $1-\frac{1}{R}$ (with respect to the choice of $m$), 
    \[
    \| W-C^*\|_{\square} < \frac{3}{2}\sqrt{\frac{R}{K}}.\]
\end{corollary}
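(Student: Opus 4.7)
The plan is to realize Corollary \ref{cor:reg_eig} as a direct specialization of Theorem \ref{thm:reg_graphon}, by choosing the soft affiliation model large enough that a closed-form Frobenius minimizer is available via the spectral theorem, and by setting the signal weight and the relative-tolerance parameter to zero.

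First I would take $\mathcal{Q} = L^2[0,1]$ (which trivially contains $\chi$, hence is a valid soft affiliation model), set the cut-norm weights to $\alpha = 1,\ \beta = 0$ (so the signal component is irrelevant and may be chosen identically zero, with $D=1$, $s \equiv 0$), and take $\delta = 0$. With these choices, for any $C^* \in [\mathcal{Q}]_m$ that is an exact Frobenius minimizer of $\|W - C\|_{\mathrm F}$ over $C \in [\mathcal{Q}]_m$, the bound from Theorem \ref{thm:reg_graphon} collapses to
\[
\|W - C^*\|_\square \;\leq\; \tfrac{3}{2}\,\|W\|_{\mathrm F}\,\sqrt{R/K}.
\]
Since $W$ is $[0,1]$-valued, $\|W\|_{\mathrm F}^2 = \iint_{[0,1]^2} W(x,y)^2\,dx\,dy \leq 1$, which already matches the desired $\tfrac{3}{2}\sqrt{R/K}$ bound.

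The remaining step is to identify $C^*(x,y) = \sum_{k=1}^m \lambda_k \phi_k(x)\phi_k(y)$ as an exact Frobenius minimizer in $[\mathcal{Q}]_m$. I would argue as follows: as an element of $L^2([0,1]^2)$, $W$ defines a compact self-adjoint Hilbert-Schmidt integral operator on $L^2[0,1]$, and the spectral theorem gives an orthonormal eigenbasis $(\phi_k)$ with real eigenvalues $\lambda_k$ ordered by magnitude and $W = \sum_k \lambda_k \phi_k \otimes \phi_k$ in $L^2$. By the Eckart-Young-Mirsky theorem in its Hilbert-Schmidt form, the best rank-$m$ approximation of $W$ in the Frobenius (Hilbert-Schmidt) norm is $\sum_{k=1}^m \lambda_k \phi_k\otimes \phi_k$; because $W$ is self-adjoint, this optimum is already symmetric and of the required form $\sum_{k=1}^m r_k q_k(x)q_k(y)$ with $q_k = \phi_k \in \mathcal{Q}$ and $r_k = \lambda_k$, so it lies in $[\mathcal{Q}]_m$ and is therefore optimal over the (smaller) class $[\mathcal{Q}]_m$ as well. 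This verifies $\delta = 0$ and closes the argument.

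The only mildly delicate point, and the step I would write out most carefully, is this identification of the Frobenius minimizer: one needs the Eckart-Young statement over the set of rank-$m$ operators of the symmetric dyadic form $\sum r_k q_k \otimes q_k$ rather than the usual $\sum \sigma_k u_k \otimes v_k$. The self-adjointness of $W$ handles this for free, since the unconstrained optimum is already symmetric. Everything else is a direct substitution into Theorem \ref{thm:reg_graphon}.
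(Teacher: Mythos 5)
Your proposal is correct and follows essentially the same route as the paper: the authors also obtain Corollary \ref{cor:reg_eig} by specializing Theorem \ref{thm:reg_graphon} with $\beta=0$, $\delta=0$, and $\mathcal{Q}=L^2[0,1]$, invoking the Eckart--Young--Mirsky theorem to identify the truncated spectral expansion as the exact Frobenius minimizer in $[\mathcal{Q}]_m$. Your explicit treatment of the symmetric dyadic form (via self-adjointness) and of the bound $\|W\|_{\mathrm F}\leq 1$ fills in details the paper leaves implicit.
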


The initialization is based on Corollary \ref{cor:reg_eig} restricted to induced graphon-signals with cut norm normalized by $N^2/E$.
We consider the leading $K/3$ eigenvectors, assuming $K$ is divisible by $3$, for reasons that will become clear shortly. Hence, for $\mC=\boldsymbol{\Phi}_{K/3} \boldsymbol{\Lambda}_{K/3}\boldsymbol{\Phi}_{K/3}^{\mathrm T}$, where $\boldsymbol{\Phi}_{K/3}\in\sR^{N\times K/3}$ is the matrix consisting of the leading eigenvectors of $\mA$ as columns, and $\boldsymbol{\Lambda}_{K/3}\in\sR^{ K/3\times K/3}$ the diagonal matrix of the leading eigenvalues, we have 
\begin{equation}
    \label{eq:reg_sparse_eigs}
    \norm{\mA-\mC}_{\square;N,E} < \frac{3N^2}{2E}\sqrt{\frac{3R}{K}}.
\end{equation}
To obtain soft affiliations with values in $[0,1]$, we now initialize $\mQ$ as described in (\ref{eq:initialization}).

\section{Learning ICG with subgraph SGD}
\label{Learning ICG with subgraph SGD}

In this section we prove that the gradients of the subgraph SGD approximate the gradients of the full GD method. 

\begin{proposition}
    \label{prop:SGD}
    Let $0<p<1$. Under the above setting, if we restrict all entries of $\mC$,  $\mP$, $\mQ$, $\vr$ and $\mF$ to be in $[0,1]$, then in probability at least $1-p$, for every $k\in[K]$, $d\in[D]$ and $m\in[M]$
    \begin{equation*}
        \begin{split}
            \abs{\nabla_{q_{n_m,k}}L-\frac{M}{N}\nabla_{q_{n_m,k}}L^{(\vn)}} &\leq \frac{4}{N}\sqrt{\frac{2\log(1/p_3)+2\log(N)+2\log(K) +2\log(2)}{M}},\\
            \abs{\nabla_{r_k}L-\nabla_{r_k}L^{(\vn)}} &\leq 4\sqrt{\frac{2\log(1/p_1)+2\log(N)+2\log(K)+2\log(2)}{M}},\\
            \abs{\nabla_{f_{k,d}}L-\nabla_{f_{k,d}}L^{(\vn)}} & \leq \frac{4\lambda}{D}\sqrt{\frac{2\log(1/p_2)+2\log(K)+2\log(D)+2\log(2)}{M}}.
        \end{split}
    \end{equation*}
\end{proposition}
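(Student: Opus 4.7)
The plan is to recognize each entry of $\nabla L^{(\vn)}$ (appropriately rescaled) as an unbiased estimator of the corresponding entry of $\nabla L$, apply a Hoeffding / McDiarmid-type concentration to each entry, and then union-bound over the coordinate indices of $\mQ$, $\vr$, and $\mF$. First I would differentiate the two losses directly (or via Proposition~\ref{prop:efficient_loss}) to obtain, for each $k$ and $d$, closed-form expressions such as
\[
\partial_{r_k} L = -\frac{2}{N^2}\sum_{i,j=1}^N (a_{ij} - [\mQ\diag(\vr)\mQ^\top]_{ij})\, q_{i,k}q_{j,k},\qquad
\partial_{r_k} L^{(\vn)} = -\frac{2}{M^2}\sum_{m_1,m_2=1}^M (\cdots)\, q_{n_{m_1},k}q_{n_{m_2},k},
\]
together with the analogous single-index sums for $\partial_{f_{k,d}}L$ and the single-row sums for $\partial_{q_{n,k}}L$. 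The restriction of every entry of $\mA$, $\mS$, $\mQ$, $\vr$, and $\mF$ to $[0,1]$, combined with the normalizations built into $\norm{\cdot}_{\mathrm F}$, yields an absolute $O(1)$ bound on each summand independent of $N$, $M$, $K$, $D$.

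Next I would verify unbiasedness: since $n_1,\ldots,n_M$ are i.i.d.\ uniform on $[N]$, for off-diagonal pairs $\E[h(n_{m_1},n_{m_2})] = \frac{1}{N^2}\sum_{i,j}h(i,j)$, while the $M$ diagonal pairs contribute only a lower-order $O(1/M)$ bias that can be folded into the concentration constant. For $\partial_{q_{n_m,k}}L^{(\vn)}$ one conditions on $n_m$ and averages over the remaining $M-1$ i.i.d.\ samples; this conditioning, together with the $1/M$ versus $1/N$ normalization asymmetry in the subgraph vs.\ full losses, is exactly what produces the explicit $M/N$ rescaling appearing in the proposition statement.

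The concentration step is applied coordinatewise. For $\partial_{f_{k,d}}$, $\partial_{f_{k,d}}L^{(\vn)}$ is already a plain average of $M$ bounded i.i.d.\ summands, and classical Hoeffding gives a subgaussian tail $2\exp(-c M t^2)$. For $\partial_{r_k}$ and for the graph part of $\partial_{q_{n_m,k}}$, the subgraph gradient is a V-statistic in the sample, and I would invoke McDiarmid's bounded-differences inequality: changing a single $n_m$ modifies at most $2M-1$ of the $M^2$ pair-summands, each by a constant, so the bounded-difference along coordinate $m$ has order $1/M$, producing a tail of the same subgaussian form. A union bound then distributes the failure probability $p$ over the relevant coordinates: $K$ for $\vr$, $KD$ for $\mF$, and $NK$ for $\mQ$ (since the index $n_m$ is random and may take any value in $[N]$). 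Replacing $p$ by $p/K$, $p/(KD)$, $p/(NK)$ in the respective tail bounds reproduces the $\log K$, $\log K + \log D$, and $\log N + \log K$ terms stated in the proposition, with the extra $\log 2$ accounting for two-sidedness.

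The main obstacle I anticipate is the V-statistic structure of the $r_k$ and $q$ gradients: the summands are correlated because each sampled node participates in many pairs, so plain Hoeffding for i.i.d.\ variables does not apply directly. Handling this cleanly via McDiarmid (or alternatively via the Hoeffding decomposition of a second-order V-statistic into an i.i.d.\ linear part plus a degenerate remainder) is the one technical step where the bounded-difference constants must be chosen carefully. A secondary subtlety is tracking the normalizations $1/N^2$ and $1/N$ (and the $D$ implicit in the signal term) from Section~2 through the gradients, in order to recover the precise prefactors $4$, $4/N$, and $4\lambda/D$ stated in the proposition rather than just an abstract $O(1)$ constant.
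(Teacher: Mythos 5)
Your proposal is correct and its overall skeleton -- explicit coordinatewise gradients, an unbiasedness/rescaling check that produces the $M/N$ factor on the $\mQ$-gradients, a Hoeffding-type tail per coordinate, and a union bound over $K$, $KD$, and $NK$ coordinates -- matches the paper's proof. The one genuine divergence is how you treat the quadratic (pair-sum) terms in $\nabla_{r_k}L^{(\vn)}$: you invoke McDiarmid's bounded-differences inequality on the V-statistic (each resampled $n_m$ perturbs $O(M)$ of the $M^2$ summands by $O(1)$, giving an $O(1/M)$ coordinate increment), whereas the paper avoids McDiarmid entirely via a dedicated lemma (Lemma \ref{lem:MC2}) that applies plain Hoeffding row-by-row for each \emph{fixed} second index $n\in[N]$, union-bounds over those $N$ rows, and chains two triangle inequalities to compare the full double average to the sampled one. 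Your route is arguably cleaner and yields a slightly stronger bound for the $\vr$-gradient (no $\log N$ term is forced by the concentration step itself, only by the union bound you choose to take), and the $O(1/M)$ diagonal bias you flag is indeed absorbable since $1/M\leq \sqrt{\log 2/M}$-type slack is available; the paper's row-wise lemma buys the convenience of never computing the expectation of the V-statistic (diagonal terms included) at the cost of an extra $\log N$. For the $\mQ$-gradient your plan of conditioning on $n_m$ reduces the graph part to a plain linear Monte Carlo average (no V-statistic machinery is actually needed there), which is essentially the paper's device of defining a surrogate gradient $\widetilde{\nabla_{q_{n,k}}L^{(\vn)}}$ for every $n\in[N]$ and union-bounding over $n$; either bookkeeping works. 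The only caution is that recovering the exact prefactors $4$, $4/N$, $4\lambda/D$ requires tracking the bounded-difference constants carefully, as you anticipate.
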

Proposition \ref{prop:SGD} means that the gradient with respect to $\vr$ and $\mF$ computed at  each SGD step approximate the full GD gradients. The gradients with respect to $\mQ$ approximate a scaled version of the full gradients, and only on the sampled nodes, where the unsampled nodes are not updated in the SGD step. This means that when optimizing the ICG with SGD, we need to scale the gradients with resect to $\mQ$ of the loss by $\frac{M}{N}$. To use the same entry-wise learning rate in SGD as in GD one must multiply the loss (\ref{eq:lossSGD}) by $M/N$. Hence,  SGD needs in expectation $N/M$ times the number of steps that GD requires. This means that in SGD we trade the memory complexity (reducing it by a factor $M/N$) by time complexity (increasing it by a factor $N/M$). Note that slower run-time, while not desirable, can be tolerated, while higher memory requirement is often prohibitive due to hardware limitations.
 
From here until the end of this section we prove Proposition \ref{prop:SGD}. For that, we compute the gradients of the full loss $L$ and the sampled loss $L^{(\vn)}$.

To avoid tensor notations, we treat by abuse of notation the gradient of a function as a linear operator. Recall that the differential $\mathcal{D}_{\vr}\mT=\mathcal{D}_{\vr}\mT(\vr')$  of the function $\mT:\sR^{R}\rightarrow \sR^{U}$ at the point $\vr'\in\sR^R$ is the unique linear operator $\sR^{R}\rightarrow\sR^U$ that describes the  linearization of $\mT$ about $\vr'$ via
\[\mT(\vr'+\epsilon\vr'') = \mT(\vr')+\epsilon\mathcal{D}_{\vr}\mT \vr'' + o(\epsilon)\]
where $\epsilon\in\sR$. Given an inner product in $\sR^{R}$, the gradient $\nabla_{\vr}\mT=\nabla_{\vr}\mT(\vr')\in\sR^{U\times R}$ of the function $\mT:\sR^{R}\rightarrow \sR^{U}$ at the point $\vr'\in\sR^R$ is defined to be the vector (guaranteed to uniquely exist by the Riesz representation theorem) that satisfies
\[\mathcal{D}_{\vr}\mT\vr'' = \ip{\vr''}{\nabla_{\vr}\mT}\]
for every $\vr''\in \sR^{R}$. Here, the inner product if defined row-wise by
\[\ip{\vr''}{\nabla_{\vr}\mT}:=\big(\ip{\vr''}{\nabla_{\vr}\mT[u,:]}\big)_{u=1}^U,\]
where $\mT[u,:]$ is the $u$-th row of $\nabla_{\vr}\mT$.
In our analysis, by abuse of notation, we identify $\nabla_{\vr}\mT$ with $\mathcal{D}_{\vr}\mT$ for a fixed given inner product.

Define the inner product between matrices
\[\ip{\mB}{\mD}_2 = \sum_{i,j} b_{i,j}d_{i,j}.\]
Denote
\[  L(\mC, \mP) =\Fnorm{\mA -  \mC}^2 + \lambda\Fnorm{\mS - \mP}^2.\]

First, the gradient of $\norm{\mA-\mC}_{\mathrm F}^2$ with respect to  $\mC$  is
\[\nabla_{\mC}\norm{\mA-\mC}_{\mathrm F}^2 = -\frac{2}{N^2}(\mA-\mC),\]
which is identified by abuse of notation by the linear functional $\sR^{N^2}\rightarrow \sR$
\[\mD\mapsto \nabla_{\mC}\norm{\mA-\mC}_{\mathrm F}^2(\mD) = \ip{\mD}{-\frac{2}{N^2}(\mA-\mC)}_2.\]
Similarly,
\[\nabla_{\mP}\norm{\mS-\mP}_{\mathrm F}^2 = -\frac{2}{ND}(\mS-\mP) : \sR^{ND}\rightarrow\sR.\]
Given any parameter $\vz\in\sR^{R}$ on which the graph $\mC=\mC(\vz)$ and the signal $\mP=\mP(\vz)$ depend, we have
\begin{equation}
    \label{eq:MC1}
    \nabla_{\vz}L(\mC, \mP)= \frac{2}{N^2}(\mC-\mA)\nabla_{\vz}\mC +\lambda\frac{2}{ND}(\mP-\mS)\nabla_{\vz}\mP,
\end{equation}
where $\nabla_{\vz}\mC: \sR^{R}\rightarrow\sR^{N^2}$ and $\nabla_{\vz}\mC\in \sR^{R}\rightarrow \sR^{ND}$ are linear operators, and the products  in (\ref{eq:MC1}) are operator composition, namely, in coordinates it is elementwise multiplication (not matrix multiplication).

Let us now compute the gradients of $\mC=\mQ\diag(\vr)\mQ^\top$ and $\mP=\mQ\mF$  with respect to $\mQ$, $\vr$ and $\mF$ in coordinates. We have
\[\nabla_{r_k}c_{i,j}=q_{i,k}q_{j,k},\]
and
\[\nabla_{q_{m,k}}c_{i,j}=r_k\delta_{i-m}q_{j,k}+r_k\delta_{j-m}q_{i,k},\]
where $\delta_{l}$ is 1 if $l=0$ and zero otherwise. 
Moreover,
\[\nabla_{f_{k,l}}p_{i,d}=q_{i,k}\delta_{l-d},\]
and
\[\nabla_{q_{m,k}}p_{i,d}=f_{k,d}\delta_{i-m}.\]
Hence,
\[\nabla_{r_k}L=\frac{2}{N^2}\sum_{i,j=1}^N(c_{i,j}-a_{i,j})q_{i,k}q_{j,k},\]
\[\nabla_{q_{m,k}}L=\frac{2}{N^2}\sum_{j=1}^N(c_{m,j}-a_{m,j})r_kq_{j,k} 
+\frac{2}{N^2}\sum_{i=1}^N(c_{i,m}-a_{i,m})r_kq_{i,k}
+ \lambda\frac{2}{ND}\sum_{d=1}^D(p_{m,d}-s_{m,d})f_{k,d},\]
and
\[\nabla_{f_{k,l}}L=\lambda\frac{2}{ND}\sum_{i=1}^N(p_{i,l}-s_{i,l})q_{i,k}.\]

Similarly,
\[\nabla_{r_k}L^{(\vn)}=\frac{2}{M^2}\sum_{i,j=1}^M(c_{n_i,n_j}-a_{n_i,n_j})q_{n_i,k}q_{n_j,k},\]
\begin{align*}
    \nabla_{q_{n_m,k}}L^{(\vn)}=  & \frac{2}{M^2}\sum_{j=1}^M(c_{n_m,n_j}-a_{n_m,n_j})c_kq_{n_j,k} \\
    & +\frac{2}{M^2}\sum_{i=1}^M(c_{n_i,n_m}-a_{n_i,n_m})c_kq_{n_i,k}
    + \lambda\frac{2}{MD}\sum_{d=1}^D(p_{n_m,d}-s_{n_m,d})f_{k,d},
\end{align*}
and
\[\nabla_{f_{k,l}}L^{(\vn)}=\lambda\frac{2}{MD}\sum_{i=1}^M(p_{n_i,l}-s_{n_i,l})q_{n_i,k}.\]

We next derive the convergence analysis, based on Hoeffding's Inequality and two Monte-Carlo approximation lemmas.

\begin{theorem}[Hoeffding's Inequality]
    \label{thm:Hoeff}
    Let $Y_1, \ldots, Y_M$ be independent random variables such that $a \leq Y_m \leq b$ almost surely. Then, for every $k>0$, 
    \[
        \mathbb{P}\Big(
        \Big|
        \frac{1}{M} \sum_{m=1}^M (Y_m - \mathbb{E}[Y_m] )
        \Big| \geq k
        \Big) \leq 
        2 \exp\Big(-
        \frac{2 k^2 M}{( b-a)^2}
        \Big).
    \]
\end{theorem}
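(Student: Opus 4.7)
The plan is to prove Hoeffding's inequality by the standard Chernoff-bound approach, reducing the tail bound to a moment-generating-function estimate for bounded random variables and then optimizing the free parameter.

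First, I would center the variables: set $Z_m = Y_m - \mathbb{E}[Y_m]$, so that $\mathbb{E}[Z_m] = 0$ and $Z_m$ still lies in an interval of length $b-a$ (though with shifted endpoints depending on $\mathbb{E}[Y_m]$). Writing $S_M = \sum_{m=1}^M Z_m$, the target is to bound $\mathbb{P}(|S_M| \geq Mk)$. By a union bound over the two one-sided events it suffices to prove $\mathbb{P}(S_M \geq Mk) \leq \exp(-2k^2 M/(b-a)^2)$ and then double. For any $s > 0$, Markov's inequality applied to the increasing function $x \mapsto e^{sx}$ gives
\[
\mathbb{P}(S_M \geq Mk) \;\leq\; e^{-sMk}\,\mathbb{E}[e^{sS_M}] \;=\; e^{-sMk}\prod_{m=1}^M \mathbb{E}[e^{sZ_m}],
\]
where the factorization uses independence of the $Z_m$.

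The key lemma (Hoeffding's lemma) is that if $Z$ is a mean-zero random variable with $Z \in [\alpha,\beta]$ almost surely, then $\mathbb{E}[e^{sZ}] \leq \exp(s^2(\beta-\alpha)^2/8)$. I would prove this by exploiting convexity of $x \mapsto e^{sx}$: for $x \in [\alpha,\beta]$,
\[
e^{sx} \;\leq\; \frac{\beta - x}{\beta - \alpha}\, e^{s\alpha} + \frac{x - \alpha}{\beta - \alpha}\, e^{s\beta}.
\]
Taking expectations and using $\mathbb{E}[Z]=0$ yields $\mathbb{E}[e^{sZ}] \leq \frac{\beta}{\beta-\alpha}e^{s\alpha} - \frac{\alpha}{\beta-\alpha}e^{s\beta}$. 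Writing this as $e^{\varphi(s)}$ with $\varphi(s) = \log\!\big(\tfrac{\beta}{\beta-\alpha}e^{s\alpha} - \tfrac{\alpha}{\beta-\alpha}e^{s\beta}\big)$, I would compute $\varphi(0)=0$, $\varphi'(0)=0$, and show $\varphi''(s) \leq (\beta-\alpha)^2/4$ (the second derivative is a variance of a Bernoulli-type distribution, whose maximum by the AM-GM / $p(1-p) \leq 1/4$ bound is $(\beta-\alpha)^2/4$). Taylor expansion then gives $\varphi(s) \leq s^2(\beta-\alpha)^2/8$, proving the lemma.

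Plugging the lemma into the product bound with $\beta_m - \alpha_m \leq b-a$ for every $m$, I get
\[
\mathbb{P}(S_M \geq Mk) \;\leq\; \exp\!\Big(-sMk + \frac{s^2 M (b-a)^2}{8}\Big).
\]
The right side is minimized by setting $s = 4k/(b-a)^2$, which produces $\exp(-2Mk^2/(b-a)^2)$. Applying the same argument to $-Z_m$ bounds the other tail identically, and summing the two one-sided bounds yields the stated factor of $2$. The main technical obstacle is Hoeffding's lemma itself, and within it the uniform bound $\varphi''(s)\leq (\beta-\alpha)^2/4$; everything else is a routine Chernoff/optimize-the-parameter calculation.
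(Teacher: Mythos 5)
The paper states Hoeffding's inequality as a classical, imported result and gives no proof of it, so there is nothing internal to compare against. Your Chernoff-bound derivation is the standard and correct one: the centering, the union bound over the two tails, the factorization of the moment generating function by independence, Hoeffding's lemma via convexity with the bound $\varphi''(s)\leq(\beta-\alpha)^2/4$, and the optimization at $s=4k/(b-a)^2$ all check out and reproduce the stated constant $2\exp(-2k^2M/(b-a)^2)$.
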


The following is a standard Monte Carlo approximation error bound based on Hoeffding's inequality. 
\begin{lemma}
    \label{lem:MC1}
    Let $\{i_m\}_{m=1}^M$ be uniform i.i.d in $[N]$.  
    Let $\vv\in\sR^N$ be a vector with entries $v_n$ in the set $[-1,1]$. Then, for every $0<p<1$, in probability at least $1-p$
    \[\abs{\frac{1}{M}\sum_{m=1}^M v_{i_m}- \frac{1}{N}\sum_{n=1}^N v_n} \leq \sqrt{\frac{2\log(1/p)+2\log(2)}{M}}.\]
\end{lemma}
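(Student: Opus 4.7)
The plan is to apply Hoeffding's inequality (Theorem \ref{thm:Hoeff}) directly to the random variables $Y_m := v_{i_m}$ for $m=1,\ldots,M$. First I would observe that since $\{i_m\}$ are i.i.d.\ uniform on $[N]$, each $Y_m$ is bounded in $[-1,1]$ (so $a=-1$, $b=1$, $b-a=2$), the $Y_m$ are independent, and the expectation satisfies $\mathbb{E}[Y_m] = \frac{1}{N}\sum_{n=1}^N v_n$ by the definition of the uniform distribution on $[N]$. This identifies the empirical mean $\frac{1}{M}\sum_{m=1}^M Y_m$ with $\frac{1}{M}\sum_{m=1}^M v_{i_m}$ and its expectation with the target quantity $\frac{1}{N}\sum_{n=1}^N v_n$.

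Next I would invoke Hoeffding's inequality with these parameters, obtaining
\[
\mathbb{P}\Big(\Big|\tfrac{1}{M}\sum_{m=1}^M v_{i_m} - \tfrac{1}{N}\sum_{n=1}^N v_n\Big| \geq k\Big) \leq 2\exp\!\Big(-\tfrac{2k^2 M}{(b-a)^2}\Big) = 2\exp\!\big(-\tfrac{k^2 M}{2}\big).
\]
Finally, I would choose $k$ to make the right-hand side equal to $p$: setting $2\exp(-k^2 M/2) = p$ and solving gives $k^2 = \frac{2(\log(1/p) + \log 2)}{M}$, i.e.\ $k = \sqrt{\frac{2\log(1/p) + 2\log 2}{M}}$. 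Taking complements yields the desired bound in probability at least $1-p$.

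There is no real obstacle here; the only points to double-check are the bound on $b-a$ (making sure the factor of $2$ in the numerator of the exponent combines correctly with the $(b-a)^2 = 4$ denominator to yield the $1/2$) and the translation between the tail probability $p$ and the deviation $k$ (in particular the $\log 2$ term coming from the factor of $2$ in Hoeffding's two-sided bound). This lemma will then be used together with analogous results to bound the differences between the full-graph and subgraph gradients of $L$, which is the real content of Proposition \ref{prop:SGD}.
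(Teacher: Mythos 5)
Your proof is correct and is exactly the paper's argument: the paper's own proof is the one-line remark that the claim follows directly from Hoeffding's inequality applied to the i.i.d.\ variables $v_{i_1},\dots,v_{i_M}$, and your computation (with $(b-a)^2=4$ and the $\log 2$ from the two-sided bound) fills in precisely the arithmetic the paper leaves implicit.
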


\begin{proof}
    This is a direct result of Hoeffding's Inequality on the i.i.d. variables $\{v_{i_m}\}_{m=1}^M$.  
\end{proof}
The next lemma derives a Monte Carlo approximation error bound based on Hoeffding's inequality for 2D arrays, in case one samples only 1D independent sample points, and the 2D sample points are all pairs of the 1D points (which are hence no longer independent).
\begin{lemma}
    \label{lem:MC2}
    Let $\{i_m\}_{m=1}^M$ be uniform i.i.d in $[N]$.  
    Let $\mA\in\sR^{N\times N}$ be symmetric with
    $a_{i,j}\in[-1,1]$. Then, for every $0<p<1$, 
    in probability more than $1-p$
    \[\abs{\frac{1}{N^2}\sum_{j=1}^N\sum_{n=1}^N a_{j,n} - \frac{1}{M^2}\sum_{m=1}^M\sum_{l=1}^M a_{i_m,i_l}} \leq 2\sqrt{\frac{2\log(1/p)+2\log(N)+2\log(2)}{M}}.\]
\end{lemma}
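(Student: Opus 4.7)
The plan is to decompose the error via a triangle inequality with an intermediate quantity built from exact row averages, and then bound each piece using Hoeffding's inequality (the harder piece via a union bound over rows). Define the row averages $v_j := \frac{1}{N}\sum_{n=1}^N a_{j,n} \in [-1,1]$ and set the intermediate random variable $g := \frac{1}{M}\sum_{m=1}^M v_{i_m}$. Writing $f := \frac{1}{M^2}\sum_{m,l=1}^M a_{i_m,i_l}$ and $\bar{a} := \frac{1}{N^2}\sum_{j,n=1}^N a_{j,n}$, the triangle inequality yields $|f - \bar{a}| \leq |f - g| + |g - \bar{a}|$, and I will bound the two pieces separately and combine them via a union bound.

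The term $|g - \bar{a}|$ follows immediately from Lemma \ref{lem:MC1}, since $(v_{i_m})_{m=1}^M$ are i.i.d.\ samples from the vector $(v_j)_{j=1}^N \subset [-1,1]$ with true mean $\bar{a}$; this gives $|g - \bar{a}| \leq \sqrt{\frac{2\log(2/p')}{M}}$ with probability at least $1 - p'$ for any chosen $p' > 0$.

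For $|f - g| = \bigl|\frac{1}{M}\sum_m \bigl(\frac{1}{M}\sum_l a_{i_m, i_l} - v_{i_m}\bigr)\bigr|$, the key observation is that for any \emph{fixed} $j \in [N]$, the variables $\{a_{j, i_l}\}_{l=1}^M$ are i.i.d.\ in $[-1,1]$ with mean $v_j$ (because the $i_l$ are i.i.d.\ uniform on $[N]$ and independent of the deterministic index $j$). Hoeffding's inequality (Theorem \ref{thm:Hoeff}) then gives $\Pr\bigl(\bigl|\frac{1}{M}\sum_l a_{j, i_l} - v_j\bigr| \geq t\bigr) \leq 2\exp(-Mt^2/2)$ for each $j$. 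A union bound over the $N$ possible values of $j$ makes the bound uniform in $j$ with failure probability at most $2N\exp(-Mt^2/2)$, and on that event the inequality can legitimately be evaluated at the data-dependent row $j = i_m$ for every $m$, since ``the bound holds for all $j$'' is a statement that no longer depends on which row we pick. Averaging over $m$ and applying the triangle inequality yields $|f - g| \leq t$ on this event.

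Splitting the failure probability evenly between the two events (taking $p' = p/2$ for the $|g - \bar{a}|$ step and $2N\exp(-Mt^2/2) = p/2$ for the $|f - g|$ step) and inverting the exponentials produces the claimed bound up to the constant inside the logarithm. The main obstacle is exactly the non-independence of the $M^2$ pairs $(i_m, i_l)$, which rules out a direct application of Hoeffding to the double sum; the ``row-wise conditioning plus union bound'' maneuver is what decouples the dependence and is also what costs the extra $\log N$ factor appearing in the stated bound. The symmetry of $\mA$ plays no role in the argument.
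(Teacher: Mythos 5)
Your proof is correct and follows essentially the same route as the paper's: the same decomposition through the intermediate quantity $g=\frac{1}{M}\sum_m v_{i_m}$, the same row-wise Hoeffding bound made uniform over the $N$ rows by a union bound, and the same specialization of the uniform event at the random indices $i_m$. The only differences are cosmetic: the paper derives both triangle-inequality pieces from a single uniform event rather than splitting the failure probability (which is why it recovers the stated constant exactly, whereas your split yields $\log 4$ in place of $\log 2$ inside the square root, as you acknowledge), and your observation that the symmetry of $\mA$ is dispensable is accurate --- the paper invokes symmetry only to identify row averages with column averages, a step your consistent use of row averages sidesteps.
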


\begin{proof}
Let $0<p<1$. For each fixed $n\in[N]$, consider the independent random variables $Y^n_m=a_{i_m,n}$, with 
\[\mathbb{E}(Y^n_m)=\frac{1}{N}\sum_{j=1}^N a_{j,n},\]
and $-1\leq Y_m\leq 1$. By Hoeffding's Inequality,
for $k=\sqrt{\frac{2\log(1/p)+2\log(N)+2\log(2)}{M}}$, we have
\[\abs{\frac{1}{N}\sum_{j=1}^N a_{j,n} - \frac{1}{M}\sum_{m=1}^M a_{i_m,n}} \leq k\]
in an event $\mathcal{E}_n$ of probability  more than $1-p/N$. Intersecting the events $\{\mathcal{E}_n\}_{n=1}^N$, we get
\[\forall n\in[N]: \quad \abs{\frac{1}{N}\sum_{j=1}^N a_{j,n} - \frac{1}{M}\sum_{m=1}^M a_{i_m,n}} \leq k\]
in the event $\mathcal{E}=\cap_n\mathcal{E}_n$ probability at least $1-p$.
Hence, by the triangle inequality, we also have in the event $\mathcal{E}$
\[\abs{\frac{1}{NM}\sum_{l=1}^M\sum_{j=1}^N a_{j,i_l} - \frac{1}{M^2}\sum_{l=1}^M\sum_{m=1}^M a_{i_m,i_l}} \leq k,\]
and
\[\abs{\frac{1}{N^2}\sum_{n=1}^N\sum_{j=1}^N a_{j,n} - \frac{1}{NM}\sum_{n=1}^N\sum_{m=1}^M a_{i_m,n}} \leq k.\]
Hence, by the symmetry of $\mA$ and by the triangle inequality,
\[\abs{\frac{1}{N^2}\sum_{n=1}^N\sum_{j=1}^N a_{j,n} - \frac{1}{M^2}\sum_{l=1}^M\sum_{m=1}^M a_{i_m,i_l}} \leq 2k.\]
\end{proof}

Now,  since all entries of $\mA$, $\mS$, $\mC$,  $\mP$, $\mQ$, $\vr$ and $\mF$ are in $[0,1]$, we may use Lemmas \ref{lem:MC1} and \ref{lem:MC2} to derive approximation errors for the gradients of $L$. Specifically, for any $0<p_1<1$, for every $k\in[K]$ there is an event $\mathcal{A}_k$ of probability at least $1-p_1$ such that
\[\abs{\nabla_{r_k}L-\nabla_{r_k}L^{(\vn)}} \leq 4\sqrt{\frac{2\log(1/p_1)+2\log(N)+2\log(2)}{M}}.\]
Moreover, for every $k\in[K]$ and $j\in[D]$, and every $0<p_2<1$ there is an event $\mathcal{C}_{k,j}$  of probability at least $1-p_2$ such that
\[\abs{\nabla_{f_{k,l}}L-\nabla_{f_{k,l}}L^{(\vn)}} \leq \frac{4\lambda}{D}\sqrt{\frac{2\log(1/p_2)+2\log(2)}{M}}.\]

For the approximation analysis of $\nabla_{q_{n_m,l}}L$, note that the index $n_m$ is random, so we derive a uniform convergence analysis for all possible values of $n_m$. For that, for every $n\in[N]$ and $k\in[K]$, define the vector
\begin{align*}
    \widetilde{\nabla_{q_{n,k}}L^{(\vn)}}=  & \frac{2}{M^2}\sum_{j=1}^M(c_{n,n_j}-a_{n,n_j})c_kq_{n_j,k} \\
    & +\frac{2}{M^2}\sum_{i=1}^M(c_{n_i,n}-a_{n_i,n})c_kq_{n_i,k}
    + \lambda\frac{2}{MD}\sum_{d=1}^D(p_{n,d}-s_{n,d})f_{k,d},
\end{align*}
Note that $\widetilde{\nabla_{q_{n,k}}L^{(\vn)}}$ is not a gradient of $L^{(\vn)}$ (since if $n$ is not a sample from $\{n_m\}$ the gradient must be zero),  but is denoted with $\nabla$ for its structural similarity to $\nabla_{q_{n_m,k}}L^{(\vn)}$.  Let $0<p_3<1$. By Lemma \ref{lem:MC1}, for every $k\in[K]$ there is an event $\mathcal{B}_k$ of probability at least $1-p_3$ such that for every $n\in[N]$
\[\abs{\nabla_{q_{n,k}}L-\frac{M}{N}\widetilde{\nabla_{q_{n,k}}L^{(\vn)}}} \leq \frac{4}{N}\sqrt{\frac{2\log(1/p_3)+2\log(N) +2\log(2)}{M}}.\]
This means that in the event $\mathcal{B}_k$, for every $m\in[M]$ we have
\[\abs{\nabla_{q_{n_m,k}}L-\frac{M}{N}\nabla_{q_{n_m,k}}L^{(\vn)}} \leq \frac{4}{N}\sqrt{\frac{2\log(1/p_3)+2\log(N) +2\log(2)}{M}}.\]
Lastly, given $0<p<1$, choosing $p_1=p_3=p/3K$ and $p_2=p/3KD$ and intersecting all events for all corrdinates gives and event $\mathcal{E}$ of probability at least $1-p$ such that
\[\abs{\nabla_{r_k}L-\nabla_{r_k}L^{(\vn)}} \leq 4\sqrt{\frac{2\log(1/p_1)+2\log(N)+2\log(K)+2\log(2)}{M}},\]
\[\abs{\nabla_{f_{k,l}}L-\nabla_{f_{k,l}}L^{(\vn)}} \leq \frac{4\lambda}{D}\sqrt{\frac{2\log(1/p_2)+2\log(K)+2\log(D)+2\log(2)}{M}},\]
and
\[\abs{\nabla_{q_{n_m,k}}L-\frac{M}{N}\nabla_{q_{n_m,k}}L^{(\vn)}} \leq \frac{4}{N}\sqrt{\frac{2\log(1/p_3)+2\log(N)+2\log(K) +2\log(2)}{M}}.\]

\section{Additional experiments}
\label{Additional experiments}

\subsection{Node classification}
\label{app:node-classification}
\paragraph{Setup.} We evaluate \icgnn and \icgnnu on the non-sparse graphs tolokers \citep{platonov2023critical}, squirrel \citep{rozemberczki2021multi} and twitch-gamers \citep{lim2021large}. We follow the 10 data splits of \cite{platonov2023critical} for tolokers, the 10 data splits of \cite{pei2020geom,li2022finding} for squirrel and the 5 data splits of \cite{lim2021large,li2022finding} for twitch-gamers. We report the mean ROC AUC and standard deviation for tolokers and the accuracy and standard deviation for squirrel and twitch-gamers.

The baselines MLP, GCN \citep{Kipf16}, GAT \citep{velivckovic2017graph}, $\text{H}_2$GCN \citep{zhu2020beyond}, GPR-GNN \citep{chien2020adaptive}, LINKX \citep{lim2021large}, GloGNN \citep{li2022finding} are taken from \cite{platonov2023critical} for tolokers and from \cite{li2022finding} for squirrel, twitch-gamers and penn94.
We use the Adam optimizer and report all hyperparameters in \Cref{hyperparameters}.

\paragraph{Results.} All results are reported in \Cref{tab:node_classification}. Observe that \icgnns achieve state-of-the-art results across the board, despite the low ratio of edge (graph densities of $2.41\cdot 10^{-4}$ to $8.02 \cdot 10^{-3}$). \icgnns surpass the performance of more complex GNN models such as GT, LINKX and GloGNN, which solidify \icgnns as a strong method and a possible contender to message-passing neural networks.

While the cut norm is the actual theoretical target of the ICG approximation, it is expensive to estimate and so we calculate the Frobenius error, which is cheap. We observe that when the relative Frobenius error between the graph and the ICG is lower (usually below $0.8$), the cut norm error is small, resulting in an accurate ICG approximation.

\begin{table}
    \caption{Results on large graphs. Top three models are colored by \red{First}, \blue{Second}, \gray{Third}.}
    \label{tab:node_classification}
    \centering
    \begin{tabular}{lcccc}
        \toprule
        & tolokers & squirrel & twitch-gamers\\
        \# nodes & 11758 &  5201 & 168114\\
        \# edges & 519000 & 217073 &  6797557\\
        avg. degree & 88.28 & 41.71 & 40.43\\
        relative Frob. & 	0.69 & 0.39 & 0.8\\ 
        \midrule
        MLP & 72.95 $\pm$ 1.06 & 28.77 $\pm$ 1.56 & 60.92 $\pm$ 0.07\\
        GCN & \textcolor{gray}{83.64} $\pm$ 0.67 & 53.43 $\pm$ 2.01 & 62.18 $\pm$ 0.26\\
        GAT & \textcolor{blue}{83.70} $\pm$ 0.47 & 40.72 $\pm$ 1.55 & 59.89 $\pm$ 4.12\\
        GT & 83.23 $\pm$ 0.64 & - & - \\
        $\text{H}_2$GCN & 73.35 $\pm$ 1.01 & 35.70 $\pm$ 1.00 & OOM\\
        GPR-GNN & 72.94 $\pm$ 0.97 & 34.63 $\pm$ 1.22 & 61.89 $\pm$ 0.29\\
        LINKX & - & \textcolor{blue}{61.81} $\pm$ 1.80 & \textcolor{gray}{66.06} $\pm$ 0.19\\
        GloGNN & 73.39 $\pm$ 1.17 & 57.88 $\pm$ 1.76 & \textcolor{red}{66.34} $\pm$ 0.29\\
        \midrule
        \icgnn & \textcolor{red}{83.73} $\pm$ 0.78 & \textcolor{gray}{58.48} $\pm$ 1.77 & 65.27 $\pm$ 0.82\\
        \icgnnu & 83.51 $\pm$ 0.52 & \textcolor{red}{64.02} $\pm$ 1.67 & \textcolor{blue}{66.08} $\pm$ 0.74\\
        \bottomrule
    \end{tabular}
\end{table}

\subsection{Comparison with graph coarsening methods over large graphs}
\label{app:coarsen}

\begin{table}
    \label{tab:coarsen}
    \centering
    \caption{Comparison with graph coarsening methods over large graphs. Top three models are colored by \red{First}, \blue{Second}, \gray{Third}.}
    \begin{tabular}{lcccccc}
        \toprule
        & Reddit & Flickr \\
        \# nodes & 232965  & 89250 \\
        \# edges & 11606919  & 899756 \\
        avg. degree & 49.82 & 10.08 \\
        \midrule
        Coarsening & 47.4 \stdfont{$\pm$ 0.9} & 44.6 \stdfont{$\pm$ 0.1}\\
        Random & 66.3 \stdfont{$\pm$ 1.9} & 44.6 \stdfont{$\pm$ 0.2}\\
        Herding & 71.0 \stdfont{$\pm$ 1.6} & 44.4 \stdfont{$\pm$ 0.6}\\
        K-Center & 58.5 \stdfont{$\pm$ 2.1} & 44.1 \stdfont{$\pm$ 0.4}\\
        One-Step & - &  45.4 \stdfont{$\pm$ 0.3}\\
        \midrule
        DC-Graph & \blue{90.5} \stdfont{$\pm$ 1.2} & 45.8 \stdfont{$\pm$ 0.1}\\
        GCOND & \gray{90.1} \stdfont{$\pm$ 0.5} & \gray{47.1} \stdfont{$\pm$ 0.1}\\
        SFGC & 89.9 \stdfont{$\pm$ 0.4} & \gray{47.1} \stdfont{$\pm$ 0.1}\\
        GC-SNTK & - & 46.6 \stdfont{$\pm$ 0.2} \\
        \midrule
        \icgnn & 89.6 \stdfont{$\pm$ 1.2} & \blue{50.4} \stdfont{$\pm$ 0.1}\\
        \icgnnu & \red{93.6} \stdfont{$\pm$ 1.2} & \red{52.7} \stdfont{$\pm$ 0.1}\\
        \bottomrule
    \end{tabular}
\end{table}

\paragraph{Setup.} We evaluate \icgnn and \icgnnu on the large graph datasets Flickr \cite{zeng2019graphsaint} and Reddit \cite{hamilton2017inductive}. We follow the splits of \cite{zheng2024structure} and report the accuracy and standard deviation.

The standard graph coarsening baselines Coarsening \cite{huang2021scaling}, Random \cite{huang2021scaling}, Herding \cite{welling2009herding}, K-Center \cite{sener2017active}, One-Step \cite{jin2022condensing} and the graph condensation baselines DC-Graph \cite{zhao2020dataset}, GCOND \cite{jin2021graph}, SFGC \cite{zheng2024structure} and GC-SNTK \cite{wang2024fast} are taken from \cite{zheng2024structure,wang2024fast}. We use the Adam optimizaer and report all hyperparameters in \Cref{hyperparameters}.

\paragraph{Results.} \icgnns present state-of-the-art performance in \cref{tab:coarsen} when compared to a vareity of both graph coarsening methods and graph condensation methods, further solidifying its effectiveness. 

\subsection{Node classification on large graphs using Subgraph SGD}
\label{app:subgraph-large}

\begin{wraptable}[16]{r}{0.4\textwidth}
    \centering
    \vspace{-2em}
    \caption{Results on Flickr using 1\% node sampling. Top three models are colored by \red{First}, \blue{Second}, \gray{Third}.}
    \label{tab:flickr}
  \begin{tabular}{lc}
    \toprule
    Model & Accuracy\\
    \midrule
    Coarsening          & 44.6 \stdfont{$\pm$ 0.1}\\
    Random              & 44.6 \stdfont{$\pm$ 0.2}\\
    Herding             & 44.4 \stdfont{$\pm$ 0.6}\\
    K-Center            & 44.1 \stdfont{$\pm$ 0.4}\\
    One-Step            & 45.4 \stdfont{$\pm$ 0.3}\\
    \midrule
    DC-Graph            & 45.8 \stdfont{$\pm$ 0.1}\\
    GCOND               & \gray{47.1} \stdfont{$\pm$ 0.1}\\
    SFGC                & \gray{47.1} \stdfont{$\pm$ 0.1}\\
    GC-SNTK             & 46.5 \stdfont{$\pm$ 0.2}\\
    \midrule
    \icgnn              & \blue{50.4} \stdfont{$\pm$ 0.1}\\
    \icgnnu             & \red{50.8} \stdfont{$\pm$ 0.1}\\
    \bottomrule
  \end{tabular}
\end{wraptable}
\paragraph{Setup.} We evaluate \icgnn and \icgnnu on the large graph Flickr \cite{zeng2019graphsaint}. We follow the splits of \cite{zheng2024structure} and report the accuracy and standard deviation. We set the ratio of condensation $r=\frac{M}{N}$ to $1\%$, where $N$ and $M$ are the number of nodes in the graph and the number of sampled nodes.

The graph coarsening baselines Coarsening \cite{huang2021scaling}, Random \cite{huang2021scaling}, Herding \cite{welling2009herding}, K-Center \cite{sener2017active}, One-Step \cite{jin2022condensing} and the graph condensation baselines DC-Graph \cite{zhao2020dataset}, GCOND \cite{jin2021graph}, SFGC \cite{zheng2024structure} and GC-SNTK \cite{wang2024fast} are taken from \cite{zheng2024structure,wang2024fast}. We use the Adam optimizaer and report all hyperparameters in \Cref{hyperparameters}.

\paragraph{Results.} \cref{tab:flickr} shows that \icgnnu with subgraph ICGm with 1\% sampling rate, achieves competitive performance with coarsening and condensation methods that operate on the full graph in memory.

\subsection{Ablation study over the number of communities}
\label{app:communities}

\paragraph{Setup.} We evaluate \icgnn and \icgnnu on non-sparse graphs tolokers \citep{platonov2023critical} and squirrel \citep{rozemberczki2021multi}. We follow the 10 data splits of \cite{platonov2023critical} for tolokers and \cite{pei2020geom,li2022finding} for squirrel. We report the the mean ROC AUC and standard deviation for tolokers and the accuracy and standard deviation for squirrel.

\paragraph{Results.} \cref{fig:num_communities} shows several key trends across both datasets. First, as anticipated, when a very small number of communities is used, the ICG fails to approximate the graph accurately, resulting in degraded performance. Second, the optimal performance is observed when using a relatively small number of communities, specifically 25 for tolokers and 75 for squirrel. Lastly, using a large number of communities does not appear to negatively impact performance.

\begin{figure}[t!]
    \centering
    \begin{subfigure}{0.45\linewidth}
        \centering
        \includegraphics[width=\linewidth]{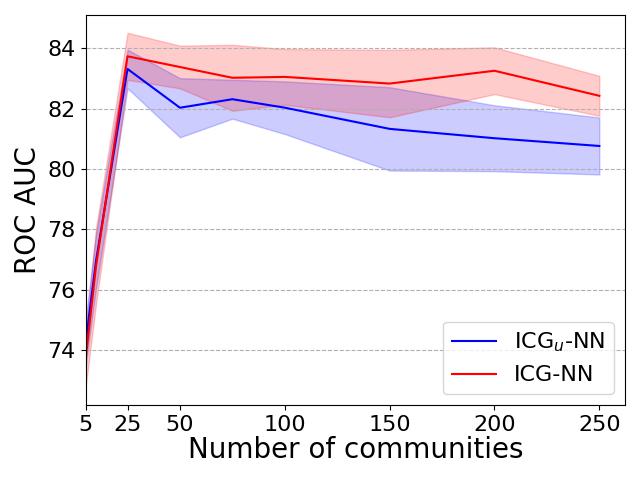}
    \end{subfigure}
    \hfill
    \begin{subfigure}{0.45\linewidth}
        \centering
        \includegraphics[width=\linewidth]{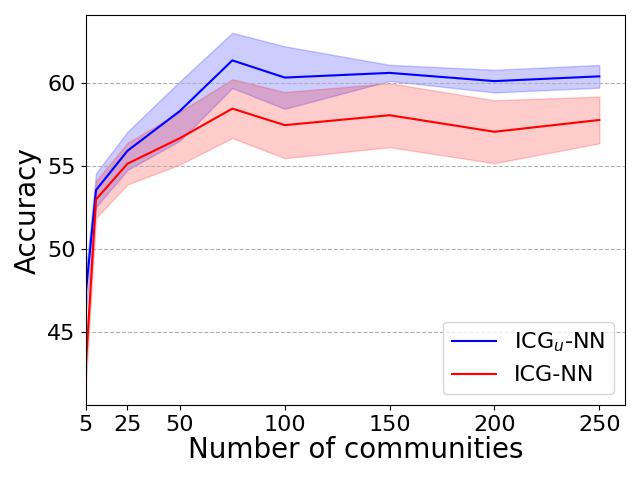}
    \end{subfigure}
    \hfill
    \caption{Test ROC AUC of tolokers (\textbf{left}) and test accuracy of squirrel (\textbf{right}) as a function of the number of communities.}
    \label{fig:num_communities}
\end{figure}

\subsection{The cut-norm and frobenius norm}
\label{app:norms}
In this experiment we aim to validate the bound presented in \cref{thm:reg_sprase}.

\paragraph{Setup.} We evaluate our ICG approximation on non-sparse graphs tolokers \citep{platonov2023critical} and squirrel \citep{rozemberczki2021multi}. We vary the number of communities used by the approximation and report the relative cut-norm (estimated by \cite{CutNP1}) as a function of the relative frobenius norm. 

\paragraph{Results.} As expected, both the cut-norm and frobenius norm are decreasing the more communities are used in \cref{fig:FrobCutError}. Furthermore, the cut-norm is strictly smaller than the frobenius norm at each point, validating \cref{thm:reg_sprase}. 

\begin{figure}[t!]
    \centering
    \begin{subfigure}{0.45\linewidth}
        \centering
        \includegraphics[width=\linewidth]{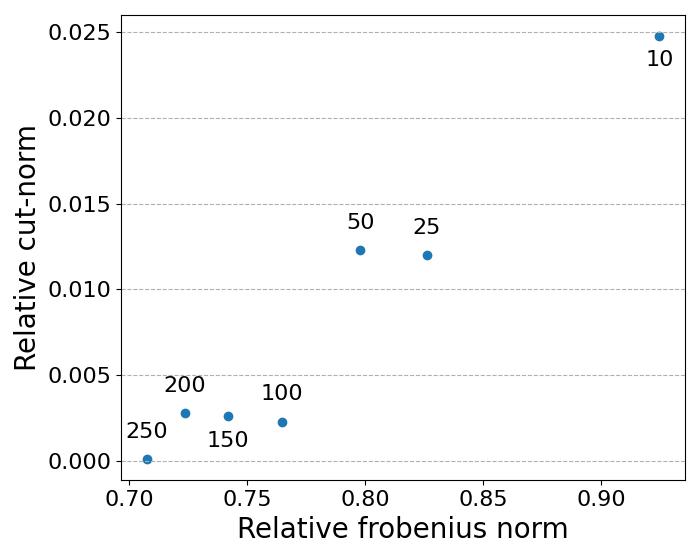}
    \end{subfigure}
    \hfill
    \begin{subfigure}{0.45\linewidth}
        \centering
        \includegraphics[width=\linewidth]{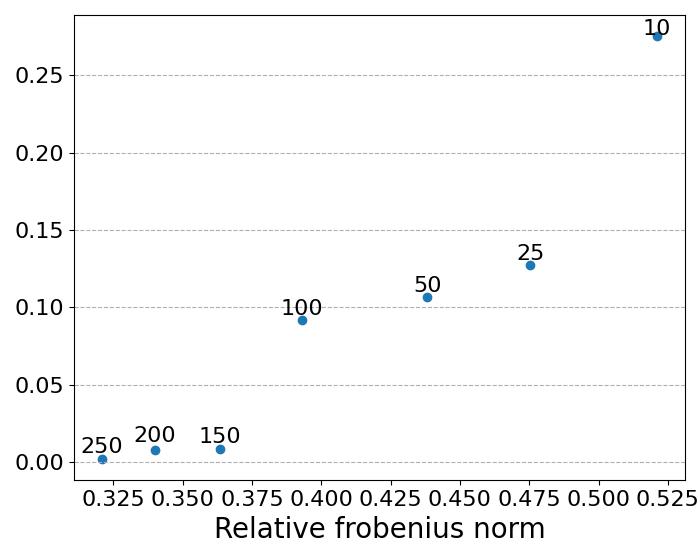}
    \end{subfigure}
    \hfill
    \caption{Cut-norm as a function of Frobenius norm on the tolokers (\textbf{left}) and squirrel (\textbf{right}) datasets. The number of communities used is indicated next to each point.}
    \label{fig:FrobCutError}
\end{figure}

\subsection{Initialization}
\label{app:init}
We repeated the node classification experiments presented in \cref{tab:node_classification}, using random initialization when optimizing the ICG, and compared the results to those obtained with eigenvector initialization.

\paragraph{Setup.} We evaluate \icgnn and \icgnnu on the non-sparse graphs tolokers \citep{platonov2023critical}, squirrel \citep{rozemberczki2021multi} and twitch-gamers \citep{lim2021large}. We follow the 10 data splits of \cite{platonov2023critical} for tolokers, the 10 data splits of \cite{pei2020geom,li2022finding} for squirrel and the 5 data splits of \cite{lim2021large,li2022finding} for twitch-gamers. We report the mean ROC AUC and standard deviation for tolokers and the accuracy and standard deviation for squirrel and twitch-gamers.

\begin{table}
    \caption{Time until convergence in seconds on large graphs.}
    \label{tab:initialization}
    \centering
    \begin{tabular}{lccc}
        \toprule
         & tolokers & squirrel & twitch-gamers\\
        \# nodes & 11758 &  5201 & 168114\\
        \# edges & 519000 & 217073 &  6797557\\
        avg. degree & 88.28 & 41.71 & 40.43\\
        \midrule
        random init. & 83.30 \stdfont{$\pm$ 0.92} & 62.03 \stdfont{$\pm$ 0.98} & 64.73 \stdfont{$\pm$ 0.44}\\
        eigenvector init. & 83.31 \stdfont{$\pm$ 0.64} & 62.10 \stdfont{$\pm$ 1.67} & 66.10 \stdfont{$\pm$ 0.42}\\
        \bottomrule
    \end{tabular}
\end{table}

\paragraph{Results.} \cref{tab:initialization} indicates that eigenvector initialization generally outperforms random initialization across all datasets. While the improvements are minimal in some cases, such as with the tolokers and squirrel datasets, they are more pronounced in the twitch-gamers dataset.

Additionally, eigenvector initialization offers a practical advantage in terms of training efficiency. On average, achieving the same loss value requires 5\% more time when using random initialization compared to eigenvector initialization. This efficiency gain further supports the utility of the proposed initialization method.

\subsection{Run-time analysis}
\label{app:runtime}
In this subsection, we compare the forward pass run times of our ICG approximation process, signal processing pipeline (\icgnnu) and the GCN \cite{Kipf16} architecture. We sample graphs of sizes up to 7,000 from a dense and a sparse \emph{ Erd\H{o}s-R{\'e}nyi} distribution $\ER(n, p(n)=0.5)$ and $\ER(n, p(n)=\frac{50}{n})$, correspondingly. Node features are independently drawn from $U[0, 1]$ and the initial feature dimension is $128$. \icgnnu and GCN  use a hidden  dimension of $128$, $3$ layers and an output dimension of $5$.

\begin{figure}[ht]
    \centering
    \begin{subfigure}{0.45\linewidth}
        \centering
        \includegraphics[width=\linewidth]{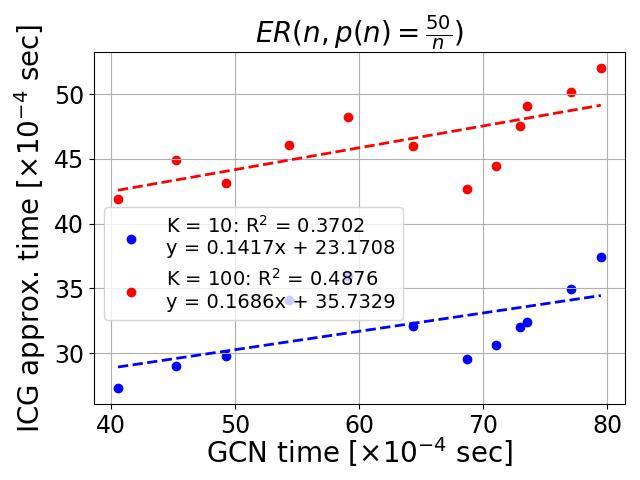}
    \end{subfigure}
    \hfill
    \begin{subfigure}{0.45\linewidth}
        \centering
        \includegraphics[width=\linewidth]{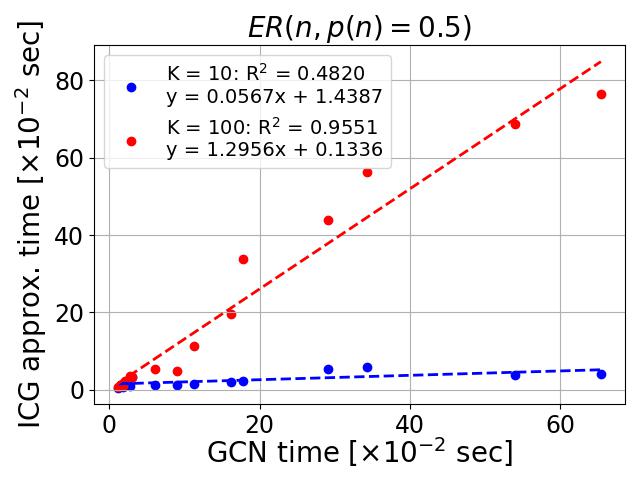}
    \end{subfigure}
    \caption{Empirical runtimes: K-ICG approximation process as a function of GCN forward pass duration on the dense (\textbf{left}) and sparse (\textbf{right}) \emph{Erd\H{o}s-R{\'e}nyi} distribution $\ER(n, p(n)=0.5)$ and $\ER(n, p(n)=\frac{50}{n})$ for K=10, 100.}
    \label{fig:runtime_icg_approx}
\end{figure}
\begin{figure}
    \centering
    \begin{subfigure}{0.45\linewidth}
        \centering
        \includegraphics[width=\linewidth]{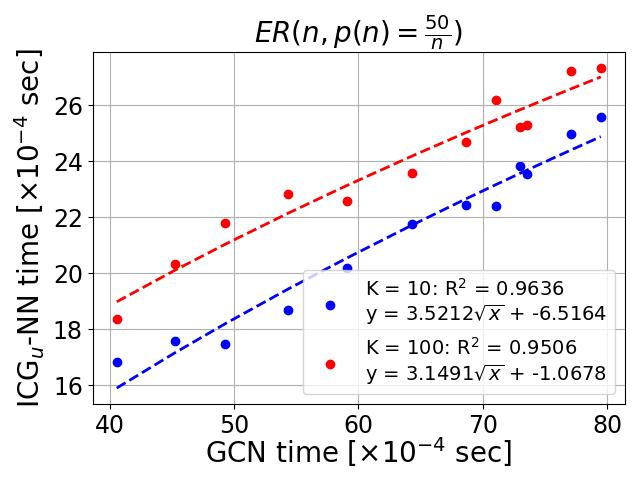}
    \end{subfigure}
    \hfill
    \begin{subfigure}{0.45\linewidth}
        \centering
        \includegraphics[width=\linewidth]{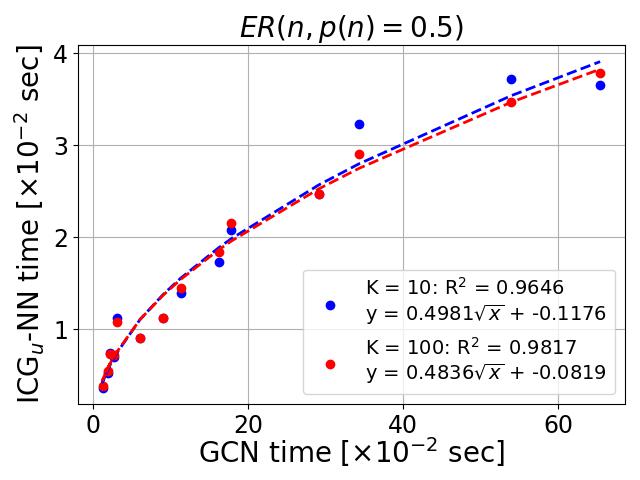}
    \end{subfigure}
    \label{fig:runtime_nn}
    \caption{Empirical runtimes: K-\icgnnu as a function of GCN forward pass duration on the dense (\textbf{left}) and sparse (\textbf{right}) \emph{Erd\H{o}s-R{\'e}nyi} distribution $\ER(n, p(n)=0.5)$ and $\ER(n, p(n)=\frac{50}{n})$ for K=10, 100.}
\end{figure}

\paragraph{Results.} \Cref{fig:runtime_icg_approx} demonstrates a strong linear relationship between the runtime of our ICG approximation and that of the message-passing neural network GCN for both sparse and dense graphs. Unlike GCN, our ICG approximation is versatile for multiple tasks and requires minimal hyperparameter tuning, which reduces its overall time complexity. Additionally, \cref{fig:runtime_nn} reveals a strong square root relationship between the runtime of \icgnnu and the runtime of GCN for both sparse and dense graphs. This aligns with our expectations, as the time complexity of GCN is $\gO(E)$, while the time complexity of \icgnns is $\gO(N)$, highlighting the computational advantage of using ICGs over message-passing neural networks.

\subsection{Time until convergence}
\label{app:convergence}
\paragraph{Setup.} We measured the average time until convergence in seconds for the GCN and \icgnnu architectures on the non-sparse graphs tolokers \citep{platonov2023critical}, squirrel \citep{rozemberczki2021multi} and twitch-gamers \citep{lim2021large}. We follow the 10 data splits of \cite{platonov2023critical} for tolokers, the 10 data splits of \cite{pei2020geom,li2022finding} for squirrel and the 5 data splits of \cite{lim2021large,li2022finding} for twitch-gamers. We ended the training after 50 epochs in which the validation metric did not improve. We set the hyper-parameters with best validation results.

\begin{table}
\caption{Time until convergence in seconds on large graphs.}
\label{tab:time_until_convergence}
  \centering
  \begin{tabular}{lccc}
    \toprule
     & tolokers & squirrel & twitch-gamers\\
    \midrule
    GCN      & 510 & 3790 & 1220\\
    \icgnnu  & 33 & 225 & 49\\
    \bottomrule
  \end{tabular}
\end{table}

\paragraph{Results.} \cref{tab:time_until_convergence} shows that \icgnnu consistently converges faster than GCN across all three benchmarks. Specifically,  converges approximately 15 times faster on the tolokers dataset, 17 times faster on the squirrel dataset, and 25 times faster on the twitch-gamers dataset compared to GCN, indicating a significant improvement in efficiency.

\subsection{Memory allocation analysis}
\label{app:memory}
In this subsection, we compare the memory allocated during the ICG approximation process and during a forward pass of the GCN \cite{Kipf16} architecture. We sample graphs of sizes up to 2,000 from a dense \emph{ Erd\H{o}s-R{\'e}nyi} distribution $\ER(n, p(n)=0.5)$. Node features are independently drawn from $U[0, 1]$ and the initial feature dimension is $128$. \icgnnu and GCN use a hidden  dimension of $128$, $3$ layers and an output dimension of $5$.

\begin{figure}[ht]
    \centering
    \includegraphics[width=0.8\linewidth]{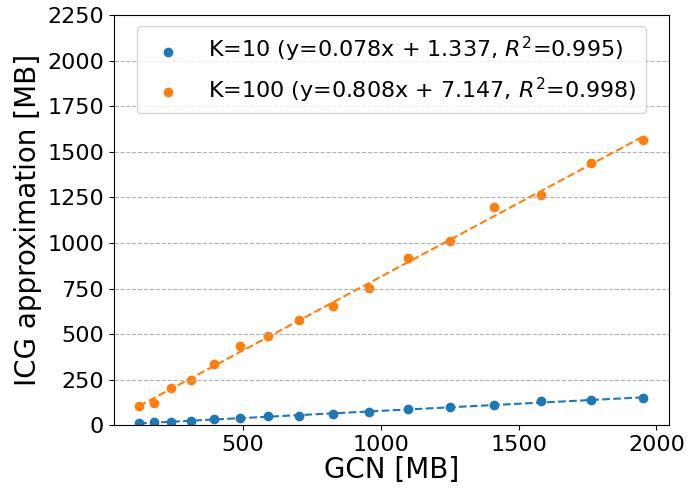}
    \label{fig:memory_icg_approx}
    \caption{Memory allocated for K-\icg approximation (K=10,100) as a function of the memory allocated for GCN on graphs $G \sim \ER(n, p(n)=0.5)$.}
\end{figure}

\paragraph{Results.} \cref{fig:memory_icg_approx} demonstrates a strong linear relationship between the the memory allocated during the ICG approximation process and during a forward pass of the message-passing neural network GCN. This aligns with our expectations: the memory allocation for the GCN architecture is $\gO(Ed)$, and of ICG-NNs $\gO(EK)$, where $d$ is the feature dimension and $K$ is the number of communities used for the ICG approximation.

\section{Choice of datasets}

We note that it is hard to find a dataset of large and non-sparse graphs. Currently, the community only focuses on sparse graphs in the context of large graphs when developing methods. As a result, researchers only publish sparse large datasets. We note that large graphs in real life are often not so sparse. For example, social networks can have an average degree of up to 1000, and transaction networks, user web activity monitoring networks, web traffic networks and Email networks can also be rather dense. Such networks, typically held by large companies, are not available to the public. Yet, knowing how to process large not-too-sparse graphs is of practical importance, and we believe that such graphs deserve a focus. As a first paper in this direction, it is hard to be competitive on datasets designed for sparse graphs. Still, our method performs remarkably well on large sparse graphs. We hope to see large dense real-life graphs open to the public in the future.

\section{Additional related work}
\label{Additional related work}
\paragraph{Latent graph GNNs.} One way to interpret our \icg-NNs are as latent graph methods, where the \icg approximation of the given graph $G$ is seen as a latent low complexity ``clean version'' of the observed graph $G$. 
 
In latent graph methods, the graph on which the GNN is applied is inferred from the given data graph, or from some raw non-graph data.  
Two motivations for latent graph methods are (1) the given graph is noisy, and the latent graph is its denoised version, or (2) even if the given graph is “accurate”, the inferred graph improves the efficiency of message passing.  Our \icg GNN can be cast in category (2), since we use the latent \icg model to define an efficient ``message passing'' scheme, which has linear run-time with respect to the number of nodes.

Some examples of latent graph methods are papers like \citep{pmlr-v80-kipf18a,Wang_latent,LatentDisease2020,deepclustering2020,FatemiSLAPS2021} which learn a latent graph from non-graph raw data. Papers like \cite{Ma_Ginference2019,adverserialLatentG2019,Kazi2020DGM,GDC2019} treat the given graph as noisy or partially observed, and learn the ``clean'' graph.
Papers like \cite{VeliPGN2020,topping2022curvature} modify the given graph to improve the efficiency of message passing (e.g., based on the notion of oversquashing \cite{alonOversquashing2021}), and \cite{paolino2023unveiling} modifies the weights of the given graph if the nodes were sampled non-uniformly from some latent geometric graph. 
Additionaly, attention based GNN methods, which dynamically choose edge weights \cite{velivckovic2017graph,AttentionalClustering2019,vaswani2017attention,graphTrans2019}, can be interpreted as latent graph approaches.  

\paragraph{GNNs with local pooling.} Local graph pooling methods, e.g., \cite{diffpool2018,BianchiSpectralPooling2020}, construct a sequence of coarsened versions of the given graph, each time collapsing small sets of neighboring nodes into a single ``super-node.'' At each local pooling layer, the signal is projected from the finer graph to the coarser graph. This is related to our ICG-NN approach, where the signal is projected upon the communities. As opposed to local graph pooling methods, our communities intersect and have large-scale supports. Moreover, our method does not lose the granular/high-frequency content of the signal when projecting upon the  communities, as we also have node-wise operations at the fine level. In pooling approaches, the graph is partitioned into disjoint, or slightly overlapping communities, each community collapses to a node, and a standard message passing scheme is applied on this coarse graph. In \icgnns, each operation on the community features has a global receptive field in the graph. Moreover, ICG-NNs are not of message-passing type: the (flattened) community feature vector $\mF$ is symmetryless and is operated upon by a general MLP in the \icgnn, while MPNNs operate by the same function on all edges.

In terms of computational efficiency, GNNs with local pooling do not asymptotically improve run-time, as the first layer operates on the full graph, while our method operates solely on the efficient data structure.

\paragraph{Variational graph autoencoders.} Our \icg construction is related to statistical network analysis with latent position graph models \cite{athreya2021estimation}. Indeed, the edge weight between any pair of nodes $n,m$ in the ICG is the diagonal Bilinear product between their affiliation vectors, namely, $\sum_j r_j \boldsymbol{\mathbbm{1}}_{\mathcal{U}_j}(n)\boldsymbol{\mathbbm{1}}_{\mathcal{U}_j}(m)$, which is similar to the inner produce decoder in variational graph autoencoders \citep{Kipf2016VariationalGA,athreya2021estimation}. 

\paragraph{Graph coarsening, condensation and summarization.} Graph coarsening methods \cite{fey2020hierarchical,huang2021scaling}, graph condensation methods \cite{jin2021graph} and graph summarization methods \cite{zhou2023provable} replace the original graph by one which has fewer nodes, with different features and a different topological structure, while sometimes preserving certain properties, such as the degree distribution \cite{zhou2023provable}. In contrast, \icgnns provably approximate the graph. Furthermore, graph coarsening methods usually rely on locality by apply message-passing on the coarsened graph. Condensation methods require $\gO(EM)$ operations to construct a smaller graph \cite{jin2021graph,zheng2024structure,wang2024fast}, where $E$ is the number of edges of the original graph and $M$ is the number of nodes of the condensed graph. Conversely, ICGs estimate the ICG with $\gO(E)$ operations. Graph coarsening methods also process representations on an iteratively coarsened graph, while \icgnn also process the fine node information at every layer. Lastly, \icgnns offer a subgraph sampling approach when the original graph cannot fit in memory. In contrast, the aforementioned methods lack a strategy for managing smaller data structures when computing the compressed graph.

\section{Dataset statistics}
The statistics of the real-world node-classification, spatio-temporal and graph coarsening benchmarks used can be found in \cref{tab:node-classification-stats,tab:spatio-temporal-stats,tab:coarsen-stats}.

\begin{table}[h!]
\caption{Statistics of the node classification benchmarks.}
  \centering
  \begin{tabular}{l@{\hspace{6pt}}c@{\hspace{6pt}}c@{\hspace{6pt}}c@{\hspace{6pt}}}
    \toprule
      & tolokers & squirrel & twitch-gamers\\
      \midrule
      \# nodes (N) & 11758 & 5201 &  168114\\
      \# edges (E) & 519000 & 217073 &  6797557\\
      avg. degree ($\frac{E}{N}$)& 88.28 & 41.71 & 40.32\\
      \# node features & 10 & 2089 & 7\\
      \# classes & 2 & 5 & 2\\
      metrics & AUC-ROC & ACC & ACC \\
    \bottomrule
  \end{tabular}
  \label{tab:node-classification-stats}
\end{table}

\begin{table}[h!]
\caption{Statistics of the spatio-temporal benchmarks.}
  \centering
  \begin{tabular}{l@{\hspace{6pt}}c@{\hspace{6pt}}c@{\hspace{6pt}}}
    \toprule
      & METR-LA & PEMS-BAY\\
      \midrule
      \# nodes (N) & 207 & 325\\
      \# edges (E) & 1515 & 2369 \\
      avg. degree ($\frac{E}{N}$) & 7.32 & 7.29\\
      \# node features & 34272 & 52128\\
    \bottomrule
  \end{tabular}
  \label{tab:spatio-temporal-stats}
\end{table}

\begin{table}[h!]
\caption{Statistics for graph coarsening benchmarks.}
    \centering
    \begin{tabular}{l@{\hspace{6pt}}c@{\hspace{6pt}}c@{\hspace{6pt}}}
        \toprule
        & Reddit & Flickr\\
        \midrule
        \# nodes (N) & 232965 & 89250\\
        \# edges (E) & 114615892 & 899756\\
        avg. degree & 491.99 & 10.08\\
        \# node features  & 602 & 500\\
        \# classes  & 41 & 7\\
        \bottomrule
    \end{tabular}
    \label{tab:coarsen-stats}
\end{table}

\section{Hyperparameters}
\label{hyperparameters}
In \cref{tab:node-classification-hps,tab:spatio-temporal-hps,tab:coarsen-hps}, we report the hyper-parameters used in our real-world node-classification, spatio-temporal and graph coarsening benchmarks.

\begin{table}[h!]
\caption{Hyper-parameters used for the node classification benchmarks.}
  \centering
  \begin{tabular}{l@{\hspace{6pt}}c@{\hspace{6pt}}c@{\hspace{6pt}}c@{\hspace{6pt}}}
    \toprule
      & tolokers & squirrel & twitch-gamers\\
      \midrule
      \# communities & 25 & 75 & 50\\
      Encoded dim & - & 128 & -\\
      $\lambda$ & 1 & 1 & 1\\
      Approx. lr & 0.01, 0.05 & 0.01, 0.05 & 0.01, 0.05\\
      Approx. epochs & 10000 & 10000 & 10000\\
      \midrule
      \# layers & 3,4,5 & 3,4,5 & 3,4,5,6\\
      hidden dim & 32, 64 & 64, 128 & 64, 128 \\
      dropout & 0, 0.2 & 0, 0.2 & 0, 0.2\\
      residual connection & - & \checkmark & \checkmark\\
      Fit lr & 0.001, 0.003, 0.005 & 0.001, 0.003, 0.005 & 0.001, 0.003, 0.005\\
      Fit epochs & 3000 & 3000 & 3000\\
    \bottomrule
  \end{tabular}
  \label{tab:node-classification-hps}
\end{table}

\begin{table}[h!]
\caption{Hyper-parameters used for the spatio-temporal benchmarks.}
  \centering
  \begin{tabular}{l@{\hspace{6pt}}c@{\hspace{6pt}}c@{\hspace{6pt}}}
    \toprule
      & METR-LA & PEMS-BAY\\
      \midrule
      \# communities & 50 & 100\\
      Encoded dim & - & -\\
      $\lambda$ & 0 & 0\\
      Approx. lr & 0.01, 0.05, 0.1 & 0.01, 0.05, 0.1\\
      Approx. epochs & 10000 & 10000\\
      \midrule
      \# layers & 3,4,5 & 3,4,5, 6\\
      hidden dim & 32, 64, 128 & 32, 64, 128\\
      Fit lr & 0.001, 0.003 & 0.001, 0.003\\
      Fit epochs & 300 & 300\\
    \bottomrule
  \end{tabular}
  \label{tab:spatio-temporal-hps}
\end{table}

\begin{table}[h!]
    \caption{Hyper-parameters used for graph coarsening benchmarks.}
    \centering
    \begin{tabular}{l@{\hspace{6pt}}c@{\hspace{6pt}}c@{\hspace{6pt}}}
    \toprule
      & Reddit & Flickr\\
      \midrule
      \# communities & 50, 600 & 50, 500 \\
      Encoded dim & - & - \\
      $\lambda$ & 0 & 0,1 \\
      Approx. lr & 0.05 & 0.01, 0.05 \\
      Approx. epochs & 10000 & 10000 \\
      \midrule
      \# layers & 2, 3 & 2, 3, 4 \\
      hidden dim & 128, 256 & 128, 256 \\
      dropout & 0, 0.2 & 0 \\
      residual connection & \checkmark & \checkmark \\
      Fit lr & 0.005, 0.01 & 0.001, 0.003, 0.005 \\
      Fit epochs & 3000 & 1500 \\
    \bottomrule
    \end{tabular}
    \label{tab:coarsen-hps}
\end{table}

For spatio-temporal dataset, we follow the methodology described in \cite{cini2024taming}, where the time of the day and the one-hot encoding of the day of the week are used as additional features.

\end{document}